    \definecolor{highlight}{rgb}{1,1,0}
    \newcolumntype{Y}{>{\centering\arraybackslash}X}
    \algrenewcommand\alglinenumber[1]{{\sffamily\color[rgb]{0.15, 0.35, 0.9}\footnotesize#1}}  
    \algrenewcommand{\algorithmiccomment}[1]{\hfill #1}  
    \algnewcommand\algorithmicparfor{\textbf{parfor}}
    \algnewcommand\algorithmicpardo{\textbf{do}}
    \algnewcommand\algorithmicendparfor{\textbf{end\ parfor}}
    \algnewcommand\algorithmicstreaming{\textbf{streaming}}
    \algnewcommand\algorithmicstreamingdo{\textbf{}}
    \algnewcommand\algorithmicendstreaming{\textbf{end\ streaming}}
    \xpatchcmd{\algorithmic}{\itemsep\z@}{\itemsep=2 pt}{}{}
    \setlist{nosep}  
\newtheorem{theorem}{Theorem}
\newtheorem{proposition}{Proposition}
    \newcommand{\smallotimes}{\mathbin{\mathpalette\make@small\otimes}}
    \newcommand{\make@small}[2]{%
    \vcenter{\hbox{%
        \scalebox{0.6}{$\m@th#1#2$}%
    }}%
    }
    \let\sv@thm\@thm
    \def\@thm{\vspace{0.75em}\let\indent\relax\sv@thm}
\theoremstyle{definition}
\newtheorem{defn}{Definition}
\def\ie{\textit{i.e.}\xspace}
\def\eg{\textit{e.g.}\xspace}
\def\R{\mathbb{R}}
\def\C{\mathcal{C}} 
\def\W{\mathcal{W}}
\def\N{\mathcal{N}} 
\def\d{\mathrm{d}}  
\def\ACF{\mathrm{ACF}}
\def\ACF{\mathrm{ACF}}
\newcommand{\SINN}[0]{SINN\xspace}
\newcommand{\REPO}[0]{\texttt{https://github.com/SINN-model/SINN}\xspace}
\title{\bfseries\sffamily Learning Stochastic Dynamics with Statistics-Informed Neural Network}
\author[1]{\normalsize Yuanran Zhu\thanks{\href{mailto:yzhu4@lbl.gov}{yzhu4@lbl.gov}}}
\author[1,3]{\normalsize Yu-Hang Tang\thanks{Corresponding author, \href{mailto:tang.maxin@gmail.com}{tang.maxin@gmail.com}, \href{mailto:tang@lbl.gov}{tang@lbl.gov}}}
\author[2]{\normalsize Changho Kim\thanks{\href{mailto:ckim103@ucmerced.edu}{ckim103@ucmerced.edu}}}
\affil[1]{\small Applied Mathematics and Computational Research Division, Lawrence Berkeley National Laboratory, Berkeley, CA 94720, USA}
\affil[2]{\small Department of Applied Mathematics, University of California, Merced, Merced, CA 95343, USA}
\affil[3]{\small NVIDIA Corporation, Santa Clara, CA 95051, USA}
\date{\today}
\begin{document}
\doublespacing

\maketitle

\paragraph{Abstract}

We introduce a machine-learning framework named statistics-informed neural network (SINN) for learning stochastic dynamics from data. This new architecture was theoretically inspired by a universal approximation theorem for stochastic systems, which we introduce in this paper, and the projection-operator formalism for stochastic modeling. We devise mechanisms for training the neural network model to reproduce the correct \emph{statistical} behavior of a target stochastic process. Numerical simulation results demonstrate that a well-trained SINN can reliably approximate both Markovian and non-Markovian stochastic dynamics. We demonstrate the applicability of SINN to coarse-graining problems and the modeling of transition dynamics. Furthermore, we show that the obtained reduced-order model can be trained on temporally coarse-grained data and hence is well suited for rare-event simulations.

\section{Introduction}
\label{sec:Intro}

The use of machine learning (ML) techniques to model stochastic processes and time series data has seen many contributions in the past years. Two common strategies are to utilize neural networks (NNs) to either \emph{solve} or \emph{learn} the associated differential equations.

In the `solver' strategy, the differential equation governing a dynamical system is assumed to be known a priori and an NN is used to construct a numerical solver for the equation.
A representative approach using the solver strategy is the physics-informed neural network (PINN)~\cite{chen2021learning,zhang2019quantifying}. In the PINN approach, an NN serves as a solver that transforms initial and boundary conditions into approximate solutions. Specifically, an NN is trained using a specialized loss function that is defined in terms of the underlying differential equation. Thus, minimizing this specialized loss function steers the solver towards producing outputs that conform to the target differential equation. More recent members within the PINN family include sparse physics-informed neural network (SPINN)~\cite{RAMABATHIRAN2021110600}, parareal physics-informed neural network (PPINN)~\cite{meng2020ppinn}, and so on~\cite{zhang2020learning}.

The `learning' strategy, on the other hand, aims to learn the hidden dynamics from data using an NN.
This strategy is adopted by the neural ordinary differential equation (NeuralODE) approach~\cite{chenNeuralOrdinaryDifferential2019}. In this approach, an NN is not used to directly solve an equation, but rather to compute the gradient of the state variables. To train such a network, an adjoint dynamic system and a reverse-time ODE solver are adopted to facilitate backpropagation. After updating the parameters of the augmented dynamics, solutions to the differential equations can be found to reconstruct and extrapolate the hidden dynamics. Recent developments on top of NeuralODE include NeuralSDE~\cite{liuNeuralSDEStabilizing2019,li2020scalable}, Neural Jump SDE~\cite{jiaNeuralJumpStochastic2020}, NeuralSPDE~\cite{salvi2021neural}, Neural Operators~\cite{kovachki2021neural}, infinitely deep Bayesian neural networks~\cite{xuInfinitelyDeepBayesian2021}, \textit{etc}.

Generally speaking, learning an unknown dynamics, which is an `inverse' problem, is more challenging than solving or simulating a known dynamics. This is particularly so for stochastic systems. The ever-growing abundance of data necessitates methods that can learn stochastic dynamics in a wide range of scientific disciplines such as, for example,  molecular dynamics~\cite{lei2016data,zhu2021effective,chu2017mori}, computational chemistry~\cite{tuckerman2010statistical}, and ecology~\cite{katz2011inferring}. Aside from the aforementioned work based on ML methodologies, recent progress in this direction includes the work of Lu et al. \cite{lu2021learning,lang2020learning,lu2021learning1} on the learning of interaction kernel of multi-particle systems with non-parametric methods, the kernel-based method \cite{gilani2021kernel,harlim2021machine} for learning discrete non-Markovian time series, and various approximations of the Mori-Zwanzig equation for the learning of non-Markovian stochastic dynamics \cite{lei2016data,zhu2021effective,zhu2020generalized,Li2015,hudson2020coarse,li2015incorporation} in molecular dynamics. Typically, these non-ML methods use sophisticated series expansion and regression techniques to learn and construct the desired stochastic model. 
Although being theoretically complete and numerically successful within their own applicability, such modeling processes are often complex and too difficult to be extended to high-dimensional or highly heterogeneous systems. The success of NN models in dealing with high-dimensional problems for complex systems inspired us to use it to build a simple data-driven, extensible framework for modeling stochastic dynamics, \ie the statistics-informed neural network (SINN). We now detail SINN's construction and the main rationale behind it, as well as its major differences from the existing ML frameworks.

To learn stochastic dynamics with NNs, we first consider how to use deterministic architectures such as the recurrent neural network (RNN) to generate randomness. Being built on top of a deterministic RNN architecture, SINN does not generate randomness \textit{per se}, but rather \emph{transforms} an input stream of discrete independent and identically distributed (i.i.d.) random numbers to produce realizations of stochastic trajectories. The modeling capacity of this simple construction can be examined using the universal approximation theorem (UAT) for RNNs. Specifically, by mimicking the proof of the UAT for a one-layer RNN for arbitrary deterministic, open dynamical systems \cite{schafer2006recurrent}, we obtain a similar result stating that a one-layer RNN with Gaussian white-noise input can universally approximate arbitrary stochastic systems. We then use the long short-term memory (LSTM)~\cite{hochreiter1997long} architecture as the building blocks for SINN in order to capture non-Markovian and memory effects that the underlying stochastic system may contain. Contrary to PINNs and the NeuralSDE, SINN is trained on the statistics, such as probability density function and time autocorrelation functions, of an {\em ensemble} of trajectories. Briefly speaking, SINN is mainly different from other ML frameworks in the following three aspects: (I) SINN is entirely equation-free --- training and modeling do not rely on knowledge about the underlying stochastic differential equation; (II) an RNN, instead of fully connected or convolutional layers, is used as the primary architecture of the model; (III) instead of seeking a {\em pathwise} approximation to the stochastic dynamics, SINN constructs simulated trajectories that converge to the example trajectories in the sense of probability {\em measure} and $n$-th {\em moment}. The computational and modeling merit brought by these three features will be elaborated later with numerical supports. 

\begin{figure}[t]
\centerline{\hspace{1cm}
Equation-based modeling\hspace{5.5cm}
Equation-free modeling
}
\vspace{0.3cm}
\centerline{
\includegraphics[height=5cm]{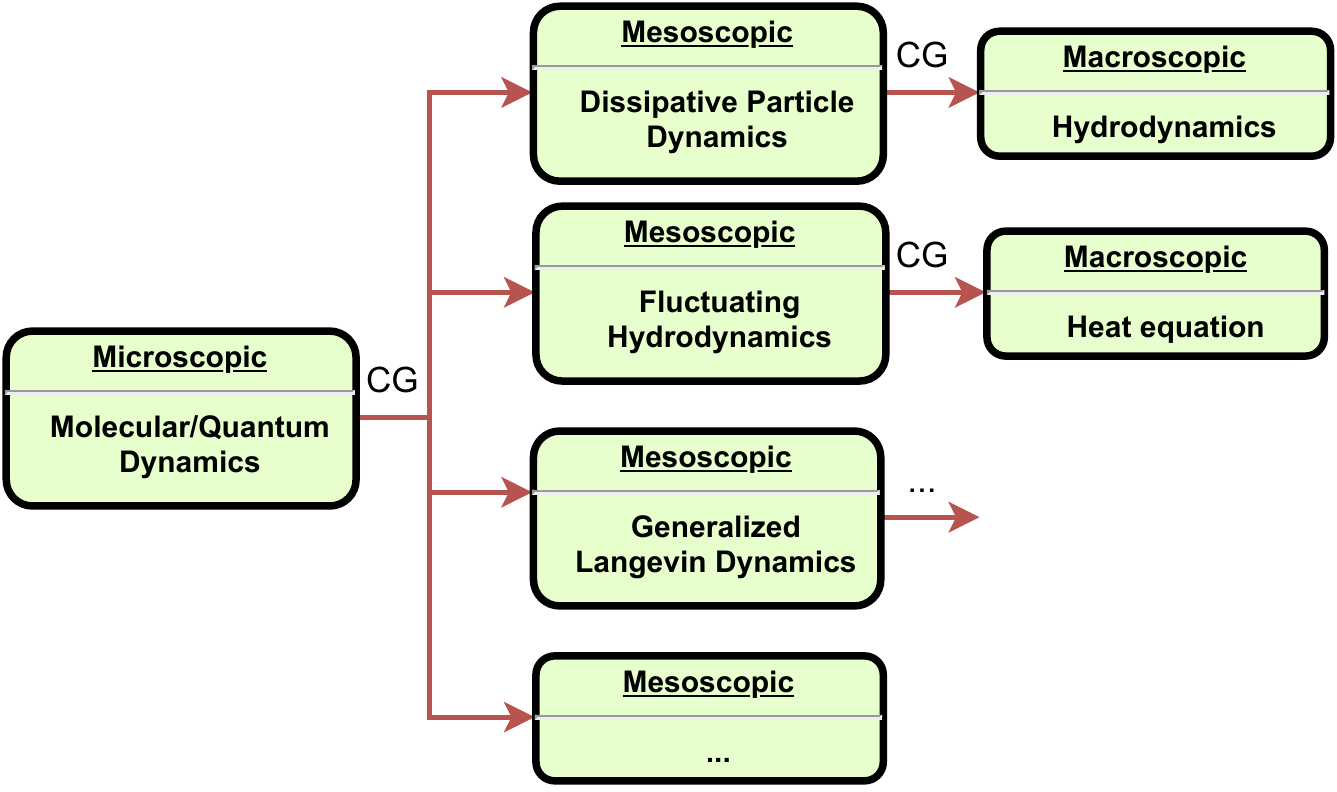}
\hspace{0.5cm}
\includegraphics[height=5cm]{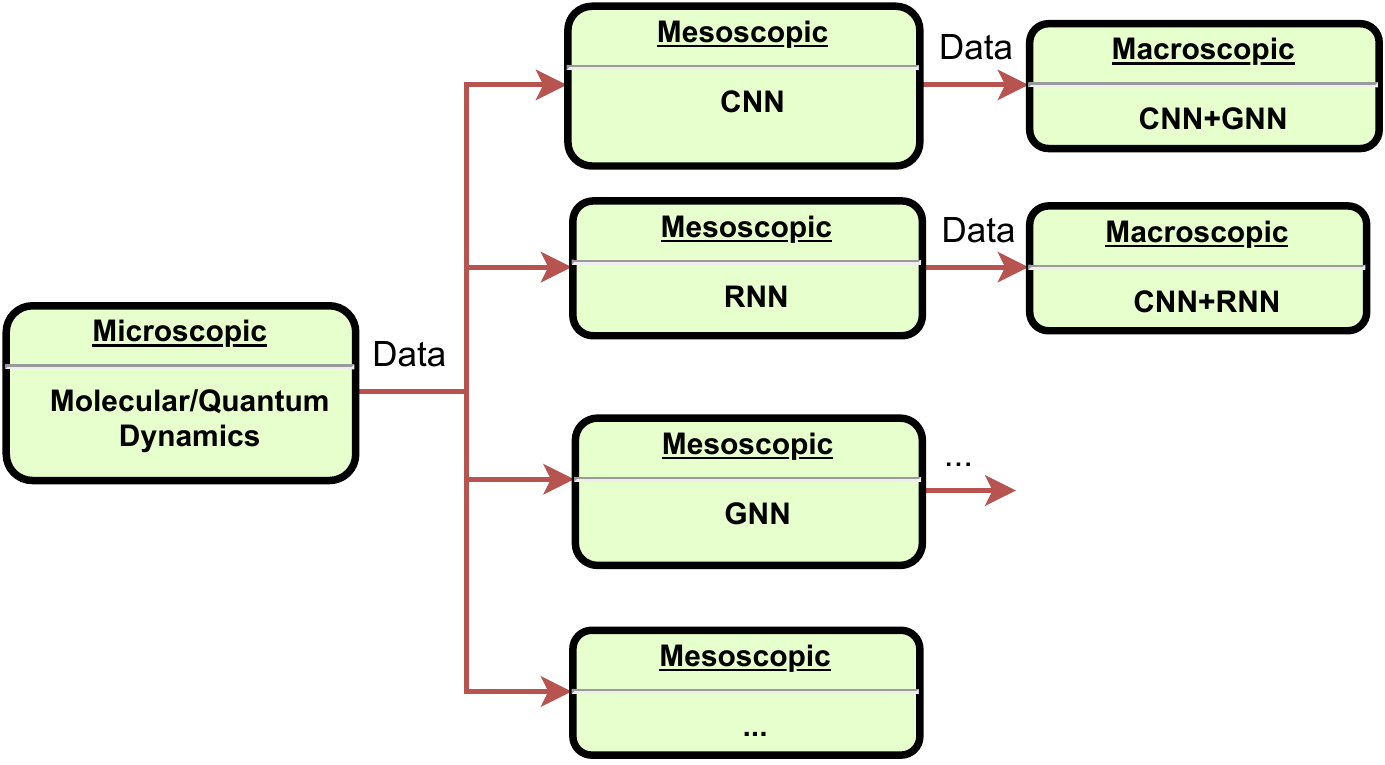}
}
\caption{Equation-based and equation-free modeling diagrams in different spatial-temporal scales. }
\label{fig:model_diagram} 
\end{figure}

In a greater picture, SINN can be categorized as an equation-free ML architecture for discovering the hidden dynamics of a physical system at different spatial-temporal scales. From a modeling perspective, the classical equation-based modeling approach, as shown in \Cref{fig:model_diagram}, normally starts with a microscopic model for the molecular/quantum dynamics. A certain coarse-grained procedure, such as the projection operator method or the mean-field approximation, is then applied to reduce the dimensionality of the system in order to generate mesoscopic and macroscopic models. At different scales, the established physical models have a general form: 
\begin{align}\label{eqn-based}
    \partial_t u=F(u,k,t),
\end{align}
where $u$ is the unknown (vector) variable or a (vector) field $u(x,t)$, $F(u,k,t)$ is the combination of functions, stochastic processes, and operators with modeling parameters $k$. In contrast to the equation-based approach, ML approaches rely on equation-free models based on the convolutional neural network (CNN)~\cite{albawi2017understanding,tian2020cross}, RNN~\cite{hughes2019wave}, or graph neural network (GNN)~\cite{scarselli2008graph} to model a physical process. These NNs need to be trained on data, such as the sample trajectories of an observable in the phase space. The general form of the modeling ansatz for the unknown function $u(x,t)$ relies on the multi-fold function composition:
\begin{align}\label{eqn_free-ansatz}
  u(x,t)=f_1(f_2(f_3(\cdots,k_3),k_2),k_1),
\end{align}
where $f_i$ is the activation function of $i$-th NN layer and $k_i$ represents the corresponding weights and parameters. ML models such as PINNs use the underlying equations to define the loss function while the modeling anstaz is of the form \eqref{eqn_free-ansatz}. Other examples such as the NeuralODE/SDE/SPDE use \eqref{eqn_free-ansatz} to model the derivatives function, \ie the right-hand side of equation \eqref{eqn-based}. In comparison, SINN is completely equation-free during the training and the modeling process.

Specifically for this paper, numerical experiments will be provided to demonstrate the capability of SINN in approximating Gaussian and non-Gaussian stochastic dynamics as well as its ability to capture the memory effect for non-Markovian systems. We also use SINN to discover surrogate models for transition dynamics that often appears in computational chemistry. Our SINN model can be trained using temporally coarse-grained trajectories. This feature makes it an efficient simulator for rare events. In addition to the application in physics, SINN provides a simple and flexible framework to model arbitrary stationary stochastic processes, hence is generally applicable in the areas of uncertainty quantification and time series modeling. Several simple test examples presented in \Cref{sec:test_cases} promisingly show its numerical advantages over established stochastic process modeling tools such as the transformed Karhunen-Lo\'eve or polynomial chaos expansion \cite{phoon2005simulation,sakamoto2002polynomial,zhu2020generalized,zhu2021effective}.  

This paper is organized as follows. In \Cref{sec:UAT_for_stochastic}, we review the established universal approximation theorem for a single-layer RNN model and show that there is a natural extension of this theorem for stochastic systems driven by Gaussian and non-Gaussian white noise. Inspired by this theoretical result, in \Cref{sec:SINN_architecture}, we propose a statistics-informed neural network (SINN) and introduce different types of loss functions. The training method of SINN is provided in \Cref{sec:training_method}. Three simple test examples are presented in \Cref{sec:test_cases} to demonstrate that SINN can well approximate both Gaussian and non-Gaussian stochastic dynamics. In \Cref{sec:applications}, we apply SINN to a coarse-graining problem and also use it as an effective rare-event simulator to evaluate transition rates. Several assessments of SINN as a tool for learning stochastic dynamics are also provided. Lastly, the main findings of this paper are summarized in \Cref{sec:conclusion}. The proofs for the main theorems are given in Appendix~\ref{app1:proof}.

\section{RNN as a Universal Approximator for Stochastic Dynamics}
\label{sec:UAT_for_stochastic}

Recurrent neural network (RNN) is a good candidate architecture for learning the unknown dynamics of a physical system since there is a natural correspondence between the recurrent internal structure of RNN and the time-recursive update rule that quantifies the dynamics. In this section, we discuss the universal approximation properties of RNN for stochastic processes, in particular, discrete stochastic processes corresponding to the numerical solutions of stochastic differential equations (SDEs). We consider a one-layer {\em deterministic} RNN with {\em stochastic} input and show that if the model is wide enough, \ie has a large number of hidden states, it can accurately approximate the finite-difference scheme of a time-homogeneous Markovian SDEs driven by Gaussian white noise.

We first review the universal approximation theorem (UAT) of RNN for deterministic dynamical systems established by Sch\"afer and Zimmermann in \cite{schafer2006recurrent}. To this end, let us consider a one-layer RNN model given by the update rule:
\begin{equation}\label{one_layer_RNN}
\begin{aligned}
    s_{t+1}&=\sigma(As_t+Bx_t-\theta),\\
    y_t&=Cs_t,
\end{aligned}
\end{equation}
where $s_t\in \R^{H}$ is the state vector of the RNN, $x_t\in \R^{I}$ is the input, $\sigma$ is the activation function of the network, and $y_t\in\R^{N}$ is the output. The modeling parameters of this simple, one-layer RNN are the weight matrices $A\in\R^{H\times H}$, $B\in \R^{H\times I}$, $C\in\R^{N\times H}$, and the bias vector $\theta\in \R^{H}$. An immediate observation is that this update rule is very similar to the structure of a discrete, open dynamical system of the form:
\begin{equation}\label{differential_sde}
\begin{aligned}
    s_{t+1}&=g(s_t,x_t),\\
    y_t&=h(s_t),
\end{aligned}
\end{equation}
where $s_t\in\R^{J}$, $x_t\in \R^{I}$, $y_t\in\R^{N}$, and the functions $g(\cdot):\R^{J}\times\R^{I}\rightarrow \R^{J}$, $h(\cdot):\R^{J}\rightarrow \R^{N}$. In fact, Sch\"afer and Zimmermann in \cite{schafer2006recurrent} proved that a one-layer RNN with update rule \eqref{one_layer_RNN} can universally approximate the dynamics of open dynamical system \eqref{differential_sde} with arbitrary accuracy. Their result can be restated as:

\begin{theorem}\label{UAT_deterministic_input}
(Sch\"afer \& Zimmermann \cite{schafer2006recurrent}, UAT for the deterministic RNN). Let $g(\cdot):\R^J\times\R^I\rightarrow \R^J$ be measurable and $h(\cdot):\R^J\rightarrow\R^N$ be continuous, the external input $x_t\in\R^{I}$, the inner state $s_t\in \R^J$, and the outputs $y_t\in \R^N$ $(t=1,\cdots, T)$. Then any discrete, open dynamical system of form \eqref{differential_sde} can be approximated by an RNN model of type \eqref{one_layer_RNN} to an arbitrary accuracy. 
\end{theorem}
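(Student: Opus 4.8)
The plan is to reduce the recurrent approximation problem to the classical feedforward universal approximation theorem and then to ``lift'' the resulting single-hidden-layer network into the recurrent form \eqref{one_layer_RNN}. First I would fix a compact set $K\subset\R^J\times\R^I$ containing all state--input pairs $(s_t,x_t)$ that arise along the finite horizon $t=1,\dots,T$; such a $K$ exists because the inputs range over a bounded set and a finite composition of the maps $g,h$ keeps the trajectory in a compact region. Since the horizon is finite and $h$ is continuous, hence uniformly continuous on $K$, it suffices to control the one-step transition error: a routine triangle-inequality argument propagates a per-step state error of size $\delta$ into a total output error of order $T\delta$ up to the modulus of continuity of $h$, so driving $\delta\to 0$ yields any prescribed accuracy.

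The next step is to invoke the classical feedforward universal approximation theorem (Cybenko / Hornik--Stinchcombe--White): for the continuous readout $h$ and for the transition $g$ restricted to $K$ there exist weight matrices and biases realizing single-hidden-layer approximations
\begin{equation*}
  g(s,x)\approx W\,\sigma(Us+Vx-\beta),\qquad h(s)\approx M\,\sigma(Ps-\gamma),
\end{equation*}
where $\sigma$ is applied componentwise. For the merely measurable $g$ the approximation holds in the $L^p$/in-measure sense rather than uniformly, which is the weakest point of the argument and the reason the statement promises approximation ``to an arbitrary accuracy'' rather than pointwise control.

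The heart of the construction---and the step I expect to be the main obstacle---is realizing these sandwiched (affine--$\sigma$--affine) feedforward maps inside the recurrent update \eqref{one_layer_RNN}, which offers only a single pointwise $\sigma$ per step. I would augment the RNN hidden state as $s_t=(p_t,q_t)$, letting $p_t\in\R^J$ carry a reconstruction of the true system state and $q_t$ carry the $\sigma$-features, and then design block matrices $A,B,\theta$ so that one recurrent step (i) recomputes the feature block $q_t=\sigma(Up_t+Vx_t-\beta)$ and (ii) advances the state block by the linear readout $p_{t+1}\approx Wq_t\approx s_{t+1}$. The difficulty is that the update forces a $\sigma$ onto the state block as well, whereas $Wq_t$ must be read off \emph{linearly}. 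I would resolve this by exploiting the local linearity of $\sigma$ at a point $d$ with $\sigma'(d)\neq 0$, using
\begin{equation*}
  \tfrac{1}{c}\bigl(\sigma(c\,z+d)-\sigma(d)\bigr)\longrightarrow \sigma'(d)\,z\quad(c\to 0),
\end{equation*}
to approximate both the linear readout $q\mapsto Wq$ and the identity pass-through of the state block to arbitrary accuracy, absorbing the constants $\sigma(d)$ and $\sigma'(d)$ into the bias $\theta$ and the output matrix $C$. Setting $y_t=Cs_t$ to extract $h(p_t)$ then closes the construction.

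Finally I would assemble the error bounds: choose the feedforward accuracy and the linearization parameter $c$ small enough that the combined per-step error---from approximating $g$, from the $\sigma$-linearization, and from recomputing the features on the reconstructed rather than the true state---is below $\delta$, and invoke the propagation estimate of the first step to conclude a total error below the target over the horizon. The subtle book-keeping is checking that the two hidden blocks remain synchronized step to step and that the linearization errors do not compound catastrophically across the $T$ iterations; this, rather than any single approximation estimate, is where the real work lies.
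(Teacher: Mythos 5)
Your overall skeleton --- restrict the trajectory to a compact set $K$, invoke the feedforward UAT there, and propagate the per-step error over the finite horizon by a triangle inequality plus a Lipschitz/uniform-continuity estimate --- is exactly the skeleton of the Sch\"afer--Zimmermann argument that the paper reproduces in Appendix~\ref{app1:proof}. Where you diverge is the step you yourself call the heart of the construction: how to fold the sandwiched map $W\sigma(Us+Vx-\beta)$ into an update rule \eqref{one_layer_RNN} that applies only one pointwise $\sigma$ per step. The paper needs no linearization trick at all. It changes variables: instead of storing the reconstructed state $\bar s_t=Vs_t'$, it stores the pre-readout feature vector $s_t'$ itself, so that $s_{t+1}'=\sigma(W\bar s_t+Bx_t-\theta)=\sigma(WVs_t'+Bx_t-\theta)=\sigma(As_t'+Bx_t-\theta)$ with $A:=WV$; the linear readout is absorbed into the \emph{next} step's weight matrix, and the output readout is likewise absorbed into $C=[0\quad D]$ acting on an extra feature block $r_t$ that approximates $h\circ g$. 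In other words, the identity pass-through you labor to emulate is never needed, because your state block $p_t$ is a fixed linear image of the feature block and is therefore redundant.

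Beyond inelegance, your route has concrete gaps. (i) Synchronization: in $s_{t+1}=\sigma(As_t+Bx_t-\theta)$ the new blocks $p_{t+1}$ and $q_{t+1}$ are both functions of $(p_t,q_t,x_t)$, so $p_{t+1}$ can only use $q_t$, which encodes features of the pair $(p_{t-1},x_{t-1})$; hence $p_{t+1}\approx Wq_t\approx g(p_{t-1},x_{t-1})$, the state block lags, and the feature block then pairs a stale state with a fresh input. Your conditions (i) and (ii) cannot hold with the same time indexing, and if you try to repair this with delay blocks you will find that the fix collapses precisely to the paper's $A=WV$ construction with the $p$ block eliminated --- this is the unresolved ``synchronization'' issue you flag at the end. (ii) The limit $\tfrac{1}{c}\bigl(\sigma(cz+d)-\sigma(d)\bigr)\rightarrow\sigma'(d)z$ requires $\sigma$ to be differentiable at some $d$ with $\sigma'(d)\neq 0$; the paper's notion of sigmoid is only ``monotonically increasing and bounded,'' and such functions can be singular (derivative zero wherever it exists), so your argument proves a strictly narrower statement. (iii) Your closing step, ``setting $y_t=Cs_t$ to extract $h(p_t)$,'' does not work as written: $h$ is nonlinear and cannot be read off by a linear $C$ from $(p_t,q_t)$; you need the additional output-feature block $r_t$ with $y_t\approx Dr_t$, which is exactly the enlarged-state Claim in the paper's appendix. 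Your caveat about merely measurable $g$ is fair (the paper's own appendix argument in fact uses Lipschitz continuity of $g$ on $K$), but items (i)--(iii) are the substantive gaps.
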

The exact definition of an RNN model of type \eqref{one_layer_RNN} and the meaning of the arbitrarily accurate approximation are provided in Appendix \ref{app1:proof}.
The proof of this theorem was established based on the UAT for the feedforward neural networks. Here we mention some key points of Theorem \ref{UAT_deterministic_input}.
\begin{enumerate}
    \item The state vector $s_t\in \R^{J}$ in the dynamical system \eqref{differential_sde} is {\em not} the same as the state vector $s_t\in \R^{H}$ in RNN \eqref{one_layer_RNN}. In fact, to guarantee the accuracy of the approximation, there normally is an enlargement of the state space in RNN, \ie $H>J$. This has also to do with the next point.
    \item The universal approximation is in the sense of matching the input $x_t$ and the output $y_t$. This means that with the exact same input $x_t$, the output $y_t$ of the RNN should match closely with the output $y_t$ of the dynamical system, while the state vectors $s_t$ of these two systems may be different.
    \item The UAT assumes {\em finite}-step input/output, \ie\ $T<+\infty$. 
\end{enumerate}

The above UAT clearly indicates that even with a simple architecture such as the one-layer RNN, one can universally approximate any open dynamical system of the general form \eqref{differential_sde}. However, the theorem itself provides little guidance on \emph{how} to construct such an RNN model for a specific dynamical system. In practice, we rarely use a wide-enough RNN to complete the computing task. Nevertheless, the theorem indubitably shows the potential of the RNN architecture in modeling/learning dynamical systems.

We now show that a similar UAT holds for the resulting stochastic RNN by simply replacing the input vector $x_t$ with i.i.d.\ Gaussian random variables while leaving all other structures unchanged. This result can be stated as:

\begin{proposition}\label{UAT_stochastic_input}
(UAT for RNN with Gaussian inputs). Let $g(\cdot):\R^M\times\R^I\rightarrow \R^M$ be locally Lipschitz and $h(\cdot):\R^M\rightarrow\R^N$ be continuous, the external input $x_t\in\R^{I}$ be i.i.d.\ Gaussian random variables, the state vector $s_t\in \R^M$, and the outputs $y_t\in \R^N$ $(t=1,\cdots, T)$. Then any discrete, stochastic dynamical system of form \eqref{differential_sde} can be pathwisely approximated to an arbitrary accuracy by an RNN model of type \eqref{one_layer_RNN} asymptotically almost surely.
\end{proposition}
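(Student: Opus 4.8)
The plan is to reduce the stochastic statement to the deterministic universal approximation theorem (Theorem~\ref{UAT_deterministic_input}) by conditioning on a high-probability event on which all inputs, and consequently all states, are confined to a compact set. The obstruction to applying Theorem~\ref{UAT_deterministic_input} directly is that the Gaussian inputs $x_t$ are unbounded, whereas the deterministic UAT --- being built on the feedforward UAT --- approximates only over compact input domains. I would therefore fix a finite horizon $T$, a target accuracy $\epsilon>0$, and a tolerance $\delta>0$, and then truncate. First I would choose a radius $R=R(\delta,T)$ so large that the event $E_R:=\{\|x_t\|\le R \text{ for all } t=1,\dots,T\}$ has probability at least $1-\delta$; since there are only finitely many inputs and each is Gaussian, a union bound over the $T$ tail probabilities $P(\|x_t\|>R)$ makes this immediate. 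On $E_R$ every input lies in the compact ball $B_R=\{x:\|x\|\le R\}$.

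Next I would show that on $E_R$ the state trajectory is also confined to a compact set, and this is precisely where the upgraded hypothesis that $g$ be locally Lipschitz (hence continuous), rather than merely measurable as in Theorem~\ref{UAT_deterministic_input}, enters. Starting from the fixed initial state $s_1$, the reachable set after one step, $g(\{s_1\}\times B_R)$, is the continuous image of a compact set and hence compact; call it $K_2$. Inductively, $K_{t+1}:=g(K_t\times B_R)$ is compact for each $t\le T$, so on $E_R$ all states lie in the compact set $K:=\{s_1\}\cup\bigcup_{t=2}^{T}K_t$. Thus, conditioned on $E_R$, both the input and the state variables of the dynamical system \eqref{differential_sde} range over compact sets. (If the initial condition were itself random, one would absorb it into the truncation event in the same way.)

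With the dynamics now localized to the compact sets $B_R$ and $K$, I would invoke Theorem~\ref{UAT_deterministic_input} to produce a one-layer RNN of type \eqref{one_layer_RNN} whose output approximates that of \eqref{differential_sde} to within $\epsilon$, uniformly over all admissible input sequences $(x_1,\dots,x_T)\in B_R^{\,T}$. Since every realization in $E_R$ is such an admissible sequence, the RNN reproduces the sample path $y_1,\dots,y_T$ to accuracy $\epsilon$ on $E_R$, whence $P\!\left(\sup_{t\le T}\|y_t^{\mathrm{RNN}}-y_t\|\le\epsilon\right)\ge P(E_R)\ge 1-\delta$. Letting $\delta\to 0$ (equivalently $R\to\infty$, with the RNN width enlarged accordingly) yields pathwise approximation with probability tending to one, which is the asserted asymptotically almost sure statement.

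I expect the only genuine difficulty to lie in the truncation and compact-confinement step above: once the problem is restricted to a compact set the error propagation over the $T$ steps is handled entirely inside Theorem~\ref{UAT_deterministic_input}, and on a compact set the locally Lipschitz $g$ is in fact globally Lipschitz, so the localization is legitimate and the approximation error does not amplify uncontrollably across the recursion. The subtlety is simply to verify that the exceptional set where inputs escape $B_R$ can be made arbitrarily small while keeping $T$, and hence the induced compact state set $K$, fixed --- a matter of the finiteness of the horizon and the rapid decay of the Gaussian tails.
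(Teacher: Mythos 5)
Your proposal is correct and follows essentially the same route as the paper's proof in Appendix~\ref{app1:proof}: truncate the Gaussian inputs to a compact ball whose exceptional probability vanishes as the radius grows (the paper uses a Chebyshev-type tail bound and a product over the $T$ i.i.d.\ inputs where you use a union bound), inductively confine the states to a compact set using continuity of the locally Lipschitz $g$, apply the compact-domain universal approximation machinery, and let the truncation radius tend to infinity to obtain the asymptotically-almost-sure statement. The only cosmetic difference is that you invoke Theorem~\ref{UAT_deterministic_input} as a black box, whereas the paper re-derives its two ingredients inline (the Lipschitz error-propagation estimate across the $T$ steps and the state-space enlargement needed so that a linear readout $Cs_t$ can approximate the nonlinear output map $h$), which is exactly the content you correctly identify as being "handled inside" the deterministic theorem.
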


A mathematically rigorous statement of Proposition \ref{UAT_stochastic_input} is given as Theorem \ref{UAT_rigorous} in Appendix \ref{app1:proof}, which can be proved using a probabilistic variant of the method proposed by Sch\"afer and Zimmermann \cite{schafer2006recurrent}. The detailed proof is rather technical, hence will also be deferred to Appendix \ref{app1:proof}. An intuitive explanation of why the probability of finding a suitable RNN that approximates \eqref{differential_sde} is only asymptotically 1 is that the key estimate which leads to Sch\"afer and Zimmermann's deterministic UAT (Theorem \ref{UAT_deterministic_input} in Appendix \ref{app1:proof}) is based on the fact that the finite-step input vector $x_t$ can be bounded in a compact domain of $\R^I$. Since the Gaussian random input $x_t\in \R^{I}$ is not compactly supported, but only asymptotically compactly supported, we can only find its universal approximation asymptotically almost surely. This discussion also implies the following corollary:

\begin{proposition}\label{UAT_stochastic_input_compact}
(UAT for RNN with compactly supported stochastic input). Let $g(\cdot):\R^M\times\R^I\rightarrow \R^M$ be continuously differentiable, $h(\cdot):\R^M\rightarrow\R^N$ be continuous, the external input $x_t$ be i.i.d.\ random variables with compact support, the inner state $s_t\in \R^M$, and the outputs $y_t\in \R^N$ $(t=1,\cdots, T)$. Then any discrete, stochastic dynamical system of form \eqref{differential_sde} can be pathwisely approximated to an arbitrary accuracy by an RNN model of type \eqref{one_layer_RNN} almost surely.  
\end{proposition}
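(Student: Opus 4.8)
The plan is to reduce the statement to the deterministic UAT (\Cref{UAT_deterministic_input}) by exploiting the fact that a compactly supported input, observed over the finite horizon $T$, almost surely confines the \emph{entire} input trajectory to a single fixed compact set. This removes the asymptotic argument that was forced upon us in the Gaussian case (\Cref{UAT_stochastic_input}), and it is precisely what strengthens the conclusion from asymptotic almost sure to almost sure.

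First I would fix a compact set $K\subset\R^I$ containing the common support of the i.i.d.\ inputs $x_t$, and set $\Omega_0=\{\omega:x_t(\omega)\in K\text{ for }t=1,\dots,T\}$. Since the support is compact and $T$ is finite, $P(\Omega_0)=1$, and on $\Omega_0$ every realized input path lies in the compact product $K^T$. Next, using the continuity of $g$ (which follows from the $C^1$ hypothesis), I would show by induction on $t$ that the reachable states remain in a compact set $S\subset\R^M$: starting from a fixed initial state $s_0$, each $s_{t+1}=g(s_t,x_t)$ lies in the continuous image of a compact set and is therefore contained in a compact set, so taking the union over $t\le T$ yields a single compact $S$.

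With both inputs and states confined to the compact domain $S\times K$, I would invoke \Cref{UAT_deterministic_input}, whose hypotheses ($g$ measurable, $h$ continuous) hold a fortiori. Because that theorem rests on the uniform approximation property of feedforward networks over compact sets, for any $\varepsilon>0$ it produces an RNN of type \eqref{one_layer_RNN} whose output matches that of the target system \eqref{differential_sde} uniformly over all input paths in $K^T$, i.e.\ $\sup_{1\le t\le T}\lvert y_t^{\mathrm{RNN}}-y_t\rvert<\varepsilon$. Since every $\omega\in\Omega_0$ generates such a path, this uniform bound transfers to a pathwise bound valid for all $\omega\in\Omega_0$, and as $P(\Omega_0)=1$ the pathwise approximation holds almost surely.

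The main obstacle is verifying that \Cref{UAT_deterministic_input} genuinely delivers approximation that is \emph{uniform} over the whole compact input--state domain, rather than only pointwise for a single prescribed trajectory; this uniformity is exactly what upgrades the conclusion from the asymptotic almost sure statement of \Cref{UAT_stochastic_input} to the almost sure statement here, since compact support makes the error bound hold simultaneously for every realization whereas Gaussian tails escape any fixed compact set with positive probability. The role of the $C^1$ assumption is confined to guaranteeing compactness of the reachable state set $S$, and I expect mere continuity of $g$ to suffice for that step.
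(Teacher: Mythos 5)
Your proposal is correct and is essentially the paper's own (very terse) argument: fixing a compact $K$ containing the support makes the confinement event $\{x_t,s_t,\bar s_t\in K,\ t=1,\dots,T\}$ have probability exactly $1$, so the conditional-probability factorization used for the Gaussian case in Appendix~\ref{app1:proof} (where $\mathrm{Pr}[x_t,s_t,\bar s_t\in K]\to 1$ only as $|K|\to+\infty$) collapses to an unconditional almost-sure statement. One correction to your closing remark: the trajectory-level uniformity you identify as the main obstacle is precisely where the $C^1$ hypothesis enters --- on the compact set $g$ is Lipschitz, which yields the error-propagation estimate \eqref{est_2}, $\|s_t-\bar s_t\|_{\infty}\leq\delta\,(1+C+\cdots+C^{T-1})$; uniform approximation of $g$ by a feedforward network would not by itself control the recursively generated states, so the role of $C^1$ is not confined to compactness of the reachable set.
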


\begin{proof}
The proof is easy to obtain following the above arguments and Appendix \ref{app1:proof}.
\end{proof}
As an example, consider i.i.d.\ stochastic input $x_t$ being uniformly distributed in $[a,b]^I$. Then, with probability 1 one can find an RNN model of type \eqref{one_layer_RNN} that accurately approximates open stochastic dynamics \eqref{differential_sde}. For the proposed RNN with stochastic inputs, what the RNN learns is the deterministic update rule that matches the input $x_t$ with the output $y_t$. This is the fundamental reason why the proof of the UAT for RNN with deterministic inputs can be modified to obtain the UAT for RNN with stochastic inputs.

\paragraph{UAT for SDEs.}
The above UATs for RNN with stochastic input can be immediately applied to address the learning and approximation problem of SDEs. Consider It\^o's diffusion on $\R^d$:
\begin{align}\label{Ito_SDE}
    \d X(t) = b\left(X(t)\right)+\sigma\left(X(t)\right)\d\W(t),
\end{align}
where $b(X(t))\in \R^{d}$ is the vector field, $\sigma(X(t))\in \R^{d\times m}$ is the diffusion matrix, and $\W(t)\in \R^{m}$ is the standard Wiener process. Any (explicit) finite difference scheme corresponding to It\^o's diffusion can be written in the form of \eqref{differential_sde} with i.i.d.\ Gaussian input. For instance, the Euler--Maruyama scheme is given by
\begin{align}\label{EM_scheme}
    X(t+\Delta t)&=X(t)+\Delta t\,b\left(X(t)\right)+\sigma\left(X(t)\right)\sqrt{\Delta t}\xi(t)\nonumber\\
    &=g(X(t),\xi(t),\Delta t),
\end{align}
where $\xi(t)$ are i.i.d.\ standard normal random variables. For a fixed $\Delta t$, \eqref{EM_scheme} corresponds to the update rule for the state vector in \eqref{differential_sde} where $x_t=\xi(t)$. Any phase space observables of the stochastic system $y_t=h(X_t)$ can be the output. Under some mild conditions, the Euler--Maruyama scheme \eqref{EM_scheme} is proven to be pathwise convergent to the exact solution of SDE \eqref{Ito_SDE} as $\Delta t\rightarrow 0$ \cite{gyongy1998note,kloeden2007pathwise}. With this result, we can actually obtain the following UAT for It\^o's diffusion: 

\begin{proposition}\label{UAT_Ito_diffusion}
(UAT for the RNN approximation of It\^o's diffusion) Suppose $b(x)$ and $\sigma(x)$ are locally Lipschitz, then the exact solution in a finite time grid to It\^o's diffusion \eqref{Ito_SDE} can be pathwisely approximated to an arbitrary accuracy by an RNN model of the type \eqref{one_layer_RNN} asymptotically almost surely, if we replace $x_t$ by $\xi(t)$.
\end{proposition}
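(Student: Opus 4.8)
The plan is to chain two approximations through a triangle inequality, using the Euler--Maruyama scheme \eqref{EM_scheme} as a bridge between the exact diffusion and the RNN. Fix a finite set of observation times $0=t_0<t_1<\cdots<t_n=T$ and a target accuracy $\varepsilon>0$. The first leg bounds the distance between the exact solution $X$ of \eqref{Ito_SDE} and its Euler--Maruyama approximation $X^{\Delta t}$; the second leg bounds the distance between $X^{\Delta t}$ and the output of a suitable RNN. The key structural observation is that, for a fixed step size $\Delta t$, the update map
\begin{equation*}
g(X,\xi,\Delta t)=X+\Delta t\,b(X)+\sqrt{\Delta t}\,\sigma(X)\xi
\end{equation*}
is locally Lipschitz in $(X,\xi)$ whenever $b$ and $\sigma$ are locally Lipschitz, so that \eqref{EM_scheme} is precisely a discrete stochastic system of the form \eqref{differential_sde} driven by the i.i.d.\ Gaussian input $\xi(t)$, to which Proposition \ref{UAT_stochastic_input} applies verbatim.

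First I would control the first leg. By the pathwise convergence of the Euler--Maruyama scheme under locally Lipschitz coefficients \cite{gyongy1998note,kloeden2007pathwise}, one has $\sup_{0\le i\le n}|X^{\Delta t}(t_i)-X(t_i)|\to 0$ almost surely as $\Delta t\to 0$, where $X^{\Delta t}$ is driven by the (normalized) increments of the same Wiener process $\W$ as $X$. Almost sure convergence implies convergence in probability, so for any $\delta>0$ I can fix a single step size $\Delta t$ --- equivalently a finite number of substeps $N=T/\Delta t$ --- for which this supremum falls below $\varepsilon/2$ outside an event of probability at most $\delta/2$. This is the step that pins down the finite $N$ before the RNN is ever invoked.

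With $N$ now fixed and finite, the second leg is a direct application of Proposition \ref{UAT_stochastic_input} to the discrete system \eqref{EM_scheme}: since $g$ is locally Lipschitz and the observable $h$ of interest is continuous, there exists an RNN of type \eqref{one_layer_RNN} whose output, fed the \emph{same} i.i.d.\ Gaussian stream $\xi(t)$, approximates the Euler--Maruyama trajectory at the grid points to within $\varepsilon/2$ asymptotically almost surely --- that is, outside an event whose probability can be driven below $\delta/2$ by enlarging the bounding box of the inputs and correspondingly the width of the network. Because the RNN and the Euler--Maruyama scheme are driven by the identical realization of $\xi(t)$, the two error bounds live on a common probability space and combine pathwise. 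A union bound over the two exceptional events, followed by $|y_{t_i}-X(t_i)|\le|y_{t_i}-X^{\Delta t}(t_i)|+|X^{\Delta t}(t_i)-X(t_i)|$, then yields a total error below $\varepsilon$ with probability at least $1-\delta$, which is exactly the claimed asymptotically-almost-sure approximation.

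I expect the main obstacle to be the mismatch of limiting regimes rather than any single estimate. The Euler--Maruyama leg forces $\Delta t\to 0$, hence $N\to\infty$, whereas Proposition \ref{UAT_stochastic_input} and the underlying deterministic Theorem \ref{UAT_deterministic_input} are stated only for a \emph{fixed} finite horizon. The resolution --- and the point that must be argued carefully --- is the order of quantifiers: one fixes the target accuracy $\varepsilon$ first, lets the discretization result select a finite $N$, and only then applies the UAT at that fixed $N$; taking the two limits simultaneously would violate the finite-step hypothesis. A secondary subtlety is the bookkeeping of the two distinct modes of convergence, almost sure for the discretization and asymptotically almost sure for the network, so that they merge into a single asymptotically-almost-sure statement on the shared driving noise.
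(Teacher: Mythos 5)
Your proof is correct and is essentially the paper's own argument (Theorem \ref{UAT_rigorous_SDE} in Appendix \ref{app1:proof}): both use the Euler--Maruyama scheme \eqref{EM_scheme} as a bridge, invoking Gy\"ongy's pathwise convergence result for the exact-solution-to-EM leg, applying the Gaussian-input UAT (Proposition \ref{UAT_stochastic_input}, applicable because the EM update map is locally Lipschitz for fixed $\Delta t$) for the EM-to-RNN leg, and combining the two errors via the triangle inequality on the shared realization of the driving noise. Your quantifier ordering --- fix the accuracy, let the discretization pin down a finite number of steps, and only then enlarge the compact set and the network --- corresponds to the paper's non-exchangeable order of limits ($\Delta\to 0$ before $|K|\to+\infty$), rendered in a cleaner $\varepsilon$--$\delta$ form.
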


A formal statement of Proposition \ref{UAT_Ito_diffusion}, \ie Theorem \ref{UAT_rigorous_SDE}, and its proof are again provided in Appendix \ref{app1:proof}. Since any time-inhomogeneous SDE admits a time-homogeneous extended dynamics by choosing $t=Y(t)$ as another state variable. Therefore, the universal approximation result naturally extends to time-inhomogeneous SDEs. Similarly, if a non-Markovian SDE admits a suitable embedded Markovian dynamics representation, one can approximate it with RNN model \eqref{one_layer_RNN} by using the latter representation. As an example, consider the generalized Langevin equation (GLE) \cite{lei2016data} that is frequently used in the coarse-grained modeling of large-scale molecular systems:
\begin{align}\label{gle}
\begin{dcases}
    \dot {q}&=p,\\
    \dot p&=F(q)-\int_0^t K(t-s)\,p(s)\,\d s+f(t),
\end{dcases}
\end{align}
where $q(t)$, $p(t)$ are the {\em effective} position and momentum of a coarse-grained particle of unit mass, $F(q)$ is the effective potential energy force, $f(t)$ is the fluctuation force which is often assumed to be a colored Gaussian stochastic process, and the time autocorrelation function of $f(t)$ yields the memory kernel $K(t)=\langle f(t)f(0)\rangle$. GLE \eqref{gle} is a non-Markovian stochastic system because of the time-convolution term $\int_0^t K(t-s)\,p(s)\,\d s$. It is shown in \cite{lei2016data} and many recent works that for many molecular dynamical systems, GLE \eqref{gle} for a coarse-grained particle often admits a Markovian embedded approximation:
\begin{align}\label{gle_markov_embed}
\begin{dcases}
    \d q &= p\,\d t,\\
    \d p &= \left[F(q)+Z^Ts\right]\d t,\\
    \d s &= \left[Bs-QZp\right]\d t+\d\W(t),
\end{dcases}
\end{align}
where the vector $s$ consists of auxiliary variables whose length depends on the order of approximation, and $Z,B,Q$ are the corresponding auxiliary matrices. For such Markovian embedded dynamics, the proposed RNN has the capacity to approximate its output $q(t)$ and $p(t)$ according to Proposition \ref{UAT_Ito_diffusion}.

\section{Statistics-Informed Neural Network}
\label{sec:SINN_architecture}

The UAT shows the potential of RNNs in simulating stochastic dynamics at the large-width limit. In this section, we put this theoretical insight into practice as a framework called the statistics-informed neural network (SINN) to learn stochastic dynamics from data. The main structure of SINN can be briefly summarized as follows. We use the long short-term memory (LSTM) architecture, a specific type of RNN, to capture non-Markovian memory effects a potential stochastic system might have. These LSTM cells take i.i.d.\ random sequences as input and generate ensembles of stochastic time series. A set of training algorithms and statistics-based loss functions are devised to train \SINN to reproduce the statistical characteristics of a target stochastic system.

Before we introduce the specific way to construct SINN, it is worth clarifying our approach. In this paper, we do not seek to construct NNs which are merely an implementation of established theoretical results in \Cref{sec:UAT_for_stochastic}. A wide enough, one-layer neural network has the universal approximation property while is hardly useful in practice. Instead, we use a multi-layer deep neural network to design neural network. Deep neural networks with multiple layers are proven to have many successful applications while the convergence proof is far out of reach. For this reason, we investigate numerical convergence to the target stochastic dynamics in terms of statistics, which corresponds to the weak convergence instead of the pathwise convergence \cite{kloeden2007pathwise}. It is noted that the UAT result should be understood as a justification of the modeling capacity of stochastic RNN, and the way we embed random noise into the system.

\subsection{Model Architecture}
\label{section:model-architecture}

A graphical illustration of the \SINN architecture is shown in \Cref{fig:lstm-mlp-architecture}. The network consists of a multi-layer LSTM component to learn the temporal dynamics of stochastic processes, and a dense layer attached to the output gate of the LSTM as a `read-out' device. Dropout layers can be optionally placed between the layers to control overfitting.

As inspired by the UAT, we use a stream of i.i.d.\ random numbers as the input to the model, which only carries out deterministic operations, in order to generate different  realizations of a stochastic process. The preferred distributions for the input noise are the ones with maximum entropy, \ie, normal distributions for outputs with infinite support, uniform distributions for outputs with compact support, and exponential distributions for outputs with support on $\R^+$. From one perspective, the maximum entropy principle implies that this is the best choice when we assume minimum prior knowledge about the stochastic process. From an alternative perspective, the i.i.d.\ noise sequences can be viewed as the entropy source for \SINN, which in turn can be viewed as a transformer between the input and output stochastic processes. Since information can be lost during the calculation, the maximum entropy distributions help to ensure that the transformation process will not starve of entropy.

\begin{figure}
    \centering
    \includegraphics{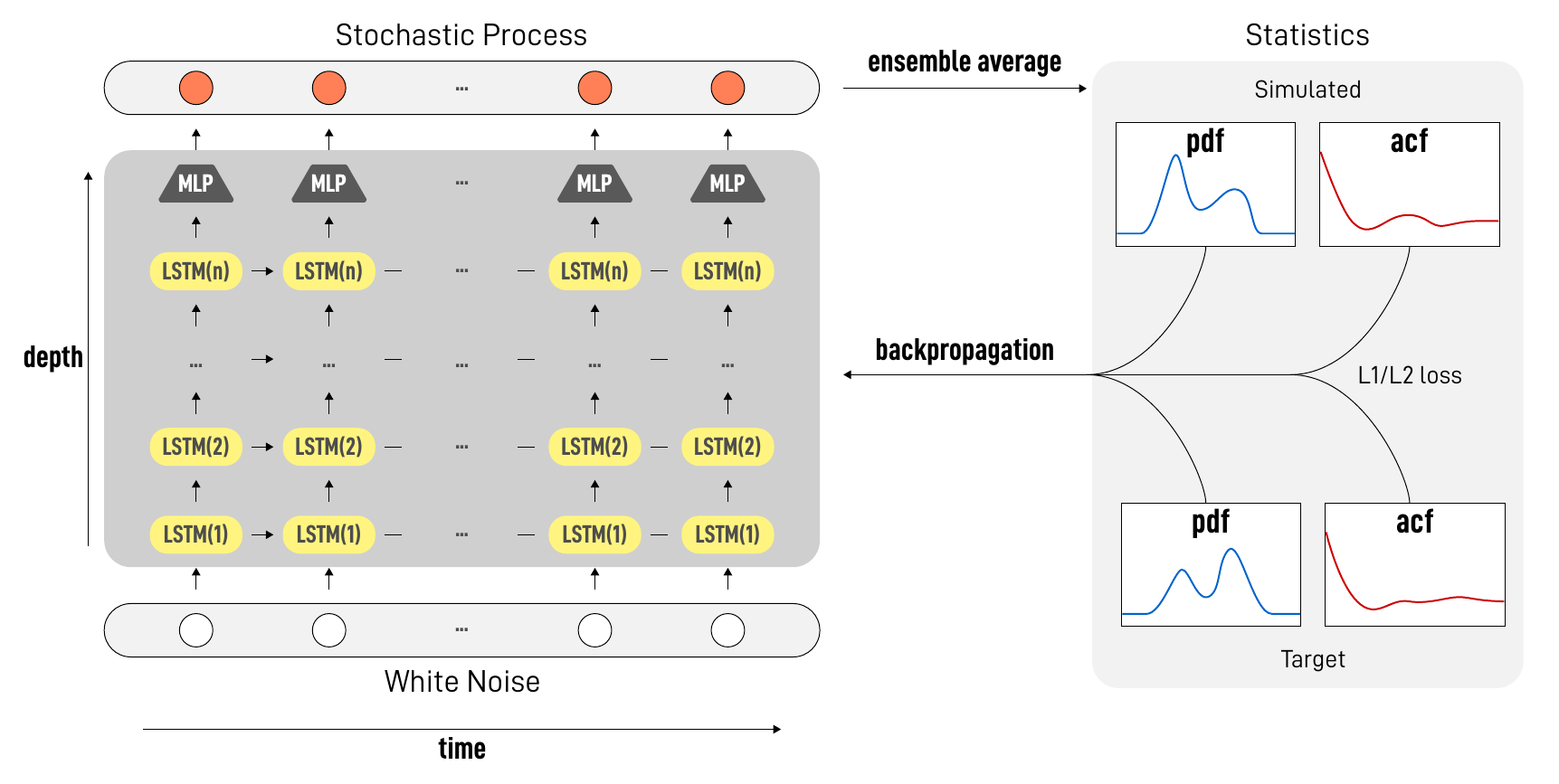}
    \caption{SINN architecture}
    \label{fig:lstm-mlp-architecture}
\end{figure}

Denoting the white noise sequence as $\xi_t$, the forward pass of the first LSTM layer in SINN can be written as:
\begin{align}
f_t^{(1)} &= \sigma_g\left(W_{f} \xi_t + U_{f} h_{t-1}^{(1)} + b_f\right), \\
i_t^{(1)} &= \sigma_g\left(W_{i} \xi_t + U_{i} h_{t-1}^{(1)} + b_i\right), \\
o_t^{(1)} &= \sigma_g\left(W_{o} \xi_t + U_{o} h_{t-1}^{(1)} + b_o\right), \\
\tilde{c}_t^{(1)} &= \sigma_c\left(W_{c} \xi_t + U_{c} h_{t-1}^{(1)} + b_c\right), \\
c_t^{(1)} &= f_t^{(1)} \circ c_{t-1}^{(1)} + i_t^{(1)} \circ \tilde{c}_t^{(1)}, \\
h_t^{(1)} &= o_t^{(1)} \circ \sigma_h(c_t^{(1)}),
\end{align}
where $\tilde c_t^{(1)}$ is the cell input activation, $c_t^{(1)}$ is the cell state, and $f_t^{(1)}$, $i_t^{(1)}$, $o_t^{(1)}$ are the forget gate, the input gate, and the output gate of the first layer, respectively \cite{hochreiter1997long}. For a subsequent layer $i$, the previous-layer output $h_t^{(i-1)}$ replaces $\xi_t$ as the input. The final output $\chi_t$ is then calculated as $\chi_t=W_mh_t^{(n)}$, where $h_t^{(n)}$ is the output of $n$-th LSTM layer, $W_m \in \mathbb{R}^{x}$,  and $x$ is the size of the output vector $\chi_t$. Here we emphasize that only the input sequence is random, while the entire \SINN model itself is deterministic. This makes training of the network very efficient and straightforward.

\subsection{Loss Function}

Rather than seeking a pathwise approximation to the stochastic dynamics, we attempt to match various density functions and the statistics of the trajectories obtained from SINN with those for the target processes. By doing so, we avoid tracking and storing the input Gaussian white noise used in generating the target processes. Moreover, the input noise sequences of SINN can also be temporally coarse-grained if the loss function is measured on the coarse-grained target processes.

\paragraph{Autocorrelation Function.}
The autocorrelation function (ACF) of a sequence $X$ is a deterministic function of lag $\tau$ defined as
\begin{equation}
    \ACF_{X}(\tau) = \frac{\mathrm{E}[X_t  X_{t + \tau}]}{\mathrm{E}[X^2_t]},\label{eq:acf-definition}
\end{equation}  
where the process $X$ is assumed to be zero-mean without loss of generality.

Two common approaches exist for computing the ACF of discrete time series data. The first approach, which we call the \emph{brute force} method, simply uses the definition in \eqref{eq:acf-definition} to compute $\ACF(\tau)$ for every $\tau$. For a sequence of length $n$, computing its ACF up to $\tau = n$ using brute force requires $\mathcal{O}(n^2)$ operations. The second approach, which we call the \emph{fast Fourier transform (FFT)} method, uses the Wiener-Khinchin theorem to efficiently compute the ACF using the Fourier transform of the sequence as
$\mathrm{ACF}_{X} = \mathrm{FFT}^{-1}\left(\mathrm{FFT}(X) \cdot \mathrm{FFT}(X)^*\right)$, where the asterisk ($^*$) denotes the complex conjugate. The FFT approach requires only $\mathcal{O}(n \log n)$ operations for computing the ACF up to $\tau = n$. However, due to the periodicity assumption as implied by the Fourier transformation, the computed ACF can deviate considerably from the true value for large $\tau$. The problem is particularly serious if the ACF does not decay close enough to zero at the length of the sequence data. Hence, the method for computing the ACF must be chosen with discretion while taking the characteristics of the target process into account. In the following numerical examples, both the brute force and FFT approaches are used as appropriate. Both methods permit efficient backpropagation of ACF-based losses to the NN model using popular tensor algebra libraries such as PyTorch~\cite{NEURIPS2019_9015} and JAX~\cite{jax2018github}. For the numerical examples considered, we use a linear combination of $L_1$ and $L_2$ norms to calculate the loss for ACF. Specifically, the loss function we use is
\begin{align}
    \text{Loss}_{acf}=\frac{1}{n}\sum_{\tau \in T}|\ACF_{O}(\tau)-\ACF_{T}(\tau)|+ \frac{1}{n}\sum_{\tau \in T}[\ACF_{O}(\tau)-\ACF_{T}(\tau)]^2,
\end{align}
where $\ACF_{O}(\tau)$ is SINN output ACF function at time $\tau$ and $\ACF_{T}(\tau)$ is the ACF for the target stochastic process. There are flexibilities in terms of the selection of the loss function and hence valid options are not limited to the $L_p$ norms. 

\paragraph{Probability Density Function.}
Binning-based probability density function (PDF) estimators are not differentiable due to the discrete nature of the histogram operation. Therefore, we compute and compare the empirical PDFs of both the target and simulated trajectories using kernel density estimation (KDE):
\begin{equation}
    \widehat{f}_h(x) = \frac{1}{|X|} \sum_i^{|X|} K_h(x - X_i),
\end{equation}
where $K$ is a non-negative \emph{kernel} while $h$ is a smoothing parameter. $K_h(d) \doteq \frac{1}{h} K(\frac{d}{h})$ is the scaled kernel. We use the Gaussian kernel $K^\mathrm{Gauss}(d) = \frac{1}{\sqrt{2\pi}} \exp(-\frac{d^2}{2})$ with a bandwidth parameter $h = |X|^{-\frac{1}{5}}$ \cite{silverman2018density}, where $|X|$ is the length of the sequence $X$. Similarly, the combined $L_1$ and $L_2$ norm are used to calculate the loss for the probability density:
\begin{align}
    \text{Loss}_{pdf}=\frac{1}{n}\sum_{i=1}^n|\widehat{f}^{O}_h(x_i) -\widehat{f}^{T}_h(x_i) |+ \frac{1}{n}\sum_{i=1}^n[\widehat{f}^{O}_h(x_i) -\widehat{f}^{T}_h(x_i)]^2,
\end{align}
where $\widehat{f}^{O}_h(x)$ and $\widehat{f}^{T}_h(x)$ are the estimated PDFs for the output sequence and the target stochastic process, respectively. While the 
Kullback--Leibler divergence appears to be a natural choice for comparing probability distributions, its use of the logarithm operations requires that a large number of output trajectories to be sampled to ensure numeral stability. As such, we are in favor of the $L_p$ norms due to their robustness and the resulting performance benefits.

\section{Numerical Experiments}

\subsection{Training Method}
\label{sec:training_method}

The \SINN model is trained with stochastic gradient descent (SGD) using the Adam optimizer. The learning rate is set to be $10^{-3}$ with $\beta_1=0.9$ and $\beta_2 = 0.999$. Training and validation losses are tracked throughout the training process for every $100$ steps. A {\em new} batch of Gaussian white noise trajectories are generated and used as the model input for every training iteration. This is to ensure that the learned \SINN model is generalizable and not overfitting to a particular realization of the stochastic processes. In our experiments, the training batch and the target data both contain $400$ sequences, while the validation set contain $800$ sequences.

\paragraph{Evaluation of Loss.}
Instead of comparing the ACFs over the entire lag range $1, \ldots, t_\mathrm{max}$, we randomly select a set $T$ of lag values $\tau_1, \ldots, \tau_m$ with $m \ll t_\mathrm{max}$ during each SGD step and compare the ACFs at the selected lags. Typical values of $m$ is around $20$. This procedure is particularly important when the brute force ACF estimator is employed due to its high computational cost.

\paragraph{Input Sequence.}
The white noise sequence, which serves as the input to the \SINN model as described earlier in \Cref{section:model-architecture}, is always created {\em afresh} at each SGD iteration. This refreshing procedure is to ensure that the dynamics SINN learned is independent of any specific realizations of the input random noise. This step is particularly important to guarantee the generality and consistency of SINN's training results.

\paragraph{Validation Sequence.}
The validation data is a fixed number of target sequences that are used to monitor training and detect overfitting. The losses computed on the validation sequence do not participate in backpropagation.

\paragraph{Computational Cost.}
All computations are performed using a workstation with 16 AMD Zen3 cores at 3.0 GHz and one NVIDIA A100 accelerator. A SINN model with 2 LSTM layers each with 25 units can be trained for $1200$ SGD iterations within approximately $1$ minute. Detailed runtime statistics for all numerical examples considered in this paper are summarized in \Cref{Tab:runtime}. Explanations will be presented in following sections.

\begin{table}
\centering
\caption{Wall time measurements for all numerical examples. The `SINN Training' column records the wall time for the SINN model to achieve $\epsilon_T,\epsilon_V<10^{-3}$. Here $\epsilon_T$ and $\epsilon_V$ are the training and validation errors, respectively. The time steps for the Euler--Maruyama scheme are $\Delta t=10^{-2}$ (OU, FPU, Poisson, CG) and $\Delta t=10^{-3}$ (Double-Well). Temporally coarse-grained trajectories with step size $dt=0.2$ are used to train SINN. Hence SINN models all have time scale $dt=0.2$. Other technical details for each example are given in the corresponding section. All wall time values are averaged over 5 simulations. The algorithms are implemented using PyTorch and executed on a workstation with 16 AMD Zen3 cores at 3.0 GHz and one NVIDIA A100 accelerator.}
\begin{tabular}{lclcc}
\toprule
\multirow{2}{*}{\textbf{Stochastic Process}} & \multicolumn{1}{c}{\textbf{Training}} && \multicolumn{2}{c}{\textbf{Generate 5000 trajectories up to $T=1000$}} \\
\cmidrule{2-2} \cmidrule{4-5}
& SINN && Euler--Maruyama & SINN \\
\midrule
OU          & 33 s     && 13 s     & 1.4 s      \\
FPU         & 68 s     && 25 s     & 1.4 s      \\
Poisson     & 150 s    && 13.8 s   & 1.4 s      \\
CG          & 268 s    && 3430 s   & 1.4 s      \\
Double-Well & 779 s    && 252 s    & 1.4 s      \\

\bottomrule
\end{tabular}

\label{Tab:runtime}
\end{table}

Before presenting numerical results, we comment in advance on the modeling advantages of SINN, which echos the three architectural differences we mentioned in Introduction. First, SINN is essentially equation-free since the modeling and training of SINN use no equations. This feature allows the generated dynamics to have {\em tunable} coarse-grained time scales, which makes it particularly suitable for capturing the long-time behavior of stochastic systems. Further discussion in this regard is provided in \Cref{sec:APP_reduced}. Second, SINN learns a {\em deterministic} update rule for SDEs which is similar to the Euler--Maruyama scheme \eqref{EM_scheme}. Thus, it is very natural to do time-domain extrapolation and expect a certain predictability of SINN. Third, the convergence we seek is defined in terms of statistical moments and probability measure. In many cases, such as the transition dynamics simulation in \Cref{sec:APP_reduced}, it can be shown that such convergence is already enough to capture the physics we are interested in.

\subsection{Validation Cases}
\label{sec:test_cases}

We present three test cases here to show that SINN can well approximate Gaussian and non-Gaussian stochastic dynamics. Detailed runtime statistics are listed in \Cref{Tab:runtime}.

\subsubsection{Ornstein--Uhlenbeck Process}
Consider the Ornstein--Uhlenbeck (OU) process given by the following SDE:
\begin{align}\label{OU_process}
\frac{dx}{dt}=-\theta x+\sigma\xi(t),
\end{align}
where $\sigma$ and $\theta$ are positive parameters and $\xi(t)$ is standard Gaussian white noise with correlation function $\langle\xi(t)\xi(s)\rangle=\delta(t-s)$. The OU process is ergodic and admits a stationary, \ie equilibrium, Gaussian distribution $\N(0, \sigma^2/2\theta)$. In addition, the ACF of $x(t)$ at equilibrium is an exponentially decaying function $C(\tau)=\langle x(t+\tau)x(t)\rangle=\frac{\sigma^2}{2\theta}e^{-\theta\tau}$.  
With the parameter values $\sigma=0.5$ and $\theta=1$, we generate approximated dynamics for $x(t)$ using the proposed SINN architecture with two LSTM layers each with one unit. The stationary ACF and the equilibrium PDF, which are analytical, are used as the target by the loss function to train the NN parameters.

\begin{figure}
\centerline{
\includegraphics[height=5cm]{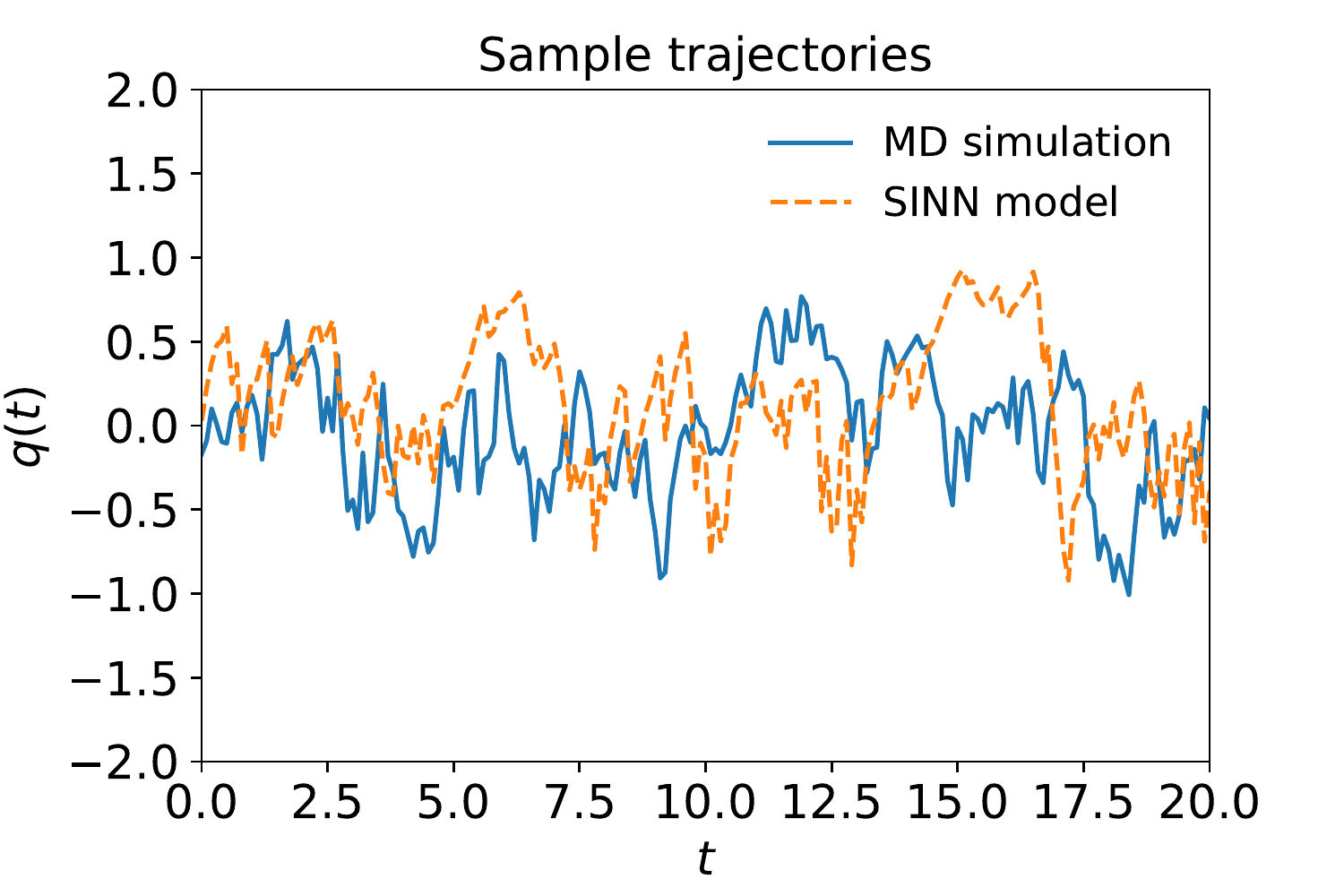}
\includegraphics[height=5cm]{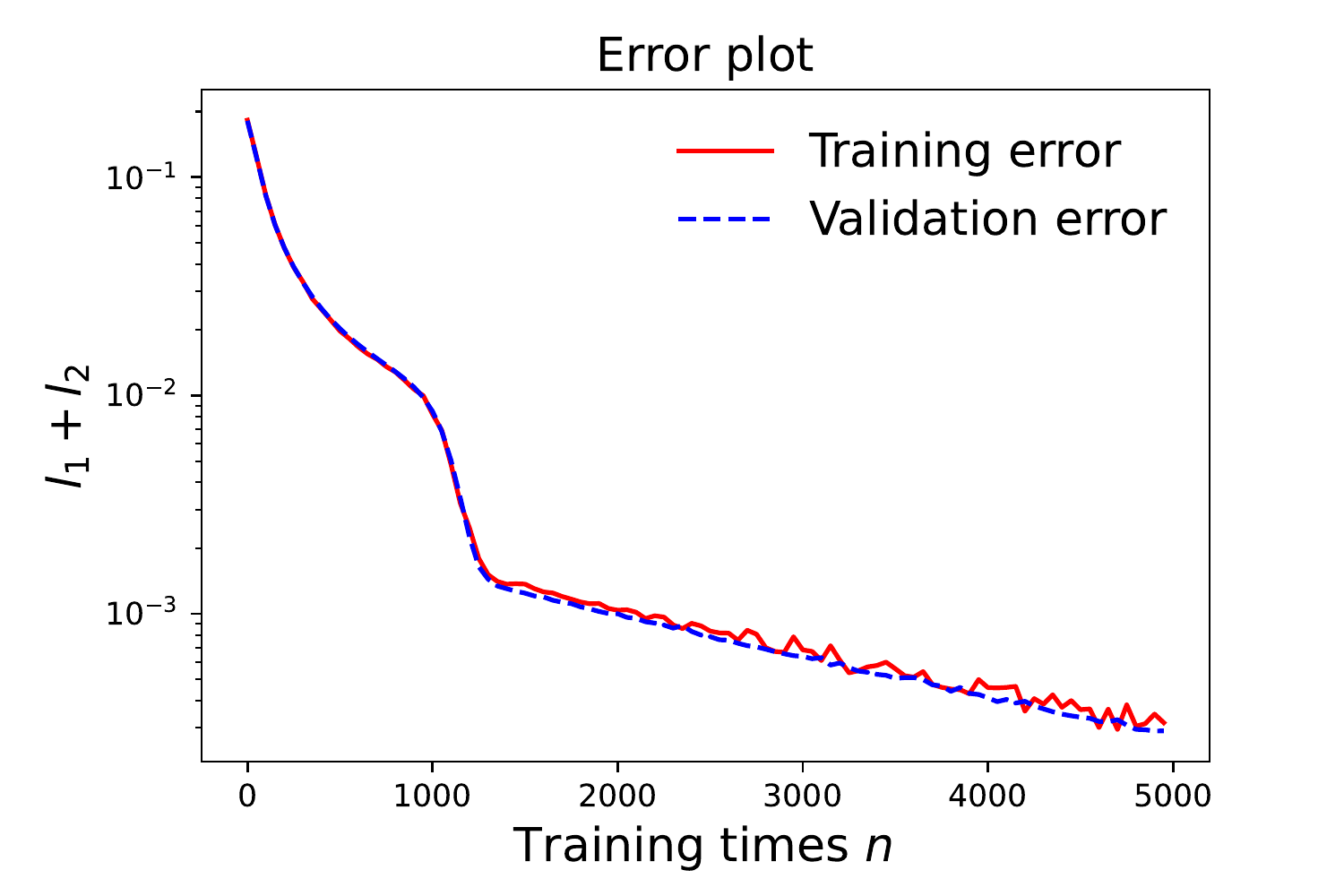}
}
\centerline{
\includegraphics[height=5cm]{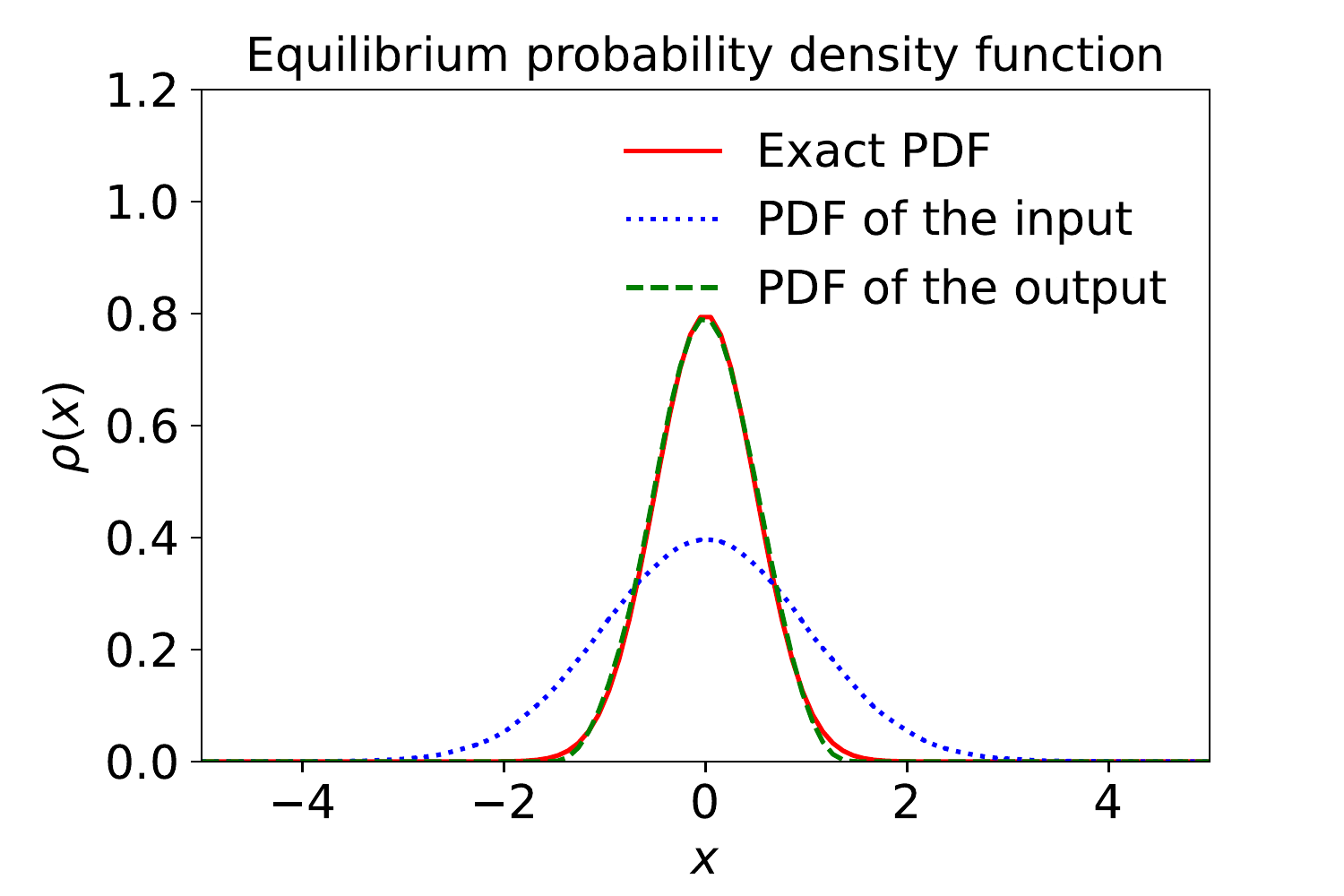}
\includegraphics[height=5cm]{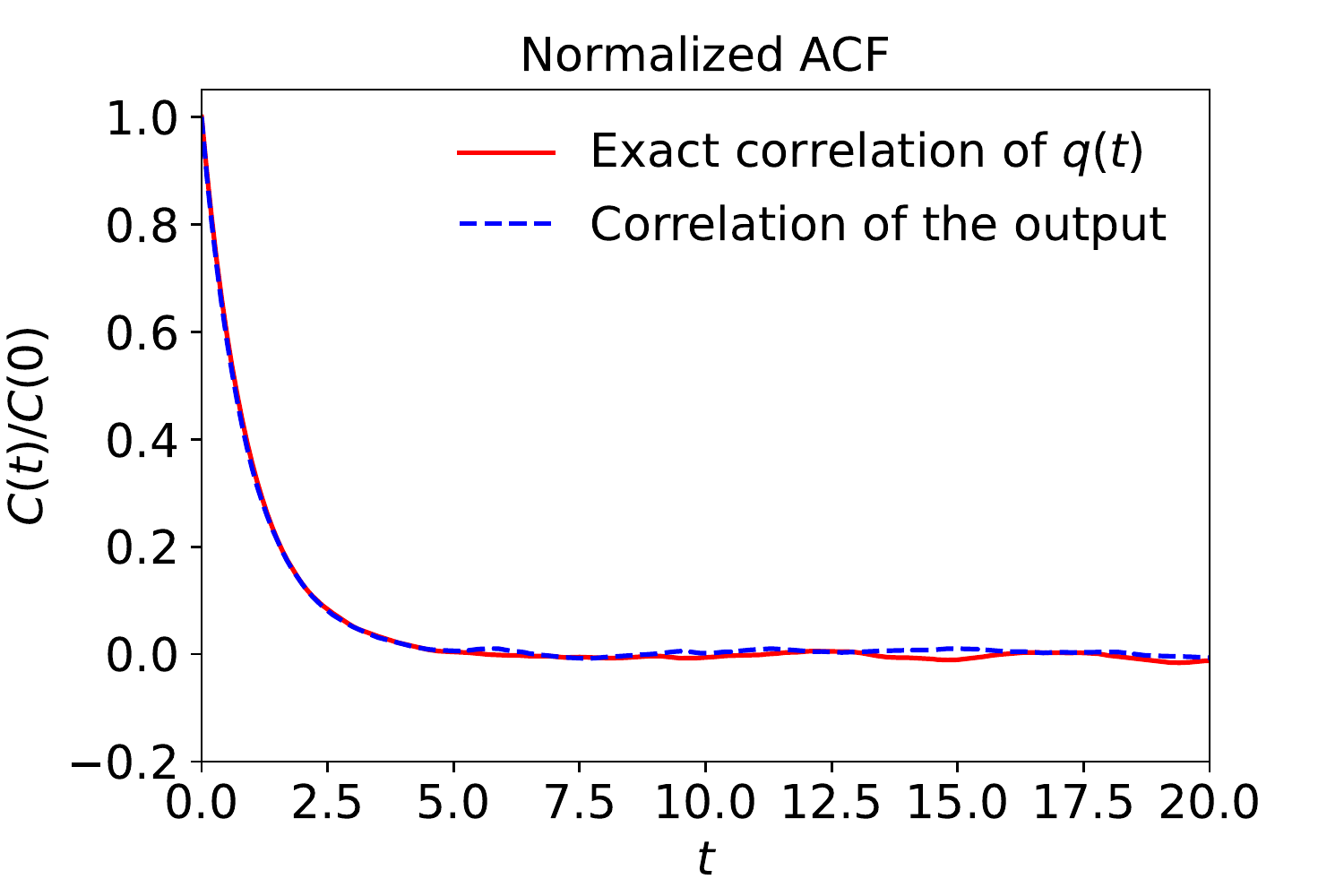}
}
\caption{Comparison of the dynamics of $q(t)$ generated by MD simulation and the \SINN model. The MD simulation results of the sample trajectories (Top Left) are obtained using the Euler--Maruyama scheme for \eqref{OU_process} with step size $\Delta t=10^{-3}$. The target processes are temporally coarse-grained sample trajectories of $q(t)$ with step size $dt=0.1$. Note that sample trajectories simulated by SINN thus have natural coarse-grained time scale $dt\gg\Delta t$. The output statistics (PDF and ACF) are evaluated by taking the ensemble average over 5000 SINN trajectories which are generated using a new set of Gaussian white noise as the SINN input.
}
\label{fig:ou_Sample_path_compare}
\end{figure}

\Cref{fig:ou_Sample_path_compare} clearly shows that the statistics of the OU process is faithfully reproduced by the trajectories simulated by SINN. Here we note that the time step $dt=0.1$ of SINN is much larger than the MD time step $\Delta t=10^{-3}$. This temporally coarse-grained feature of SINN is one of its main characteristics which makes it particularly useful in rare-event simulations as will be detailed in \Cref{sec:APP_reduced} and \ref{sec:asssement}).
The error plot shows that the generalization error gets smaller while remaining at the same magnitude with respect to the training error during the training process. This indicates that over-fitting does not happen. Since the correlation function and the equilibrium probability density {\em uniquely} determine a Gaussian process, we conclude that the stochastic process generated by \SINN faithfully represents the dynamics of the OU process. For timing results as given in \Cref{Tab:runtime}, a slightly different SINN architecture with 2 LSTM layers each with 5 hidden units is adopted to establish better comparability, while other parameters are exactly the same.

\subsubsection{Langevin Dynamics}

Consider the Langevin dynamics for an anharmonic oscillator:
\begin{align}\label{langevin_dynamics}
\begin{dcases}
    \dot q &=p,\\
    \dot p &=-V'(q)-\gamma p+\sigma \xi(t),
\end{dcases}
\end{align}
where $V(q)=\frac{\alpha}{2}q^2+\frac{\theta}{4}q^4$ is the Fermi-Pasta-Ulam (FPU) potential and $\xi(t)$ is Gaussian white noise. Parameters $\gamma$ and $\sigma$ are linked by the fluctuation-dissipation relation $\sigma=(2\gamma/\beta)^{1/2}$, where $\beta$ is proportional to the inverse of the thermodynamic temperature. Langevin dynamics for the FPU oscillator admits the Gibbs-form equilibrium distribution $\rho_{eq}\propto e^{-\beta H}$, where $H=\frac{p^2}{2}+V(q)$. The parameters $\alpha=\beta=\theta=\gamma=1$ and $\sigma=\sqrt{2}$ are chosen for numerical simulations. We use the same SINN model as in the OU process example with two LSTM layers and one hidden state to generate approximated dynamics for $q(t)$. Unlike the case for the OU process, here we do not have an analytical expression for the ACF of $q(t)$. Hence, an empirical estimate of the ACF of $q(t)$ is obtained from numerical solutions to \eqref{langevin_dynamics}. Since $q(t)$ is no longer a Gaussian process, its PDF and ACF {\em cannot} completely characterize its dynamics. To ensure the validity of the model, we add the stationary ACF for $q^2(t)$ as an extra training target for the neural network. The results as presented in \Cref{fig:fpu_Sample_path_compare} show that the \SINN architecture can well approximate the dynamics of the non-Gaussian process \eqref{langevin_dynamics}. Runtime benchmarks use the same architecture as in the OU process example.

\begin{figure}[t]
\centerline{
\includegraphics[height=5cm]{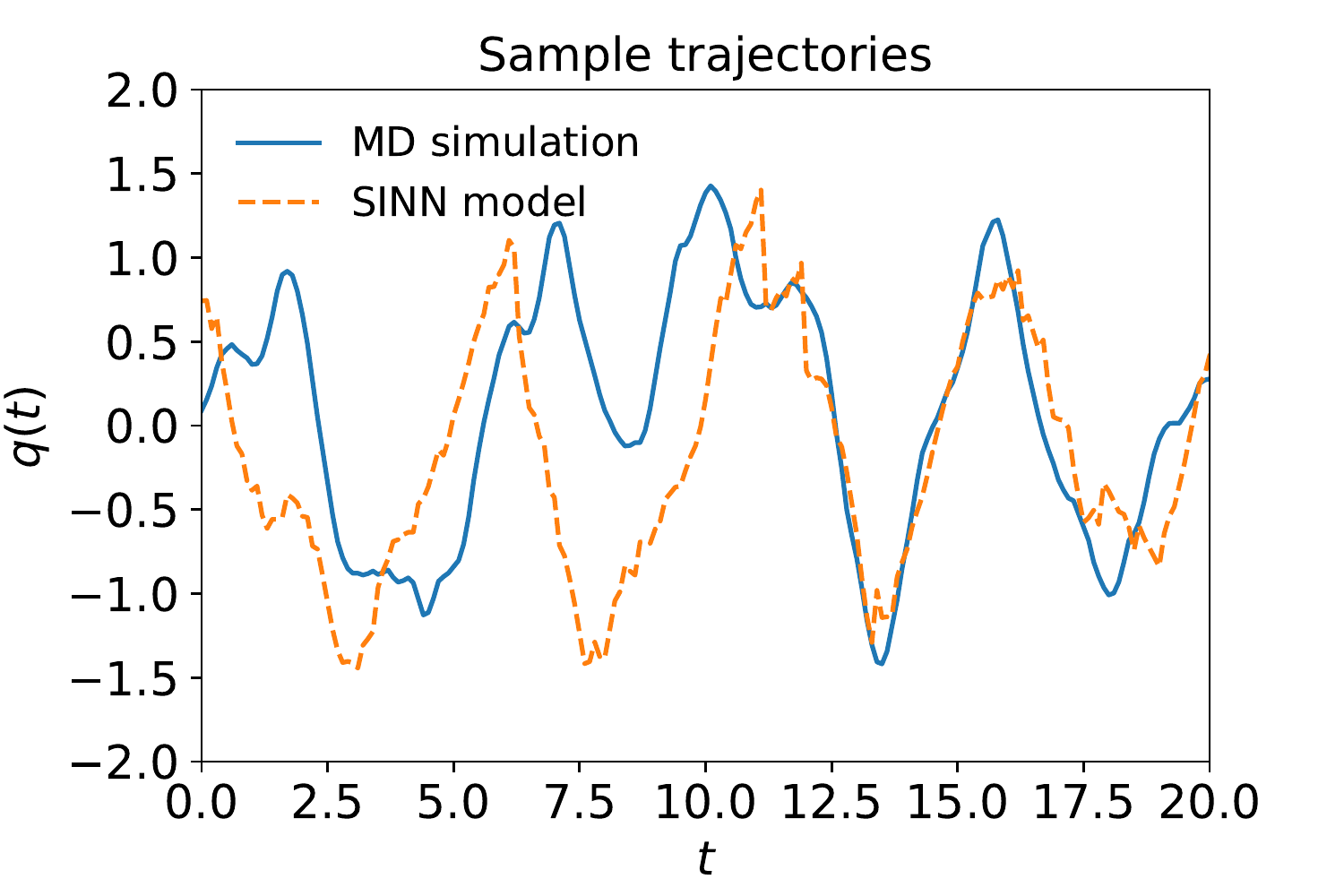}
\includegraphics[height=5cm]{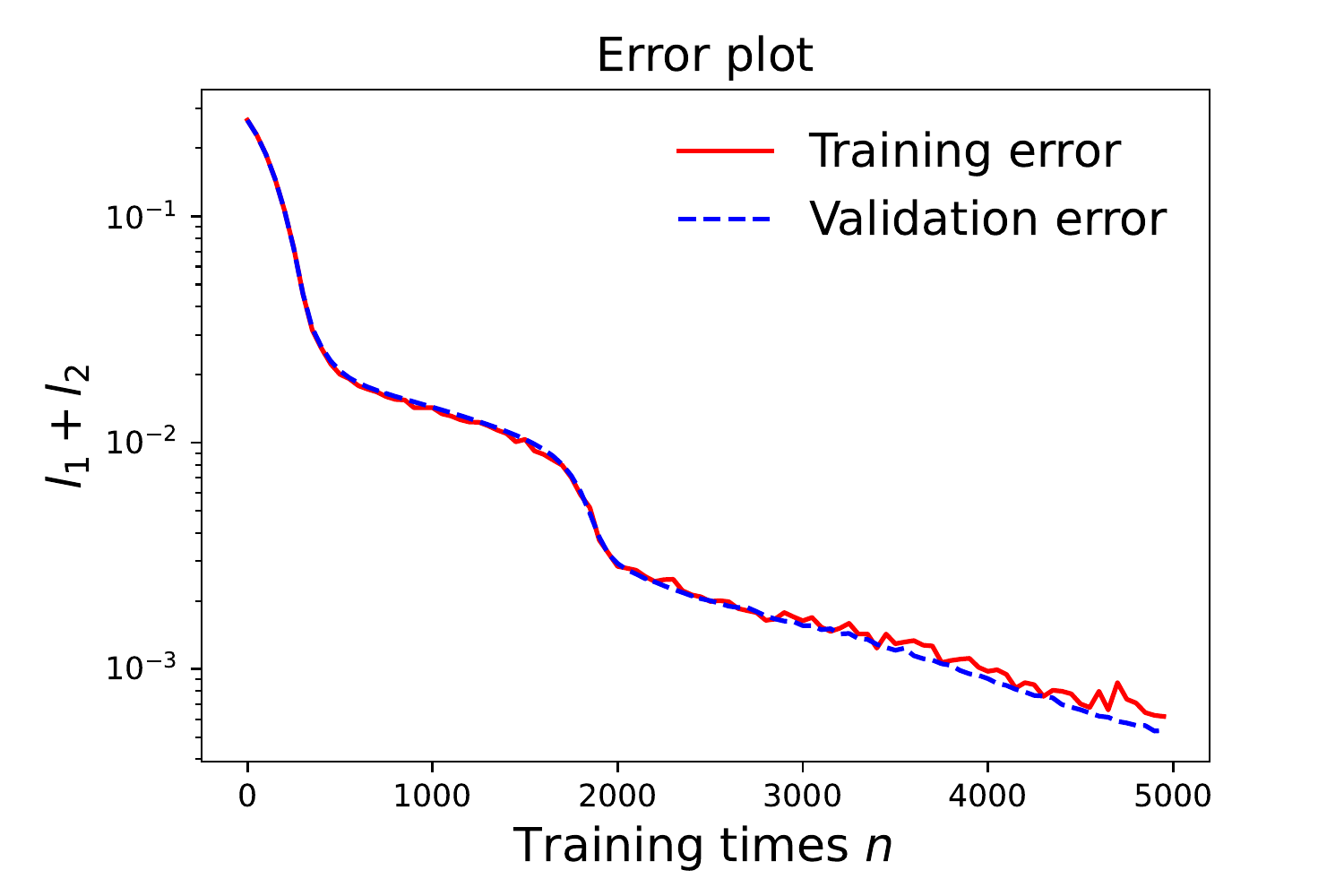}
}
\centerline{
\includegraphics[height=5cm]{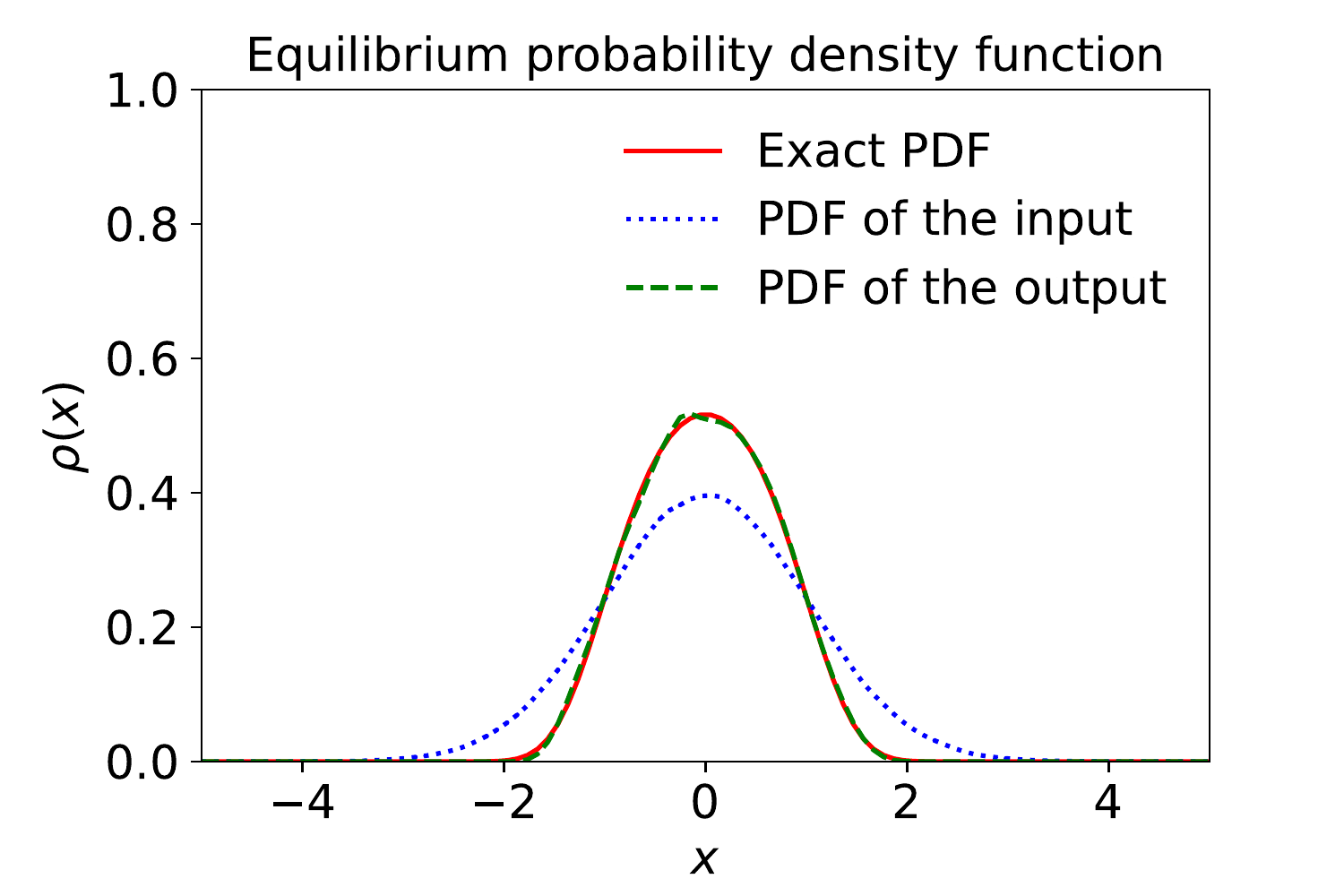}
\includegraphics[height=5cm]{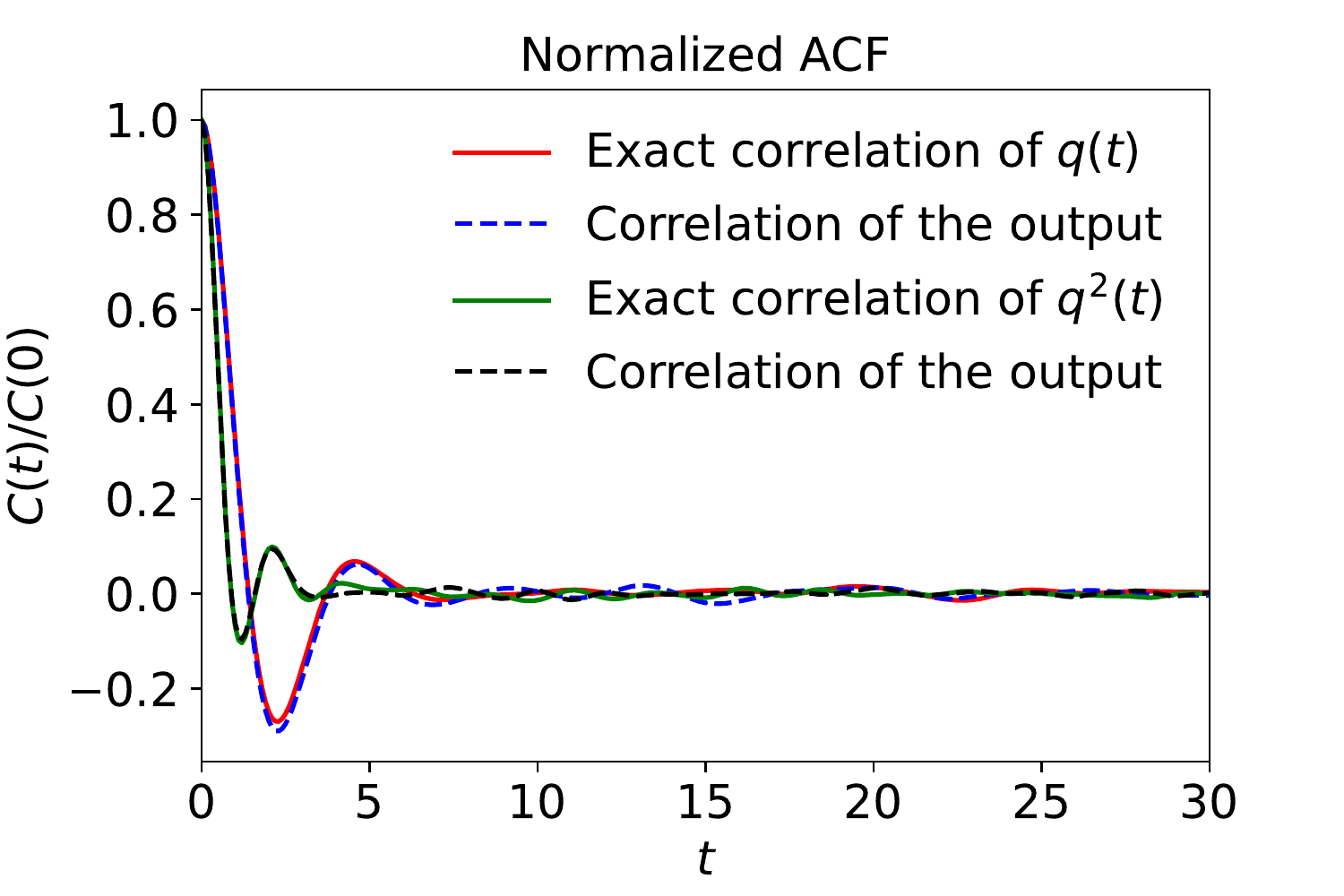}
}
\caption{Comparison of the dynamics of $q(t)$ generated by MD simulation and the \SINN model. The setting is exactly the same as the one used in \Cref{fig:ou_Sample_path_compare} except that we added the ACF $\langle q^2(t)q^2(0)\rangle$ into the total loss function.}
\label{fig:fpu_Sample_path_compare} 
\end{figure}

\paragraph{Remark 1.} Assuming that the Kramers--Moyal expansion \cite{risken1996fokker} for a stochastic process exists, one can continuously improve the approximations to the master equation corresponding to the stochastic process by progressively introducing higher-order moments. This is the reason why we added the ACF for $q^2(t)$ as an additional target to train the model for non-Gaussian dynamics. We note that higher-order moments such as $\langle q^4(t) q^4(0)\rangle$ can also be easily added into the total loss function. Due to this extensibility of SINN, it is fairly simple to include higher-order information so that the generated stochastic process can faithfully approximates that of the original stochastic process. We note that this is not generally guaranteed by established methods in stochastic modeling such as the transformed Karhunen-Lo\'eve or polynomial chaos expansion \cite{phoon2005simulation,sakamoto2002polynomial}.     

\paragraph{Remark 2.} As a data-driven framework, the {\em equation-free} feature of SINN renders it a desirable option for solving reduced-order modeling problems, where the effective dynamics for the low-dimensional resolved observables is generally hidden and has to be extracted from the underlying high-dimensional dynamical systems through coarse-graining procedures. Generally speaking, dimensionality reduction leads to memory effects in the reduced-order dynamics. We emphasize that these effects can
be captured by the LSTM modules of SINN. Langevin dynamics \eqref{langevin_dynamics} provides a good example for this. Here, the system as a whole is Markovian for the state variables $\{q(t),p(t)\}$. However, the reduced-order effective dynamics for the observable $q(t)$ alone is non-Markovian. Using the Mori-Zwanzig framework \cite{zhu2021effective,zhu2021hypoellipticity}, one can derive the following evolution equation for $q(t)$:
\begin{align}\label{sample_gle}
    \frac{\d}{\d t}q(t)=\Omega\,q(t)+\int_0^t\,K(t-s)\,q(s)\,\d s+ f(t),
\end{align}
where $\Omega$ is a modeling constant, $K(t)$ is the memory kernel, and $f(t)$ is the stochastic fluctuation force. In \eqref{sample_gle}, the memory effect is encoded by the convolution integral $\int_0^t\,K(t-s)\,q(s)\,\d s$, where $K(t)$ is generally unknown. \SINN provides a novel mechanism to quantify this complicated memory effect by `storing' it within the LSTM cell state vectors $c_t$, whose update mechanism can be learned through simulation data. The coarse-grained modeling problem considered in \Cref{sub_sec:CG} also provides an example to further illustrate this point.

\subsubsection{SDE Driven by Poisson White Noise}

In order to validate the assertion made in \Cref{section:model-architecture} that SINN can take i.i.d.\ non-Gaussian noise sequences and model stochastic process with support on $\R^+$, we consider the following SDE driven by Poisson white noise:
\begin{align}\label{SDE_poisson}
    \frac{\d x}{\d t}=-bx+\xi(t),
\end{align}
where $\xi(t)=\sum_{i=1}^{n(t)}z_i\delta(t-t_i)$ is a random sequence of $\delta$-pulses. This random pulse is generated as follows. For each time $t$, $n(t)$ satisfies Poisson distribution with probability $P\left(n(t)=n\right)=(\lambda t)^ne^{-\lambda t}/n!$, which counts the number of stimuli that arrive within interval $(0,t]$. $z_i$ are i.i.d.\ exponentially distributed random variables with probability density $\rho(z)=r e^{-rz}$ ($z>0$). For numerical simulations, we choose $b=r=1$ and $\lambda=2$.

SDE \eqref{SDE_poisson} describes the dynamical behavior of a system when randomly perturbed by external stimuli, which is commonly seen in many control systems in electronic engineering and physics. We want to use SINN to generate stochastic processes that recover the statistical features of  $x(t)$. Here we only use observation data $x(t)$ and assume the minimum prior knowledge of its generating mechanism. That is, as shown in \Cref{fig:Poisson_path_ACF}, $x(t)$ is a random jump process and has support on $\R^+$. For this case, we use i.i.d.\ exponentially distributed noise sequence as the input of SINN and train the neural network using the empirical PDF/ACF calculated from sample trajectories of $x(t)$.

\begin{figure}[t]
\centerline{
\includegraphics[height=4cm]{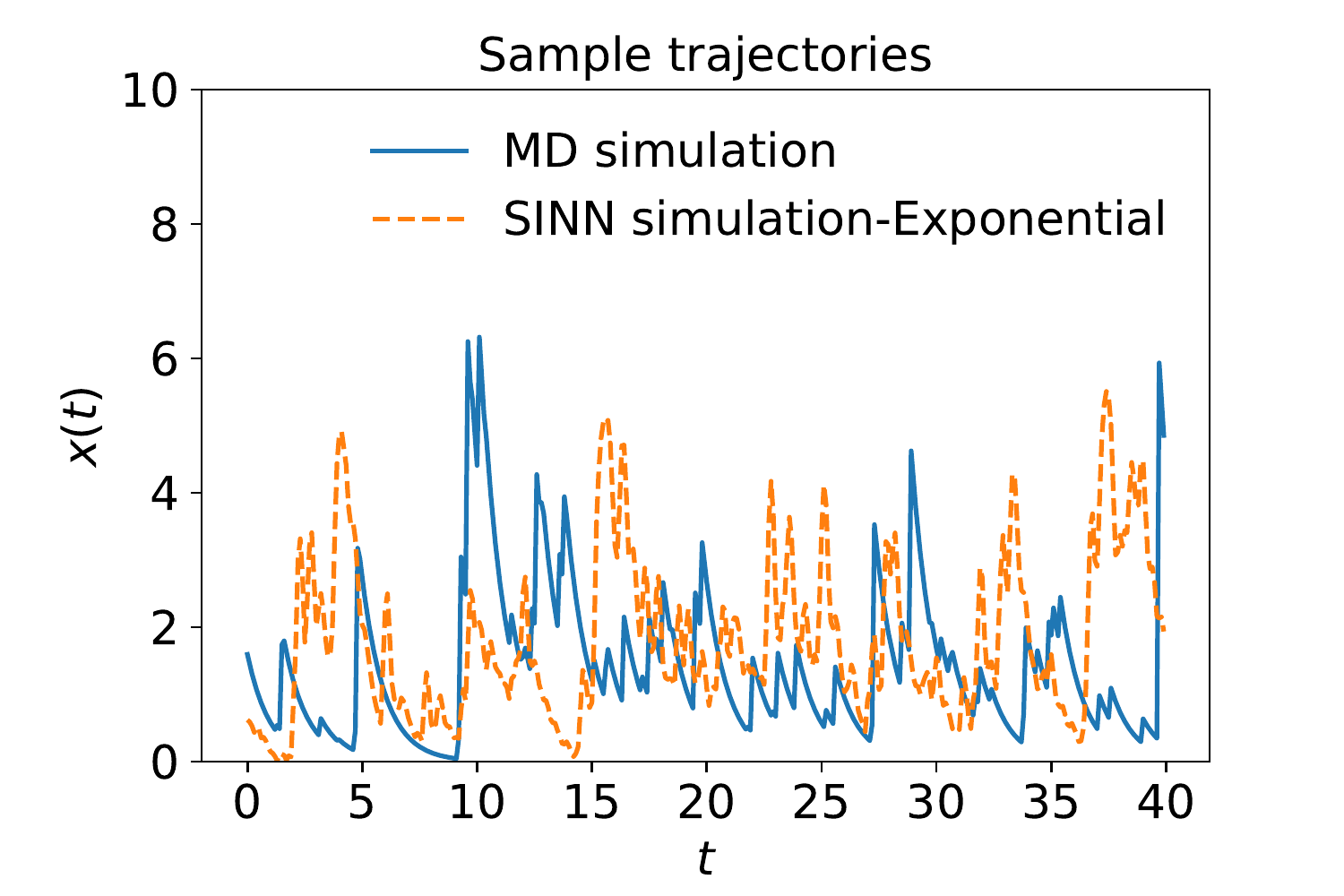}
\includegraphics[height=4cm]{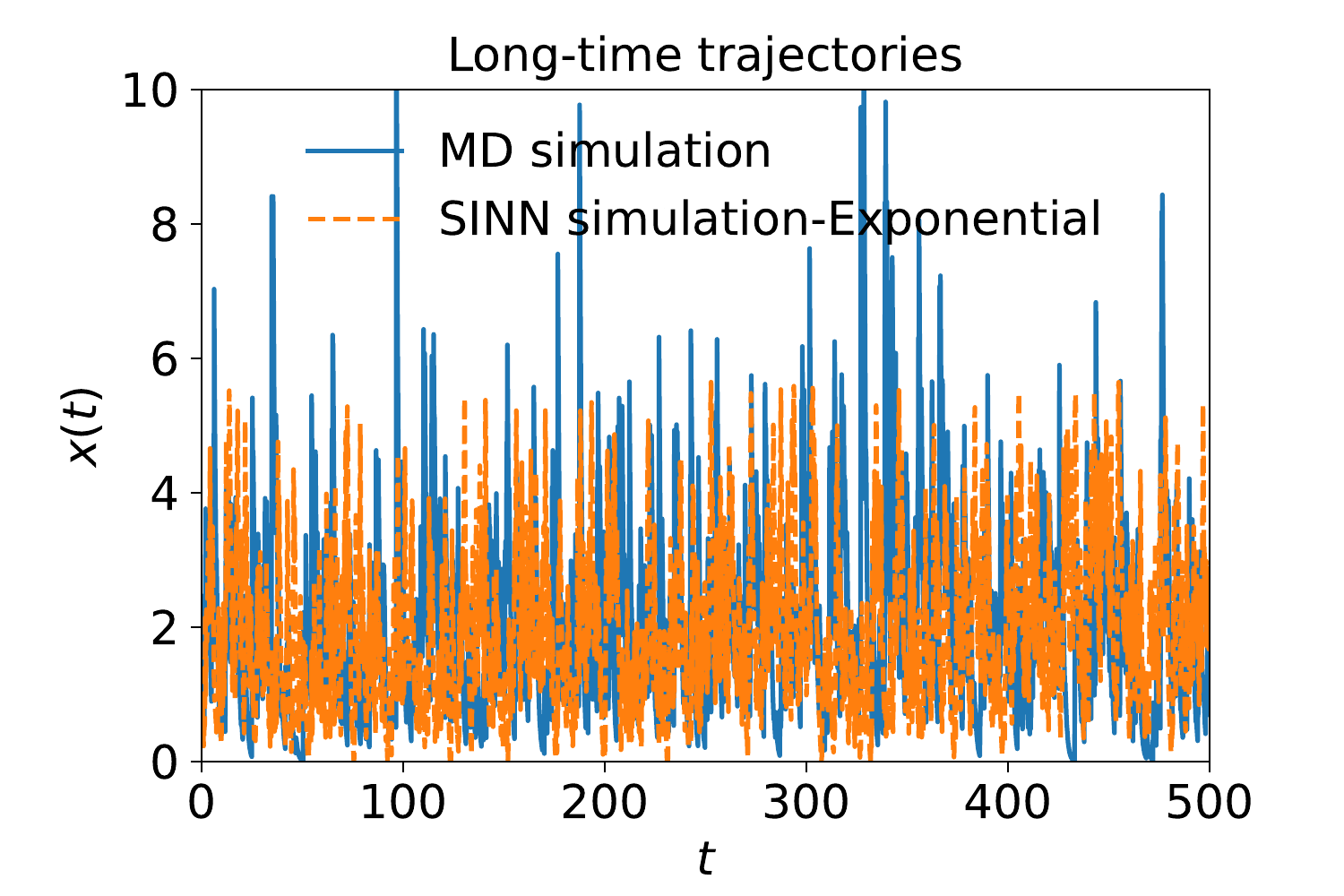}
\includegraphics[height=4cm]{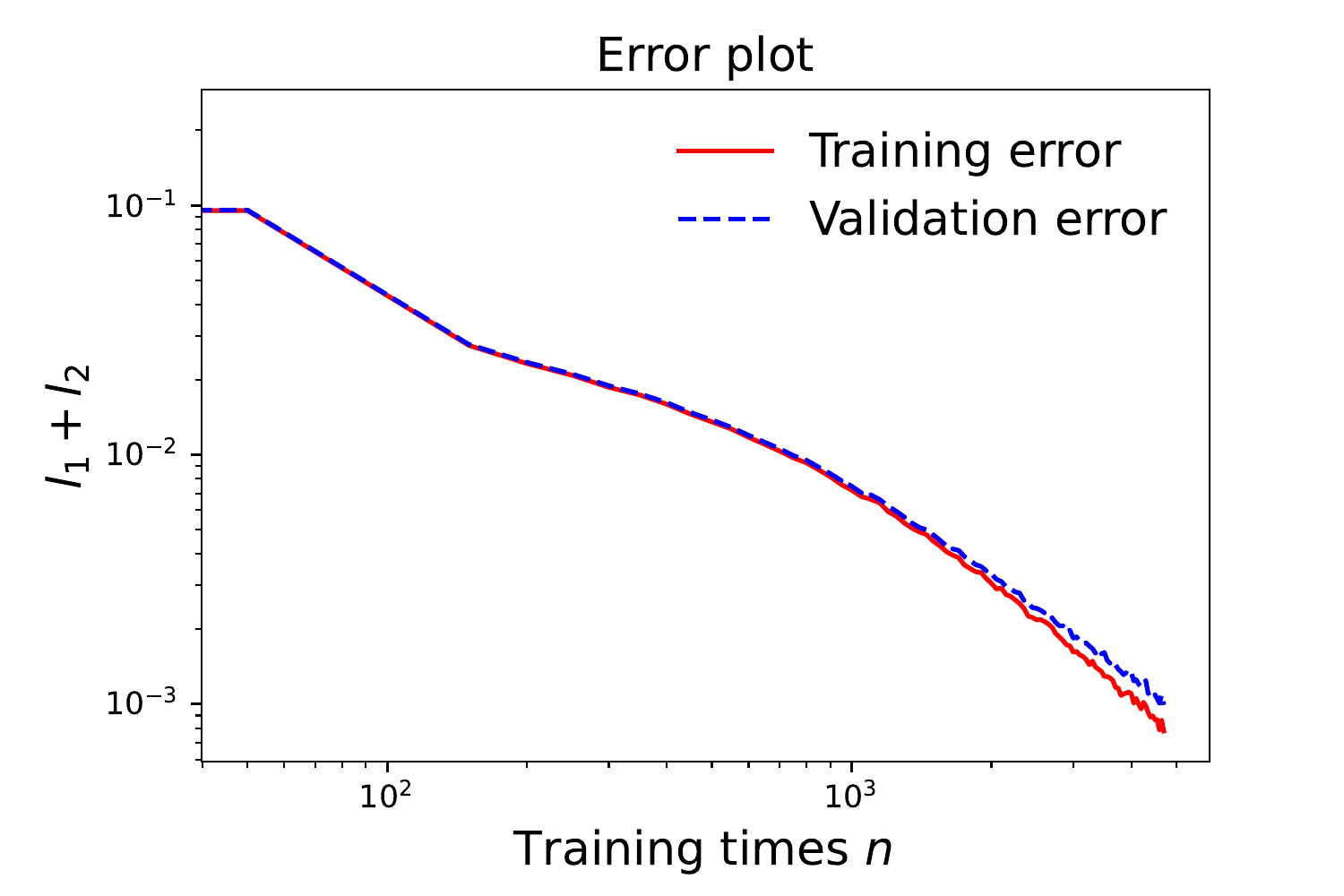}
}
\centerline{
\includegraphics[height=4cm]{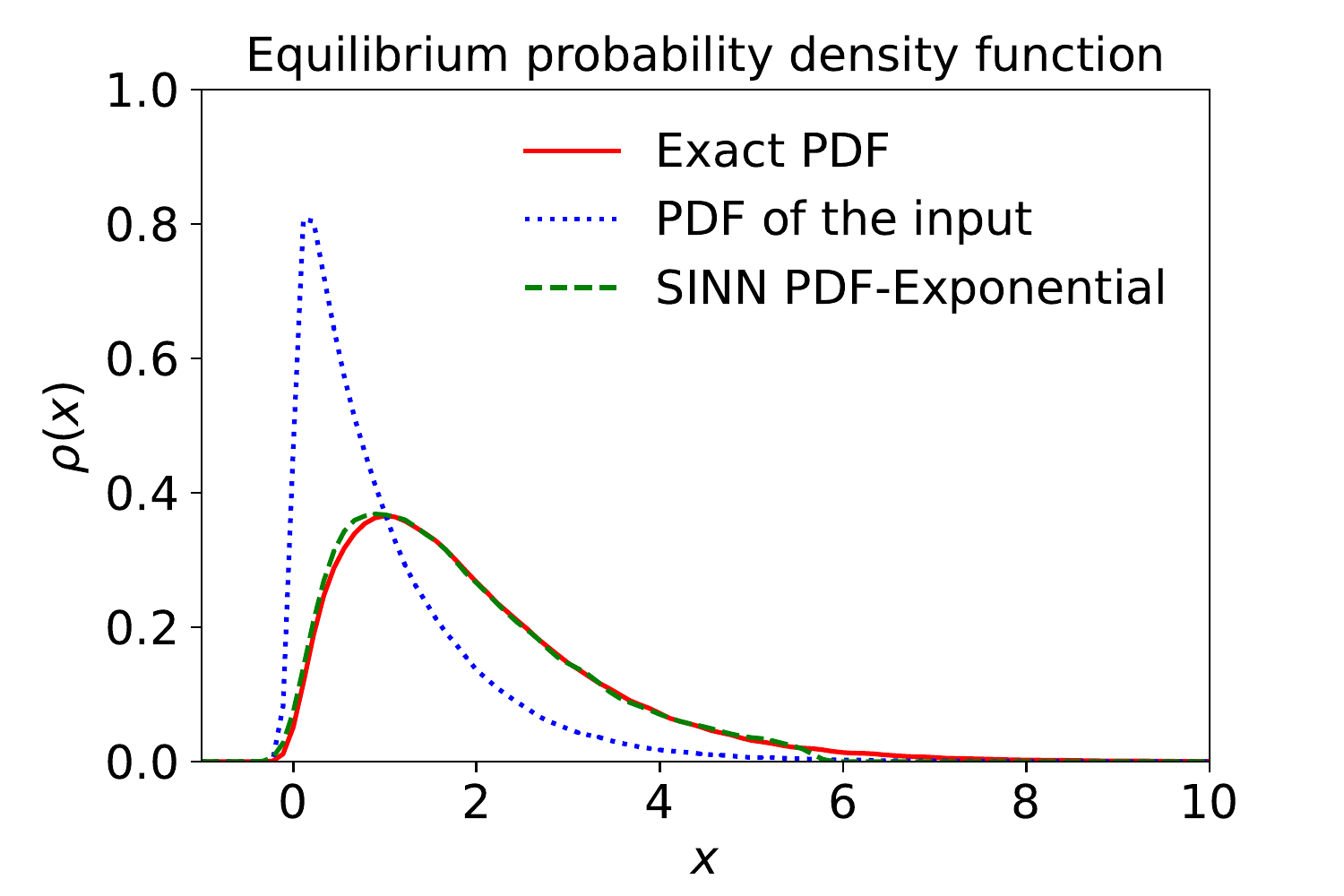}
\includegraphics[height=4cm]{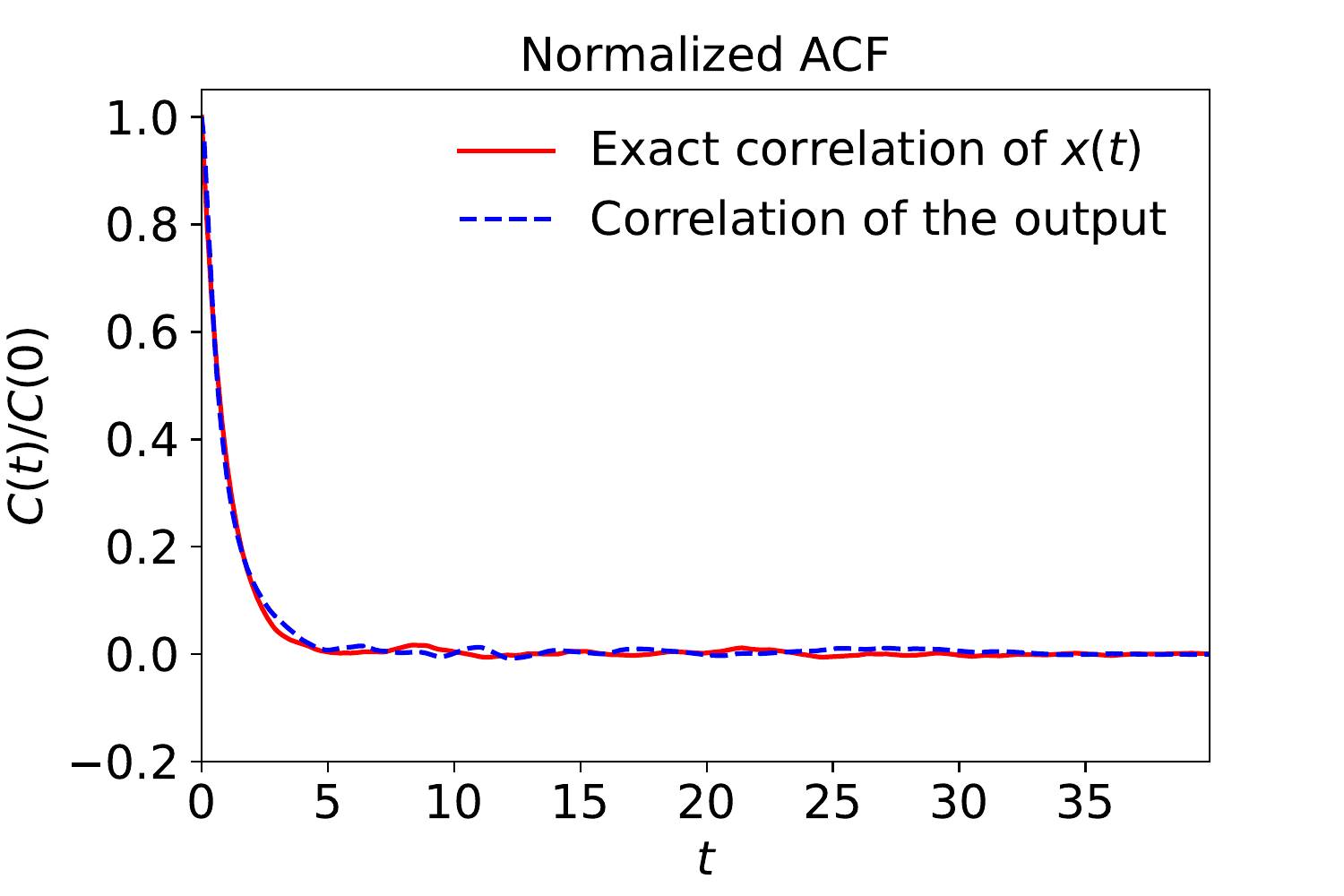}
\includegraphics[height=4cm]{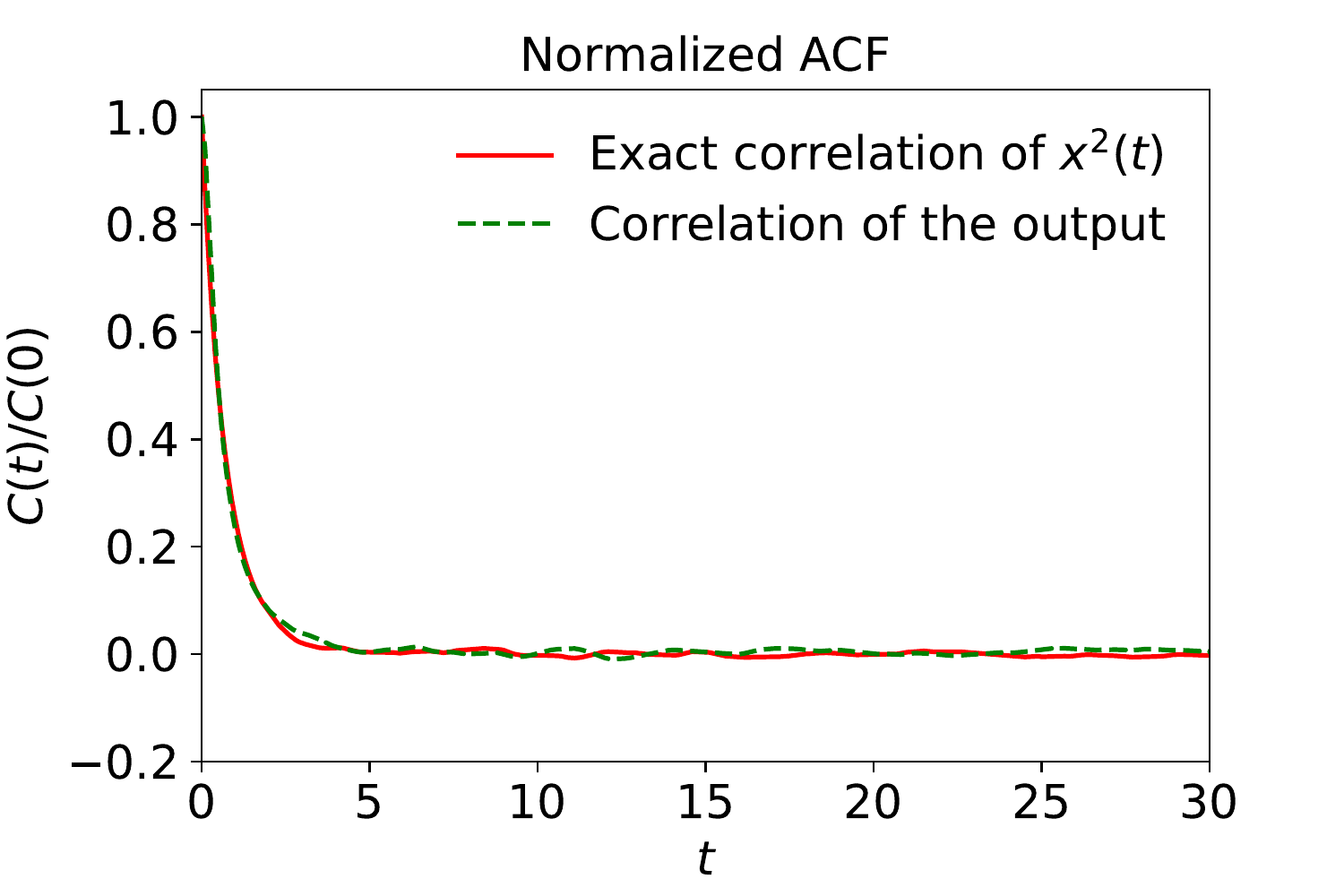}
}
\caption{Comparison of the dynamics of $x(t)$ generated by MD simulation and the \SINN model. The MD simulation results of the sample trajectories (Top Left) are obtained using a first-order finite difference scheme \cite{denisov2009generalized} with step-size $\Delta t=0.01$. The target processes are temporally coarse-grained sample trajectories of $x(t)$ with step size $dt=0.2$. The SINN model has 2 LSTM layers each with 5 hidden units. The input noise sequence satisfies the exponential distribution with $r=1$. Two statistical moments: $\langle x(t),x(0)\rangle$ and $\langle x^2(t)x^2(0)\rangle$, estimated with 400 simulated trajectories of $x(t)$, are used to compose the loss function. The output PDF and ACF are evaluated by taking the ensemble average over 5000 SINN trajectories.}
\label{fig:Poisson_path_ACF} 
\end{figure}

In \Cref{fig:Poisson_path_ACF}, we show that SINN can simulate trajectories of the jump stochastic process using exponentially distributed noise sequence as the input. Other important statistics such as the ACF and the equilibrium PDF are also compared with the ground truth that we obtained through the MD simulation of \eqref{SDE_poisson}. It is remarkable that SINN faithfully recovers the long tail of the equilibrium PDF of $x(t)$. Also, it is noted that we did not use the Poisson distribution to generate the random noise input since this part is assumed to be unknown when training the model. These results clearly demonstrate that SINN can model stochastic dynamics driven by non-Gaussian noise and that the input of the neural network can be accordingly adjusted to be non-Gaussian to accommodate the modeling needs. 

\paragraph{Remark 1.} A similar stochastic jump process was considered in the Neural jump SDE framework \cite{jiaNeuralJumpStochastic2020}. One obvious difference is that SINN seeks convergence in terms of statistics, \eg moments and PDF, whereas the Neural jump SDE targets path-wise convergence. Furthermore, the training process employed in SINN is closer to real-world applications in the sense that {\em only} the trajectory data of $x(t)$, possibly temporally coarse-grained, were used to train the neural network. Moreover, we assumed minimum prior knowledge on the generating mechanism of the observable $x(t)$. In other words, this is a case where the equation of motion for $x(t)$ is completely hidden. This fact will be more obvious with the coarse-grained modeling example considered in \Cref{sub_sec:CG}.

\paragraph{Remark 2.} It is shown in \cite{denisov2009generalized} for SDE \eqref{SDE_poisson} that the governing equation for the transition probability $P(x,t)$ satisfies a generalized Fokker--Planck equation:
\begin{align}\label{GFPE}
    \partial_t P(x,t)=b\partial_xP(x,t)-\lambda P(x,t)+\lambda\int_{-\infty}^{+\infty}r e^{-r(x-y)}\theta(x-y)\,P(y,t)\d y,
\end{align}
where $\theta$ is the Heaviside step function. This equation is not given in the form of the Kramers--Moyal expansion. Nevertheless, since the moments of various orders for $x(t)$ still exist, we expect a high-order Kramers--Moyal expansion to yield a good approximation to \eqref{GFPE}. This is why we added the high-order correlation function $\langle x^2(t)x^2(0)\rangle$ in the loss function when training SINN. It is reasonable to expect that training against more moments will help SINN to generate stochastic trajectories closer to the original process $x(t)$. While we believe SINN can model several common types of stochastic processes, some particular processes such as the L\'evy processes cannot be modeled by the current version of SINN simply because its moments do not exist.\footnote{We thank one of the anonymous reviewers for pointing this out.} A possible remedy for this case might be using instead fractional moments \cite{kuhn2017existence} to define the loss function. This, of course, awaits further investigation.

\section{Applications}
\label{sec:applications}

In this section, we use practical examples to demonstrate the capabilities of SINN to discover hidden stochastic dynamics in given non-Gaussian, non-Markovian stochastic systems. We further verify whether the resulting stochastic model has long-time predictability and numerical stability. The applications we consider here are the coarse-grained modeling of a molecular system and the study of transition dynamics and rare events. Using these examples, we show that SINN has several computational and modeling advantages over the traditional stochastic modeling methods.

\subsection{Coarse-Grained Modeling of a Molecular System}
\label{sub_sec:CG}

\begin{figure}[b]
\vspace{0.5cm}
\centerline{
\includegraphics[height=2cm]{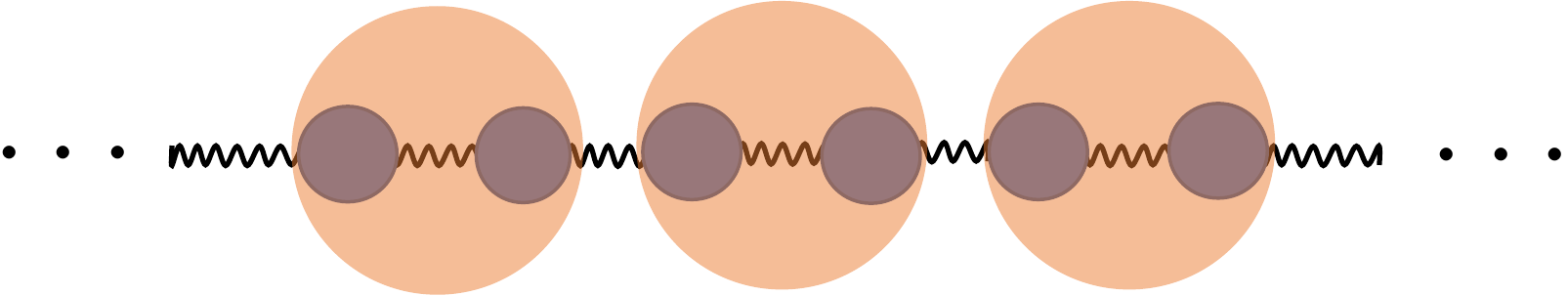}
}
\vspace{0.5cm}
\caption{Schematic illustration of the coarse-graining scheme of a 1D particle chain where two neighboring particles are coarse-grained into a large CG particle with position $Q_i=(q_{2i-1}+q_{2i})/2$ and momentum $P_i=p_{2i-1}+p_{2i}$.}
\label{fig:CG_chain} 
\end{figure}

Coarse-grained modeling of complex molecular systems is an important research area in statistical mechanics and molecular dynamics \cite{li2010coarse,chu2018asymptotic}. The goal is to construct effective dynamics for the coarse-grained (CG) particles from the original high-resolution molecular system. If the effective dynamics capture the core features that are sufficient to understand the important physics of the original system, then one only needs to solve the coarse-grained, low-dimensional effective dynamical system so that the overall computational cost can be greatly reduced. However, analytically solving the coarse-graining problem for realistic molecular systems is virtually intractable. Hence, data-driven approaches are commonly adopted. Classical coarse-graining methods such as those utilizing the Mori--Zwanzig formulation or the GLE extract important statistics for the CG particles from simulation data and build an integro-differential stochastic differential equation, as given in \eqref{sample_gle}, to describe the effective dynamics. This is very close to what we do with SINN, except that the equation is now replaced by a neural network. To test how well SINN models the dynamics of a CG particle, we consider a chain of $N$ particles interacting with each other according to the following Langevin dynamics \cite{zhu2021effective}:
\begin{equation}\label{langevin_d_chain}
\begin{aligned}
    \frac{\d r_j}{\d t}&=\frac{1}{m}(p_j-p_{j-1}),\\
    \frac{\d p_j}{\d t}&=\frac{\partial V(r_{j+1})}{\partial r_{j+1}}-\frac{\partial V(r_j)}{\partial r_j}-\frac{\gamma}{m}p_j+\sigma\xi(t),
\end{aligned}
\end{equation}
where $\{r_i,p_i\}_{i=1}^{N}$ are non-canonical coordinates for the dynamics. Here, $r_j=q_j-q_{j-1}$ represents the displacement between two neighboring particles relative to their equilibrium positions and $p_j$ is the momentum of the $j$-th particle. The two endpoints of the chain are assumed to be fixed, \ie $q_0 = q_{N+1} = 0$, and the chain has $N=100$ particles. The model parameters are chosen to be the same as those used in studying \eqref{langevin_dynamics}. That is, we have an FPU-type interaction potential $V(r)=\frac{\alpha}{2}r^2+\frac{\theta}{4}r^4$ with $\alpha=\beta=\theta=\gamma=1$ and $\sigma=(2\gamma/\beta)^{1/2}=\sqrt{2}$. 

For this chain dynamics, we consider the coarse-grained scheme illustrated in \Cref{fig:CG_chain} and focus on the effective dynamics of the center CG particle with position $Q_{25}=(q_{49}+q_{50})/2$ relative to its equilibrium. We note that $Q_{25}$ can be explicitly expressed in terms of $r$-coordinates as:
\begin{equation}
    Q_{25}=\frac{q_{49}+q_{50}}{2}=\sum_{i=1}^{49}r_i+\frac{r_{50}}{2}.
\end{equation}
Traditionally, a GLE of form \eqref{gle} can be used as an ansatz to approximate the stochastic dynamics for $Q_{25}$ \cite{lei2016data}. Here, we use SINN instead to do the modeling. The simulation results and calculation details are provided in \Cref{fig:CG_path_ACF}. Although the dynamics for $Q_{25}$ is completely hidden, our simulation results indicate that with only a limited amount of 400 temporally coarse-grained trajectories, SINN can act as an accurate surrogate model for $Q_{25}$. Detailed runtime statistics can be found in \Cref{Tab:runtime}. We particularly note the runtime savings when comparing with the Euler--Maruyama scheme. More explanations can be found in \Cref{sec:asssement}.  

\begin{figure}[t]
\centerline{
\includegraphics[height=4cm]{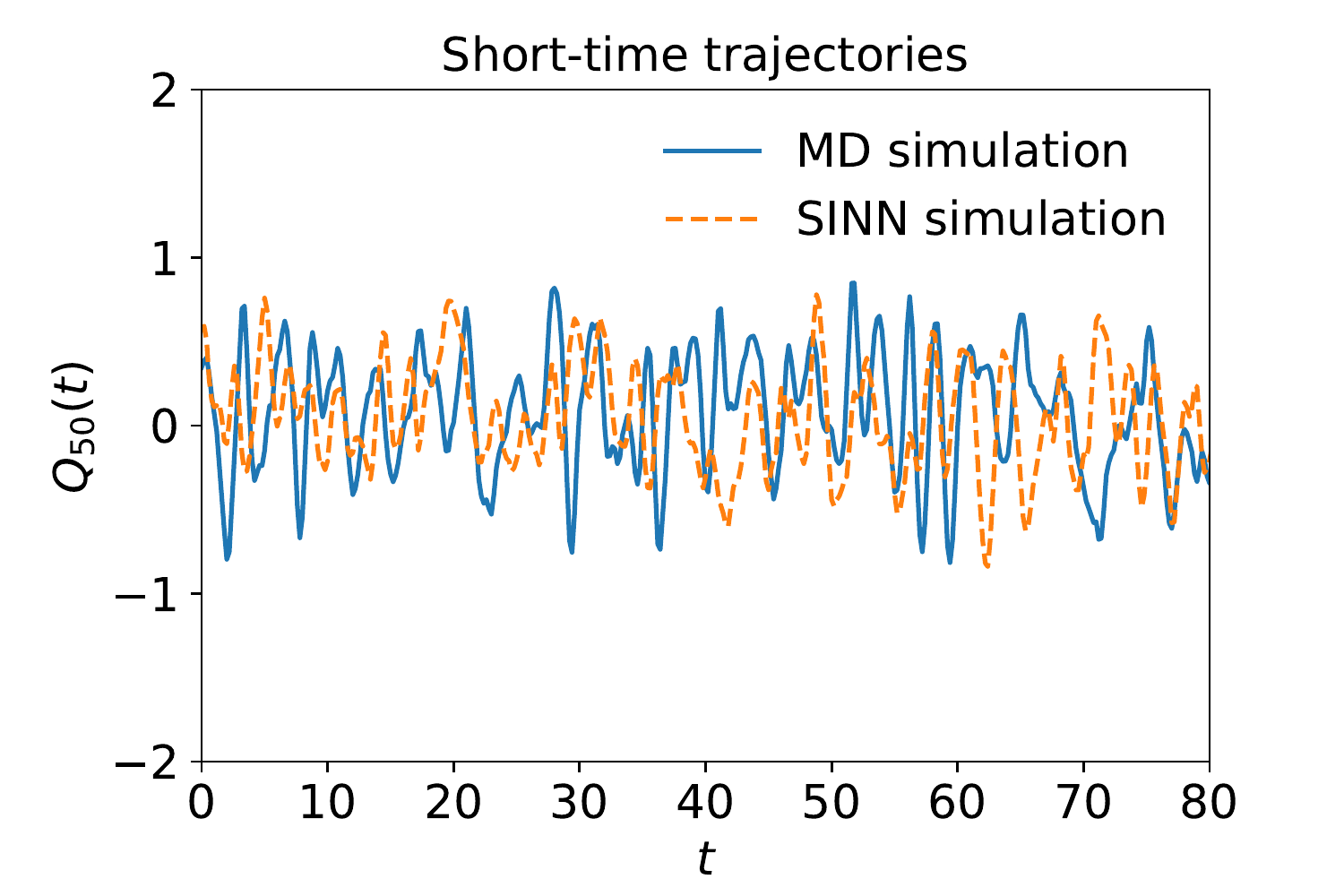}
\includegraphics[height=4cm]{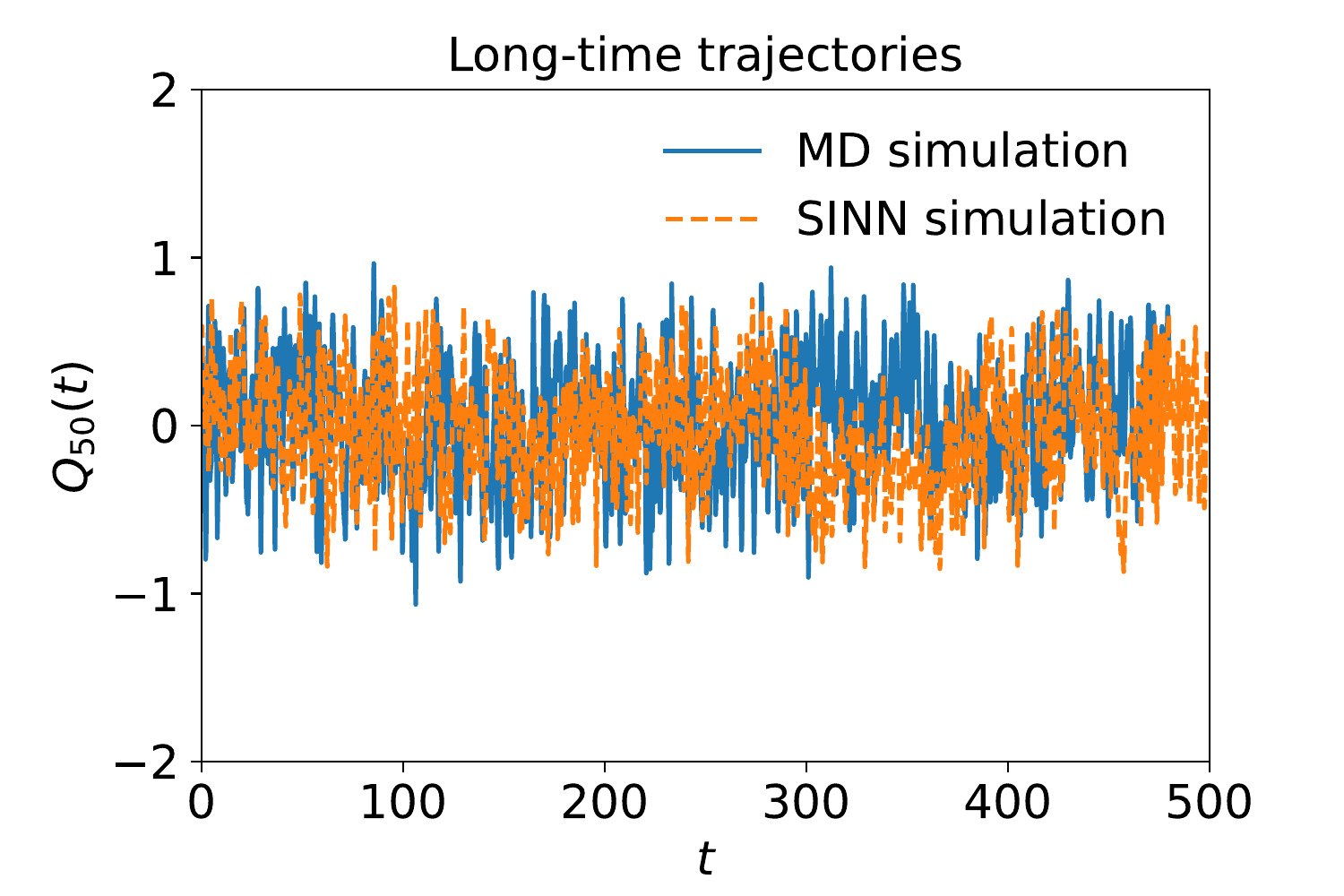}
\includegraphics[height=4cm]{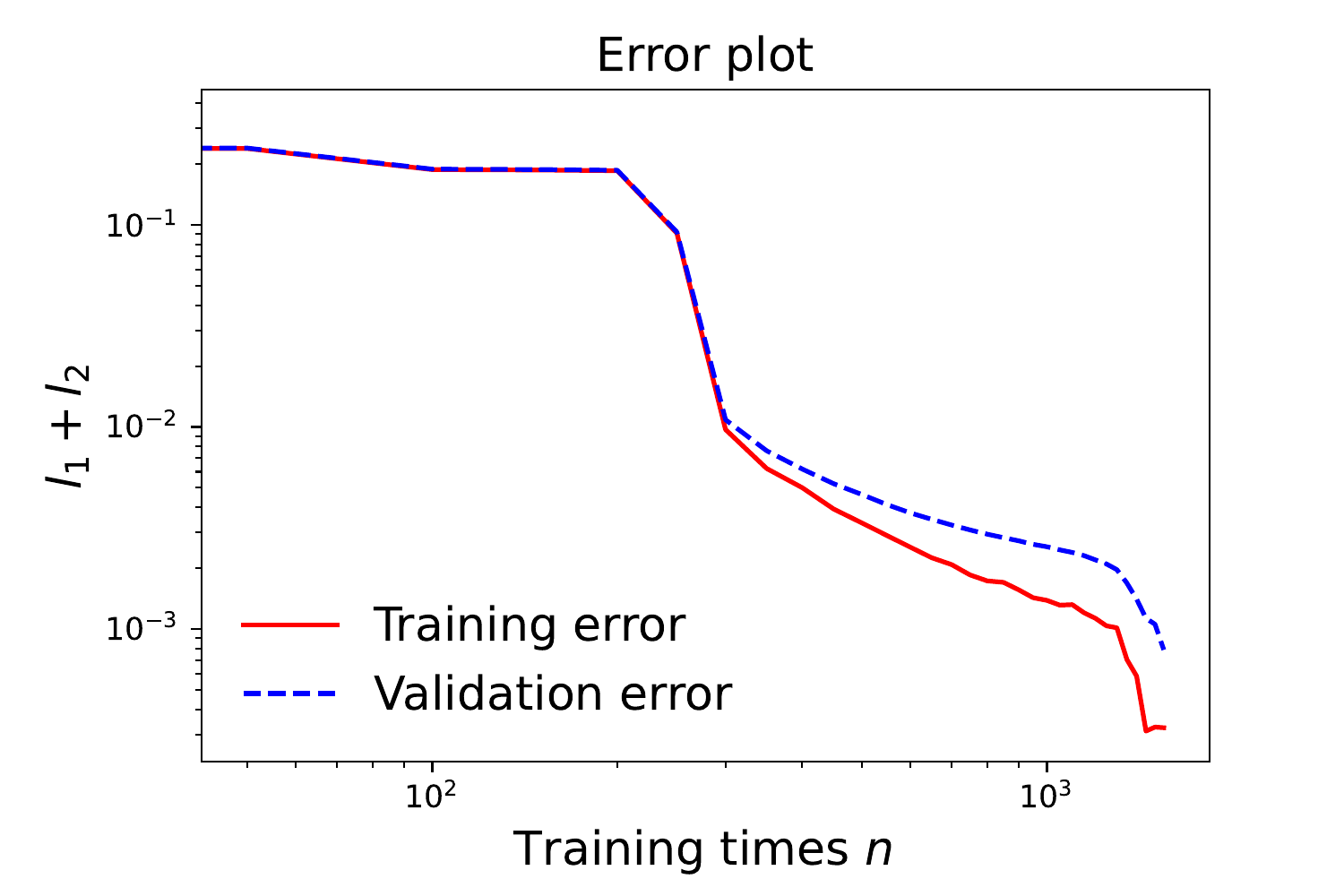}
}
\centerline{
\includegraphics[height=4cm]{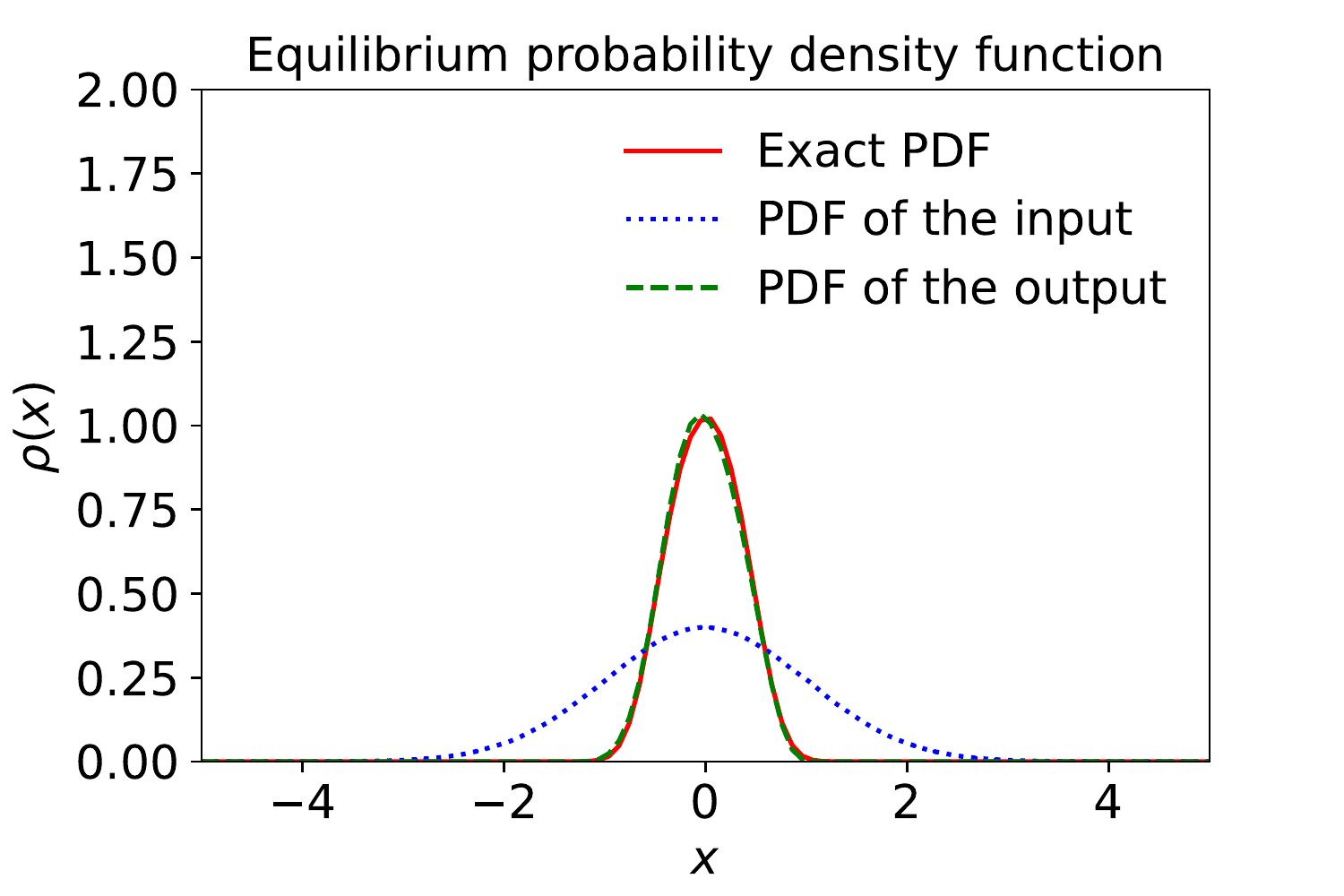}
\includegraphics[height=4cm]{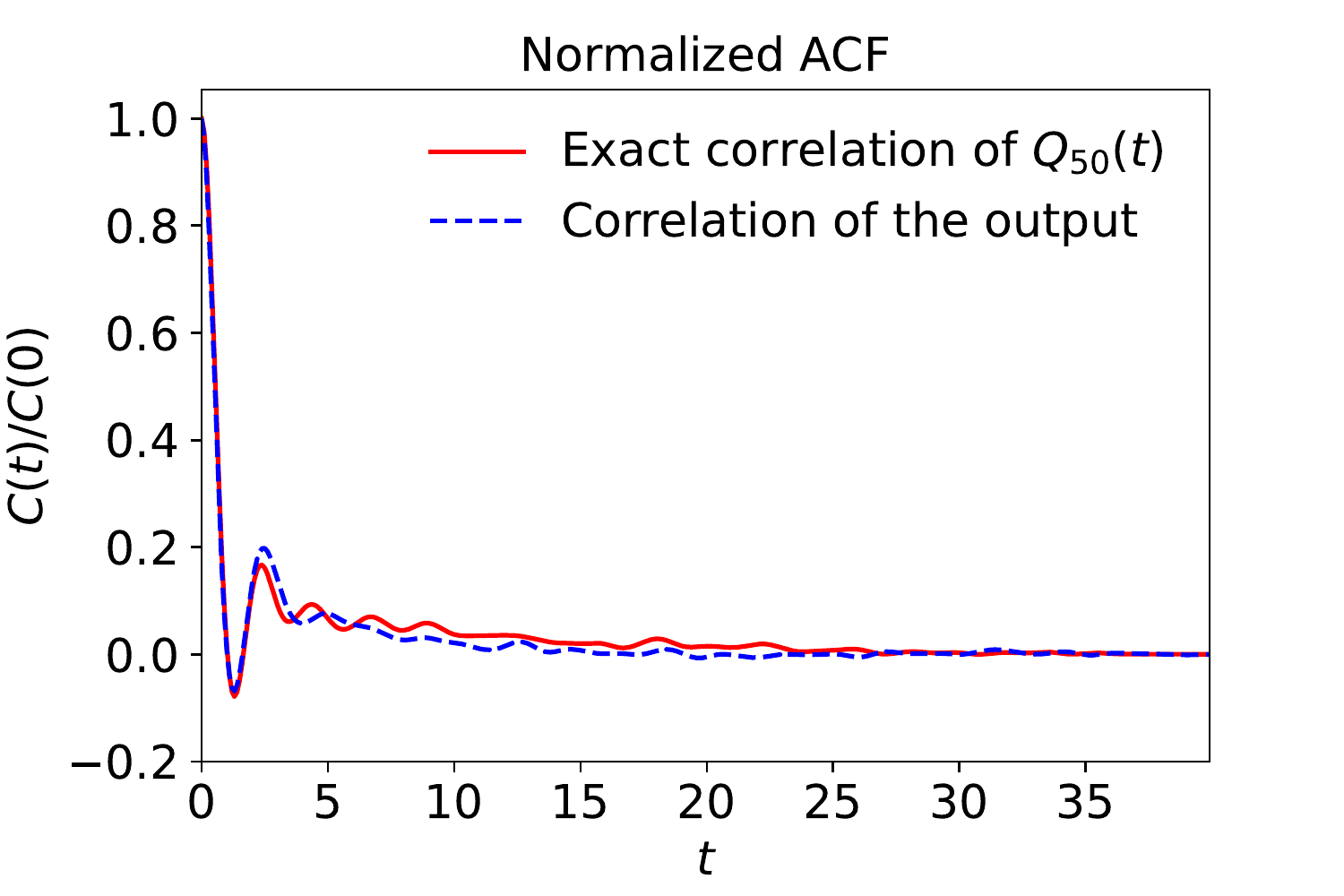}
\includegraphics[height=4cm]{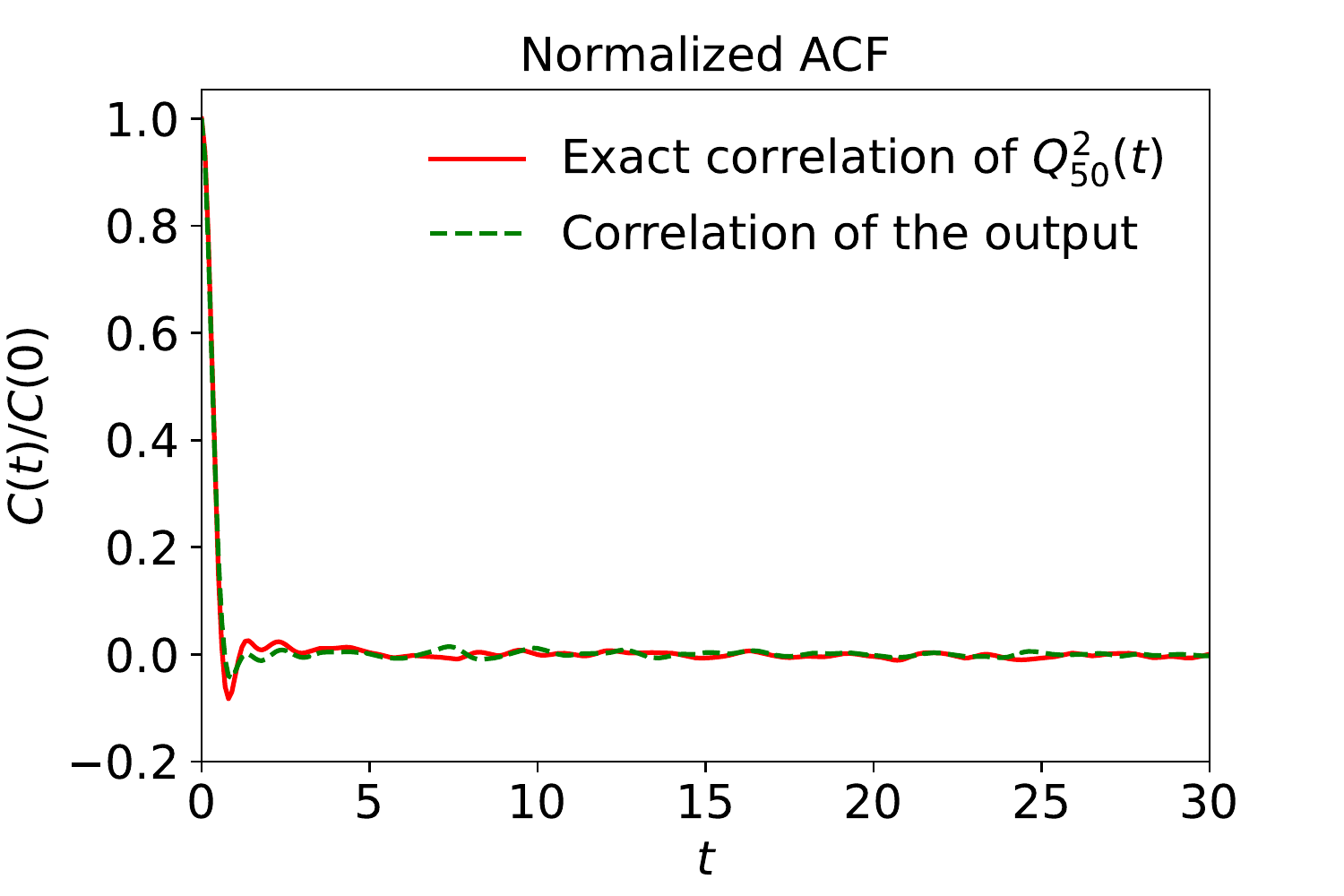}
}
\caption{Comparison of the dynamics of the coarse-grained particle $Q_{50}(t)$ generated by MD simulations and \SINN. Sample trajectories from MD simulation (Top Left) are obtained using the Euler--Maruyama scheme for \eqref{langevin_d_chain} with a step size $\Delta t=10^{-3}$. The training data for SINN are the temporally coarse-grained sample trajectories of $Q_{50}(t)$ with a step size $dt=0.1$. The training setup is exactly the same as the one used in the FPU example \eqref{langevin_dynamics}, except that the SINN model here uses 2 LSTM layers and 5 hidden units per layer.}
\label{fig:CG_path_ACF} 
\end{figure}

\subsection{Transition Dynamics Modeling and Rare-event Simulations}
\label{sec:APP_reduced}

\begin{figure}[t]
\centerline{
\includegraphics[height=6cm]{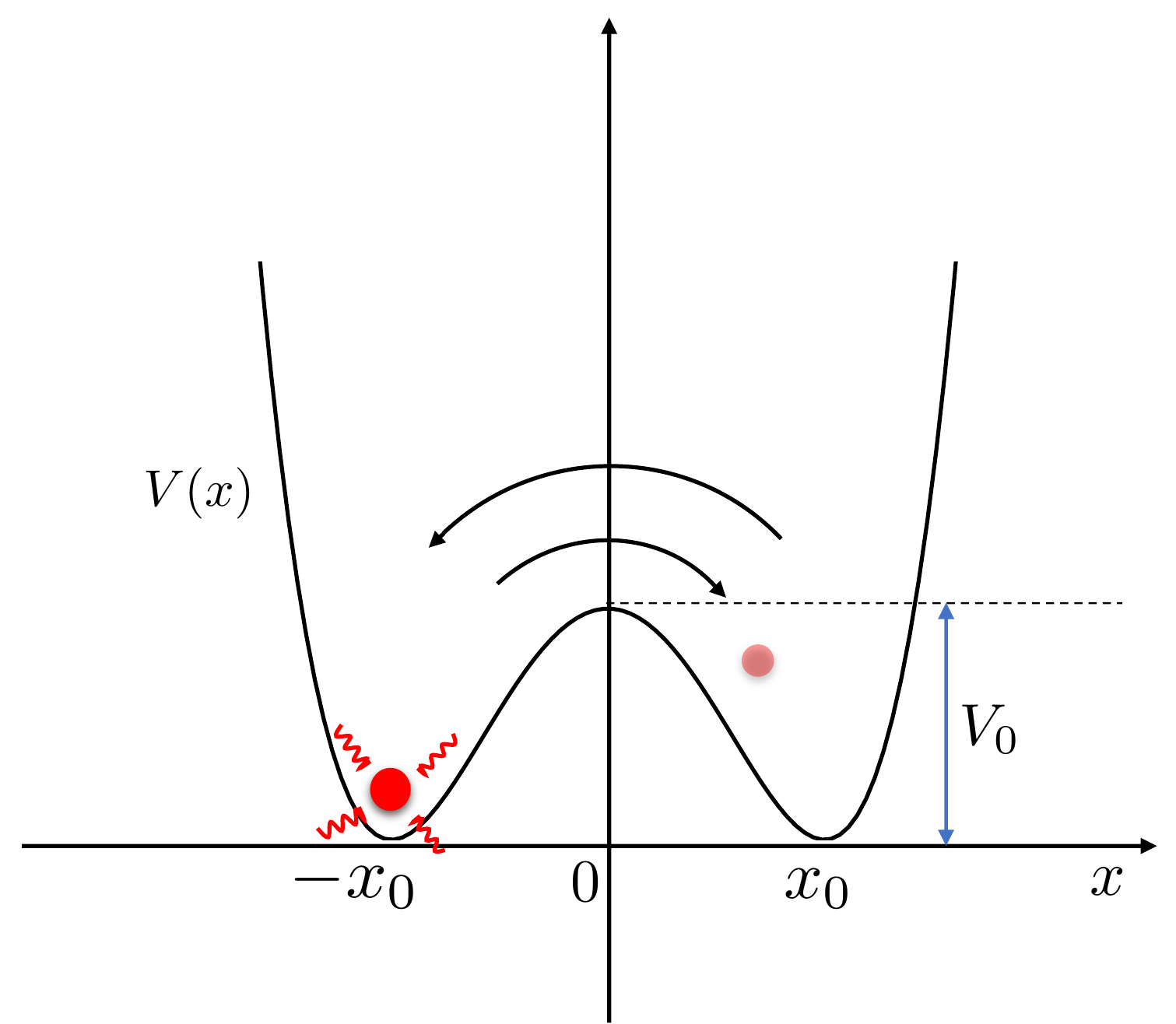}
\hspace{1cm}
\includegraphics[height=5.5cm]{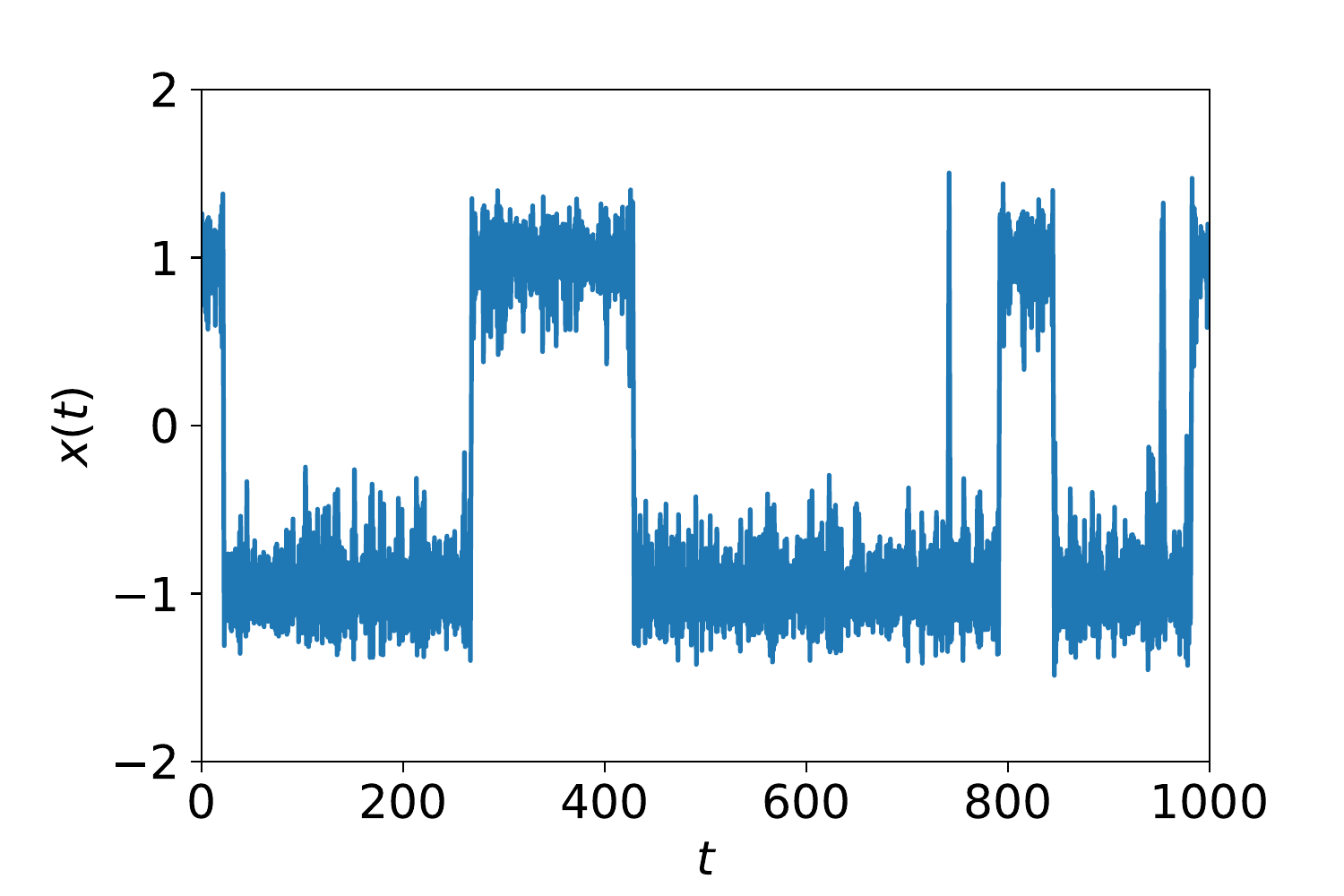}
}
\caption{(Left) Schematic illustration of the hopping events between two states for the reaction coordinate $x(t)$. Through thermodynamic interactions with the environment, an imaginary particle may gain enough energy to cross the energy barrier and make a transition from one well to the other. (Right) Sample trajectory of $x(t)$ simulated using \eqref{langevin_doublewell_dynamics}. The modeling parameters are chosen to be $V_0=5$, $x_0=1$, and $\beta=1$. One can see that hopping between these two states is a rare event for the given height of energy barrier.}
\label{fig:double_well} 
\end{figure}

The motivation for modeling transition dynamics stems from the need to calculate the reaction rate of a chemical reaction. While this is an important problem, determining the reaction rate using simulation trajectories becomes extremely difficult when the reaction is a rare event. This is because it takes a long time, which often exceeds
the typical time scale of MD simulations, to adequately observe the reaction when it happens with a very low probability.
As a rare event example, we consider a toy problem as illustrated in \Cref{fig:double_well} for the transition dynamics given by the Langevin equation for a double-well system \cite{liu2015equation,harlim2021machine}:
\begin{align}\label{langevin_doublewell_dynamics}
\begin{dcases}
    \dot {x}&=p,\\
    \dot p&=-V'(x)-\gamma p+\sigma \xi(t),
\end{dcases}
\end{align}
where $V(x)=V_0\left[1-(x/x_0)^2\right]^2$ is a symmetric double-well potential with depth $V_0$ and two basins around $x_0$ and $-x_0$. The two wells correspond to two states along the reaction coordinate $x(t)$, which, for example, can be the backbone dihedral angle of $n$-butane in the isomerization process \cite{geissler1999kinetic}.
We aim to use SINN to construct a reduced-order model for the reaction dynamics based on the {\em short-time} simulation data of $x(t)$. Once the effective model is built, one may use it as a surrogate model to perform Monte Carlo simulations or to generate {\em long-time} trajectories of $x(t)$ with larger time step sizes. From this, the rate of transition can be calculated in an economical manner. In practical applications, one may obtain sample trajectories for the reaction coordinate $x(t)$ from large-scale MD simulations that model the whole physical system. Here, we use a toy model \eqref{langevin_doublewell_dynamics} to quickly generate sample trajectories of $x(t)$ for the purpose of demonstrating the learning capability of SINN and its validity in simulating rare events.

Since the transition dynamics is more complicated than the examples considered in \Cref{sec:test_cases}, we employ a \SINN model with 2 LSTM layers and 25 hidden units per layer. This structure may not be optimal in terms of complexity or efficiency, but is found to be sufficient for our study. To train the neural network, we solve  \eqref{langevin_doublewell_dynamics} using the Euler--Maruyama scheme with step size $\Delta t=10^{-3}$ and obtain 400 sample trajectories. Temporally coarse-grained trajectories are obtained by subsampling the trajectories with a fixed interval of $dt=0.2$ and used as the training data. After the temporal subsampling, each trajectory contains 400 points uniformly spanning the interval $[0,80]$. The equilibrium PDFs and ACFs for $x(t)$ and $x^2(t)$ are used to construct the loss function. During the training process, occasionally the optimization gets stuck at local optima. In this case, we simply need to retrain the model from random initialization. The numerical results are presented in \Cref{fig:p_Sample_path_compare}. We see that SINN yields an overall excellent approximation for the transition dynamics. Remarkably, it actually reproduces the hopping events between the two states. Moreover, from the comparison of the long-time trajectories and the normalized ACF, we can see that the trained SINN model can generate extrapolated trajectories for a good prediction of the long-time dynamics of $x(t)$, even though the training was carried out using only data from $t=0$ to $t=80$. A more qualitative evaluation of SINN on describing the transition dynamics relies on the calculation of the transition rate from the generated samples trajectories, which can be found in \Cref{sec:asssement}.   

\begin{figure}[t]
\centerline{
\includegraphics[height=4cm]{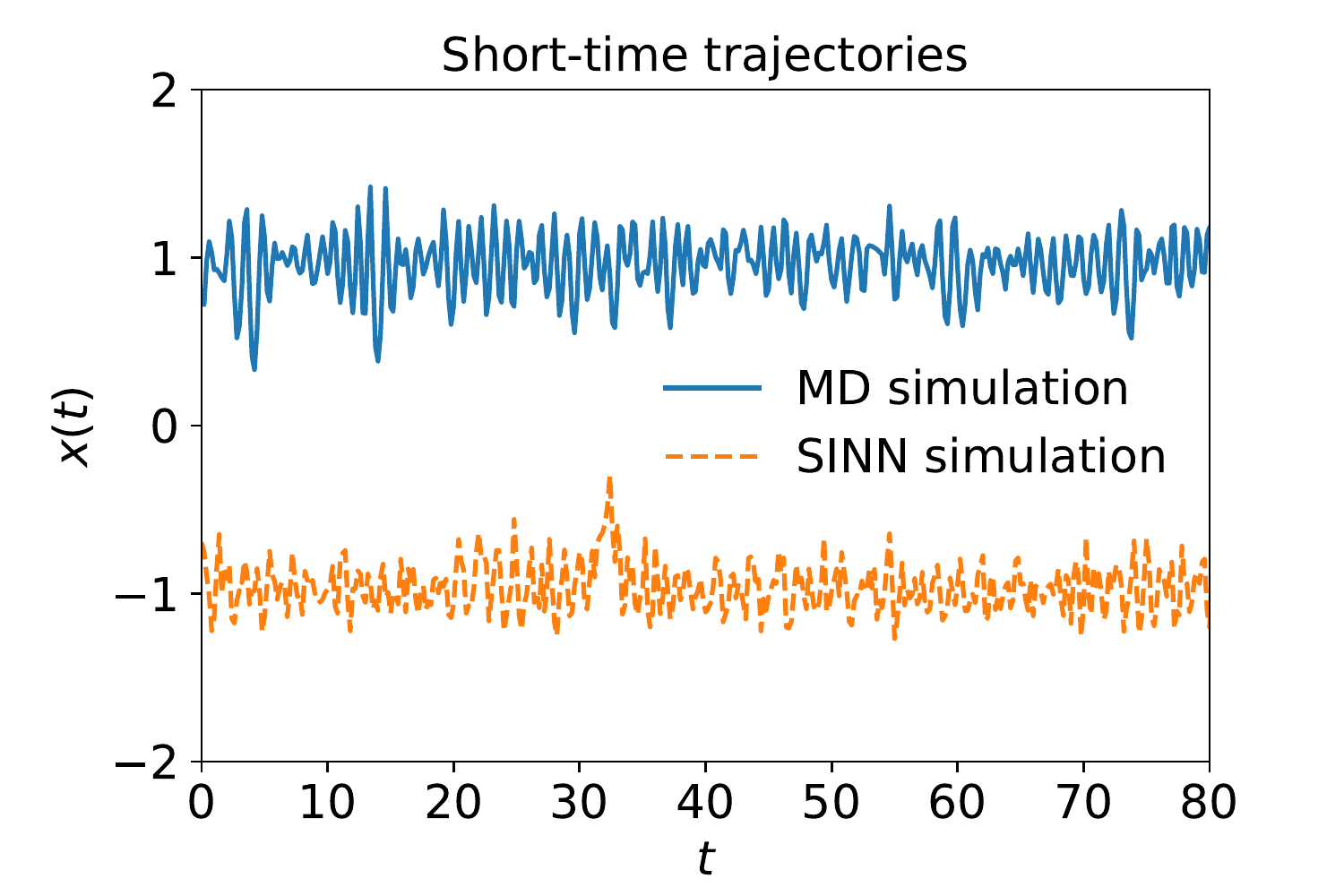}
\includegraphics[height=4cm]{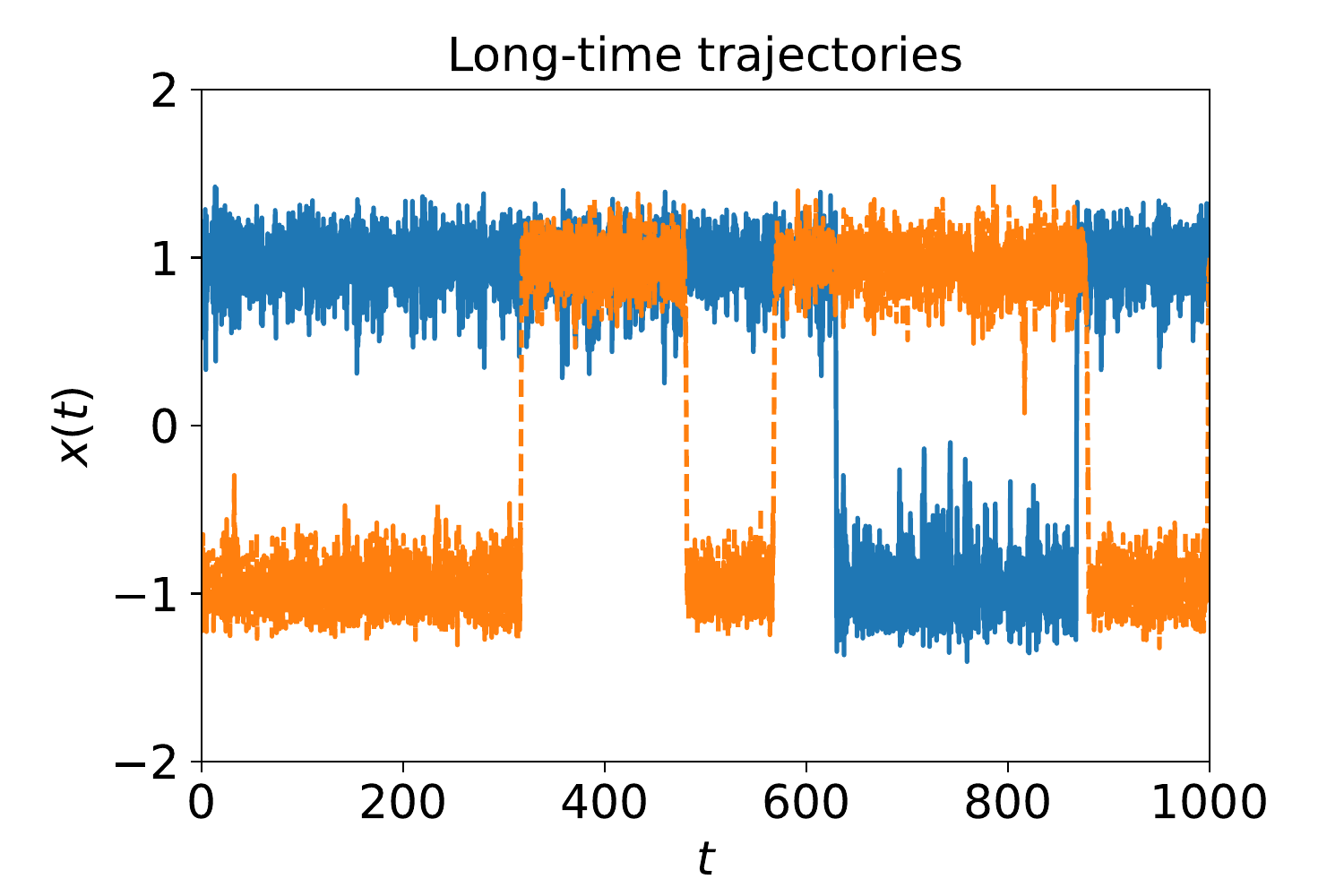}
\includegraphics[height=4cm]{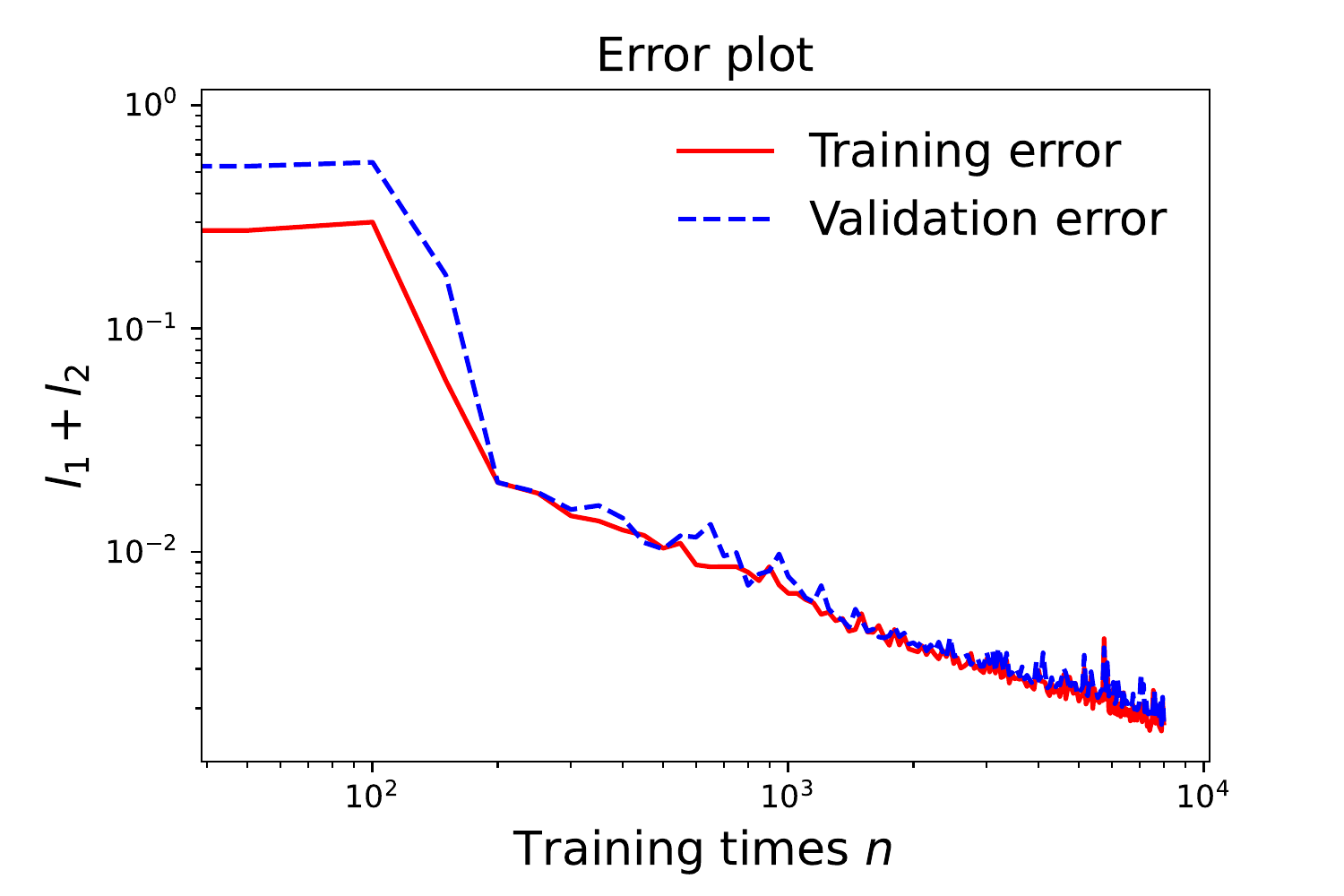}
}
\centerline{
\includegraphics[height=4cm]{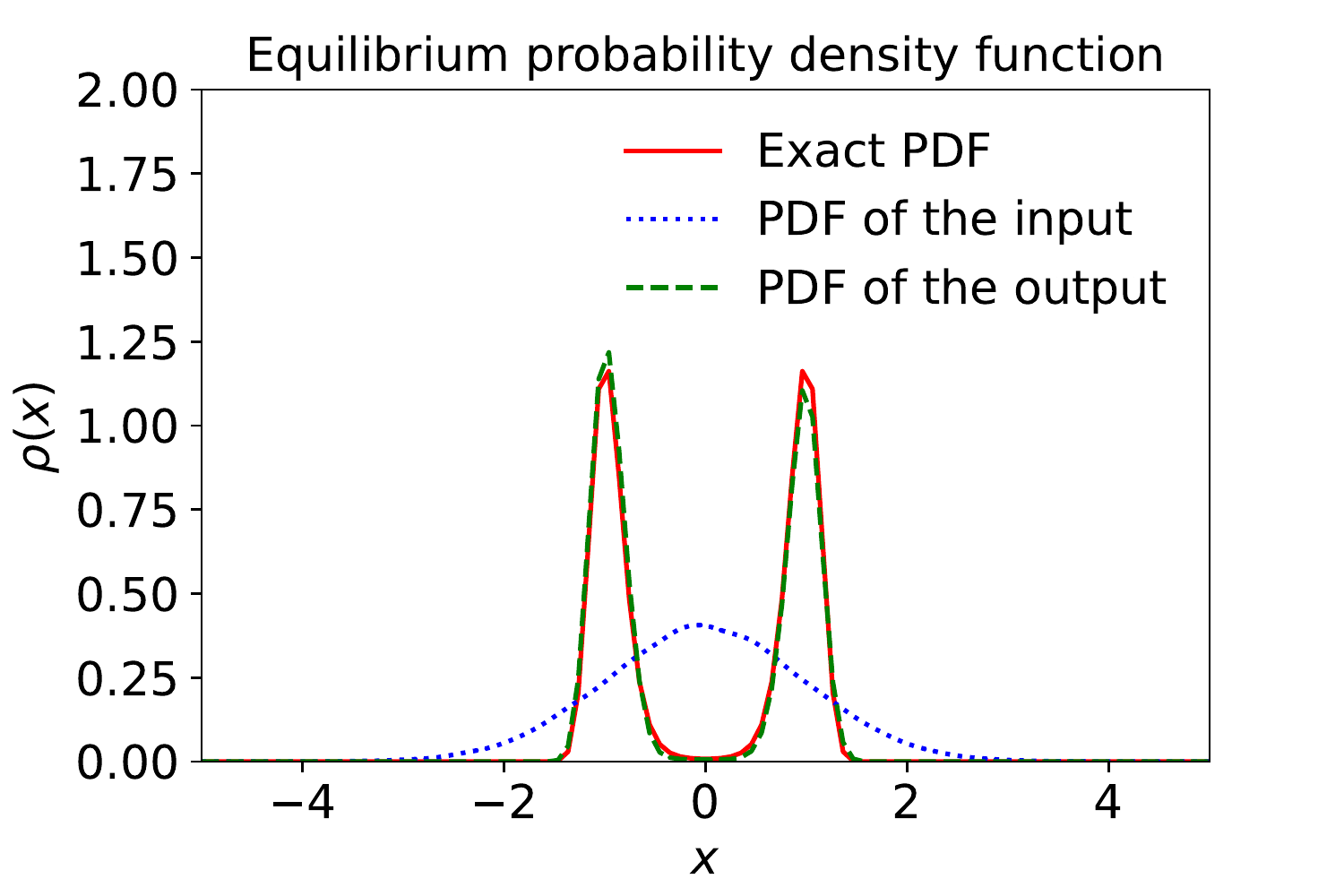}
\includegraphics[height=4cm]{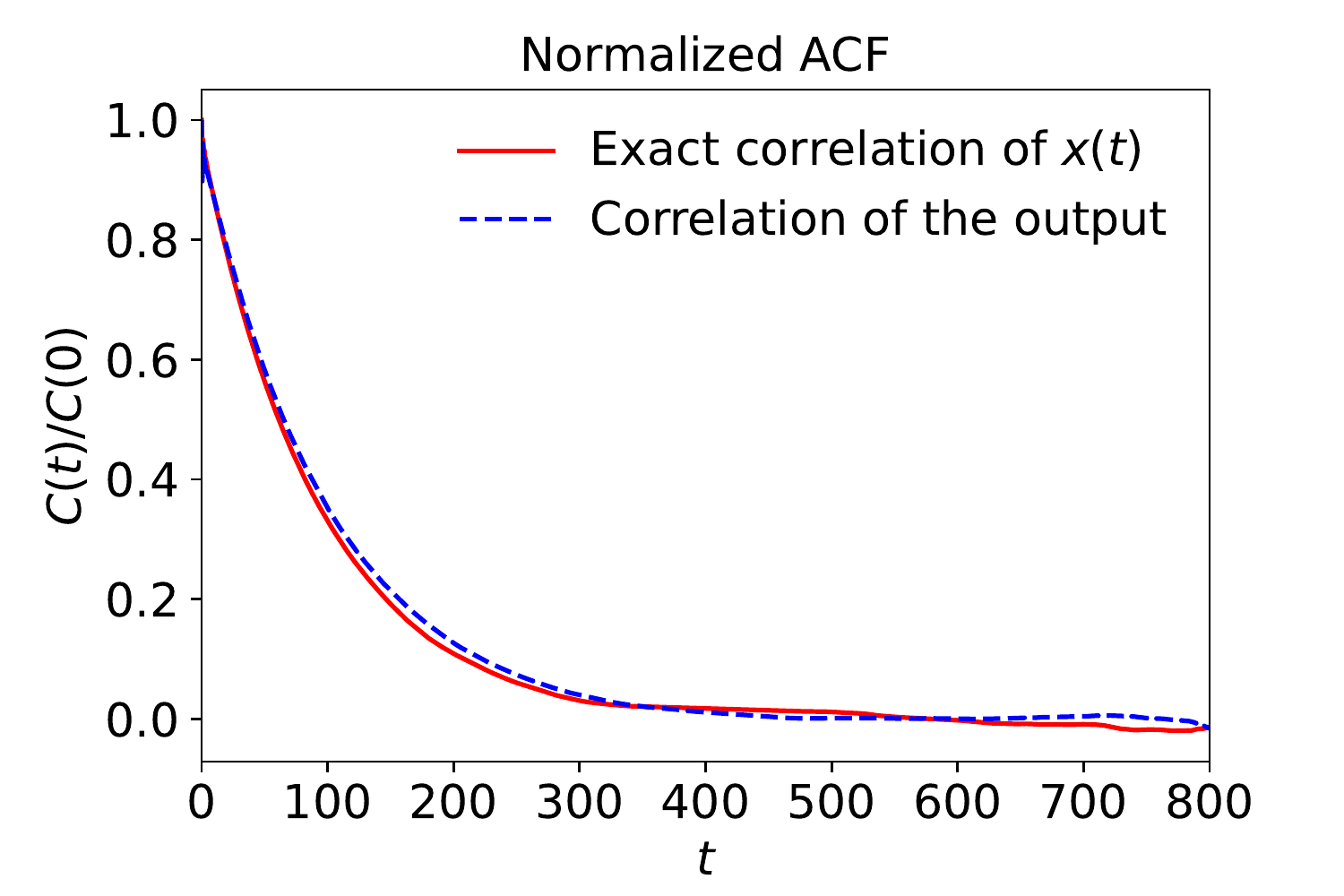}
\includegraphics[height=4cm]{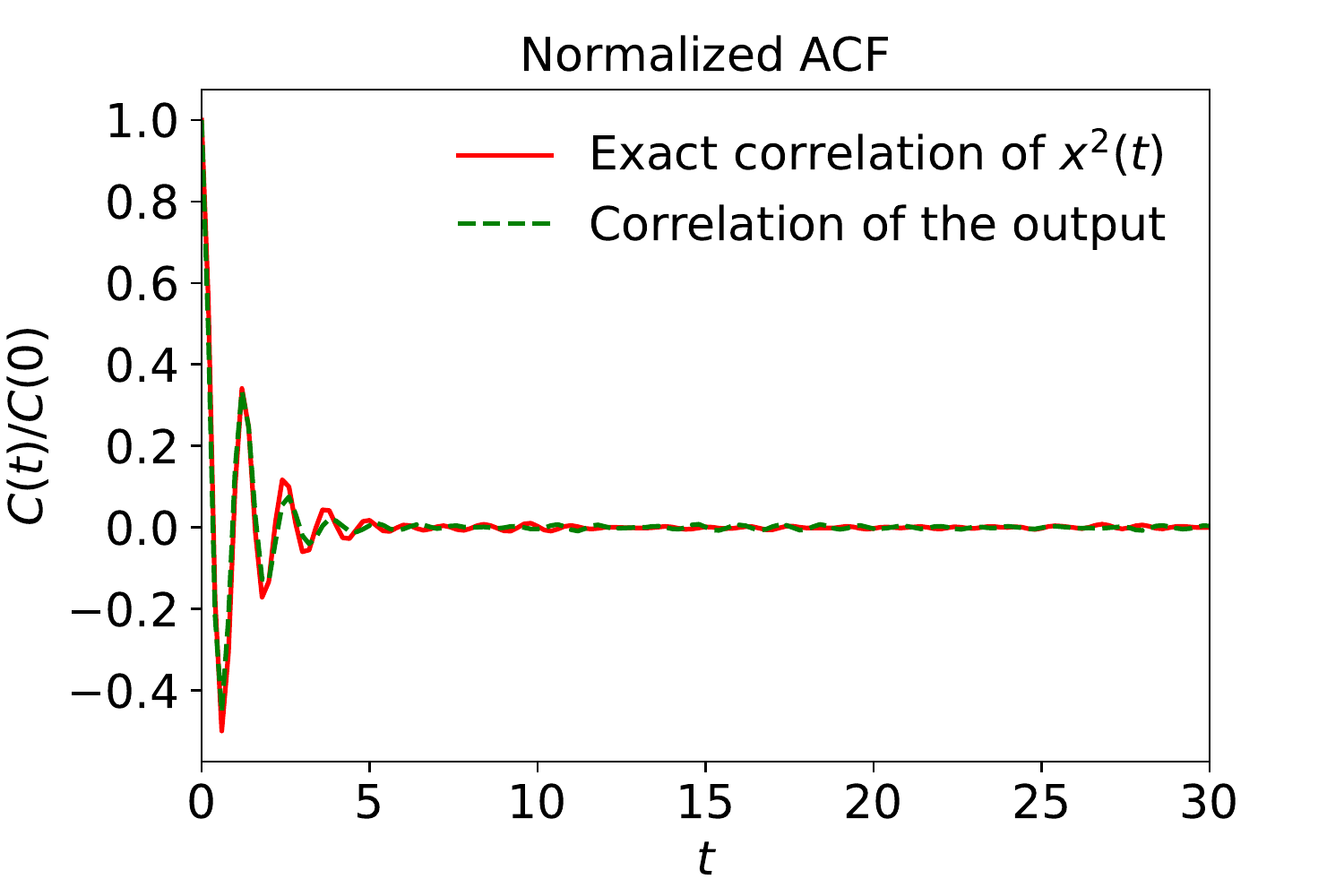}
}
\caption{Comparison of the dynamics of the reaction coordinate $x(t)$ generated by MD simulation and the \SINN model. The exact statistics, including the PDFs and the ACFs for $x(t)$ and $x^2(t)$, are obtained through MD simulations of \eqref{langevin_doublewell_dynamics} and averaged over $5\times 10^4$ trajectories. The statistics for the SINN outputs are similarly calculated.}
\label{fig:p_Sample_path_compare} 
\end{figure}

\subsection{Further Assessment of SINN}
\label{sec:asssement}

In this section, we thoroughly study the SINN model for the double-well system and use this example to further assess the modeling capability of SINN in various aspects. In particular, we focus on the temporally coarse-grained nature of SINN and explain why this property is important for rare event simulations.  More simulations are carried out to show that with short-time data, SINN can extrapolate and predict long-time dynamics. We also discuss favorable numerical features of SINN, including numerical convergence and consistency of the randomized optimization procedure.

\paragraph{\SINN as a Coarse-grained Time Integrator.}
The \SINN models we have used so far are trained using the coarse-grained sample trajectories of $x(t)$ with a time step size $dt=0.2$, which is much larger than the MD integration time step size $\Delta t = 10^{-3}$. The output of \SINN, \ie the approximated trajectories of $x(t)$, has the same coarse-grained step size as the training data. This makes \SINN a natural coarse-grained time integrator for the reduced-order dynamics of $x(t)$. This coarse-grained nature of our SINN model provides an efficient means to generate long-time approximated trajectories of $x(t)$ because the sampling gets 200 times sparser. For the calculation of physical quantities, such as the reaction rate, where the local fast-time dynamics becomes irrelevant, this leads to huge computational advantages. In \Cref{fig:Sample_path_coarse}, we compare sample trajectories of $x(t)$ generated by well-trained SINNs with different time step sizes $dt=0.1$, 0.2, 0.5. For these three time-scales, SINN all reproduces the correct hopping dynamics. It is also clearly observed that while fast-time dynamics is filtered out as $dt$ increases, the statistics, \ie PDFs and ACFs, of the predicted trajectories remain essentially unchanged. This means that the coarse-grained trajectory is {\em sufficient} to capture the important physics, \ie the statistics of the hopping dynamics. The capability to generate temporally coarse-grained stochastic trajectories is one of the most prominent features of SINN, which gives huge computational advantages. We emphasize that this feature is not easily achievable using established stochastic modeling methods such as the ones based on the Mori-Zwanzig equation, GLE, or the NeuralSDE.

\begin{figure}[t]
\centerline{
\includegraphics[height=4cm]{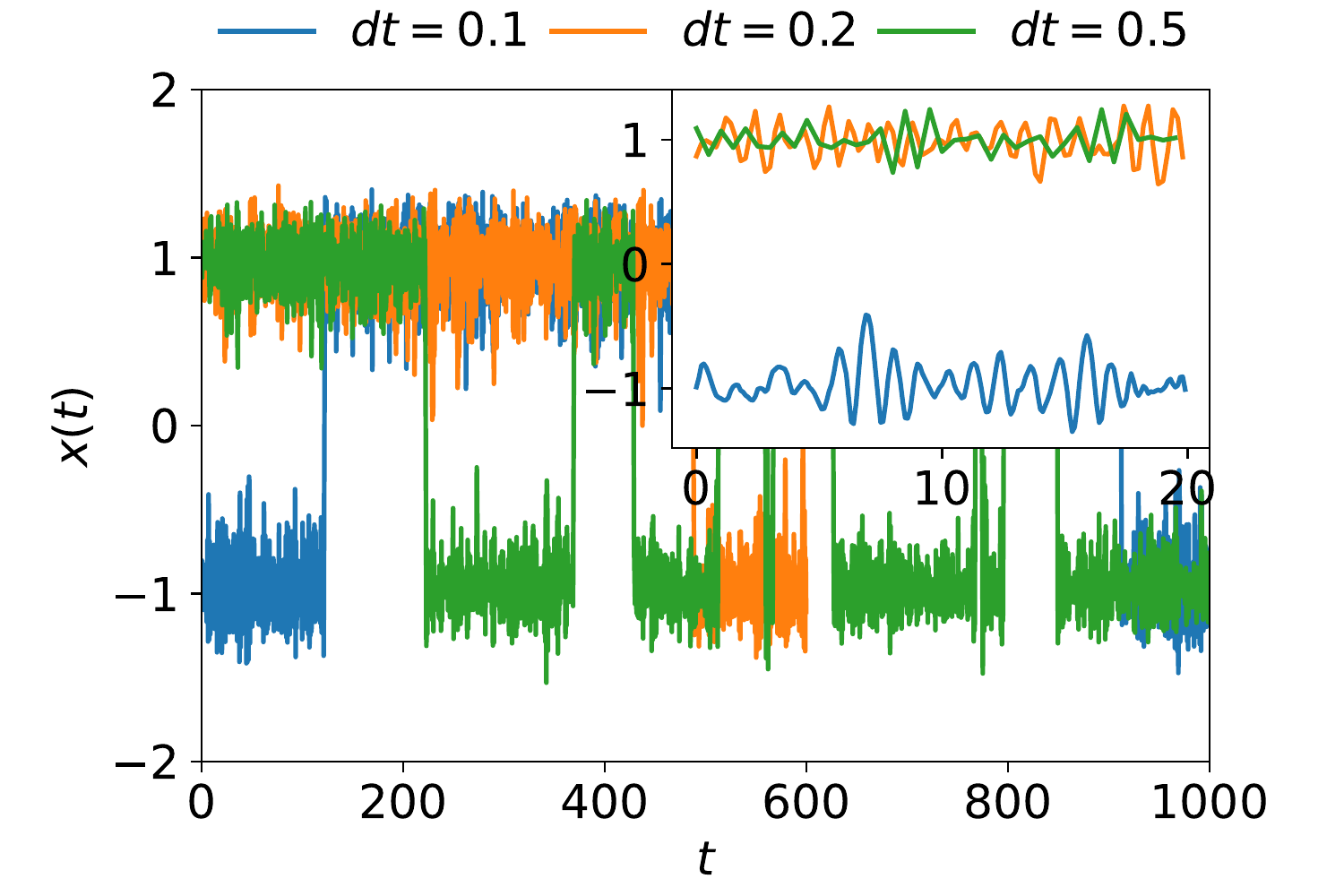}
\includegraphics[height=4cm]{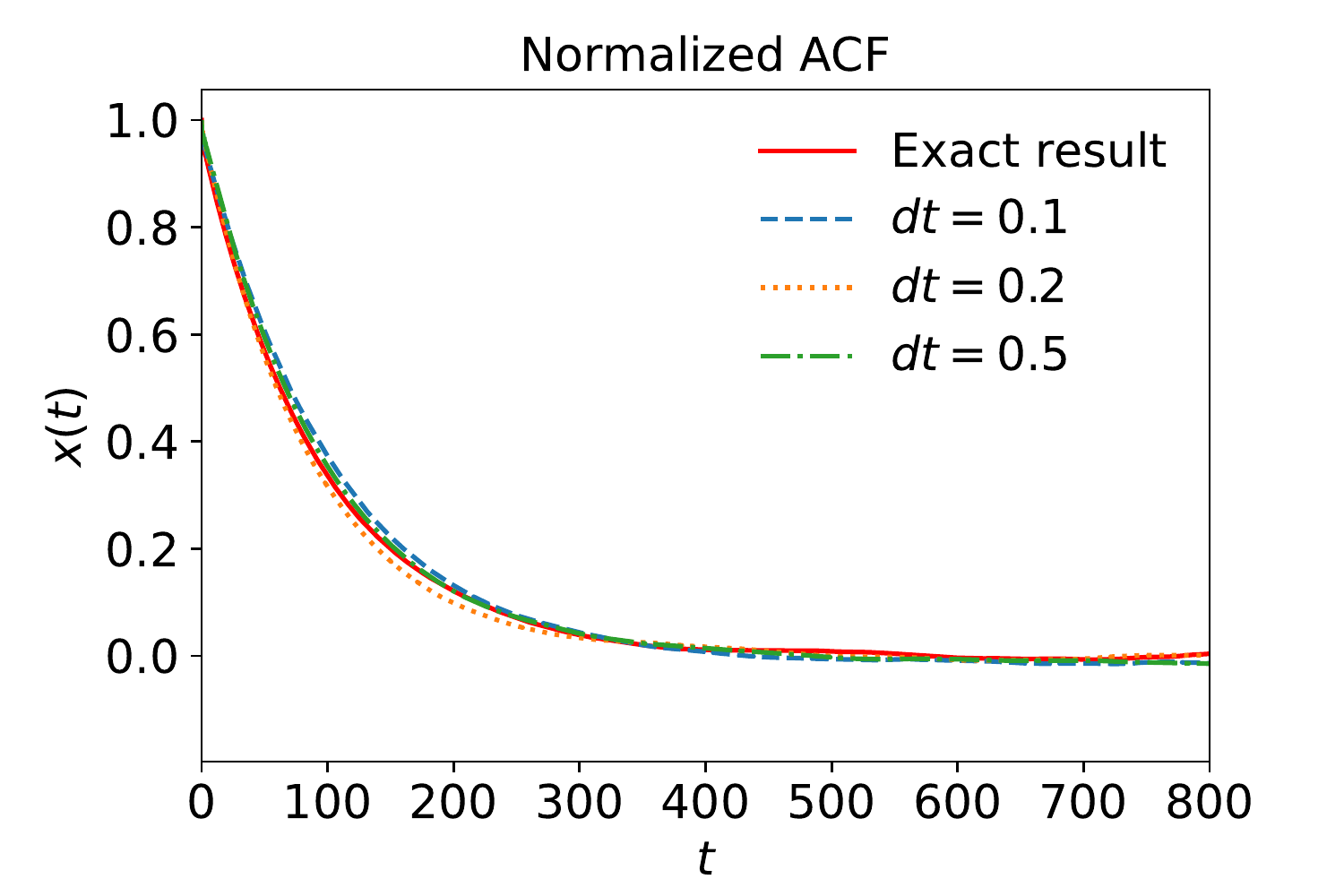}
\includegraphics[height=4cm]{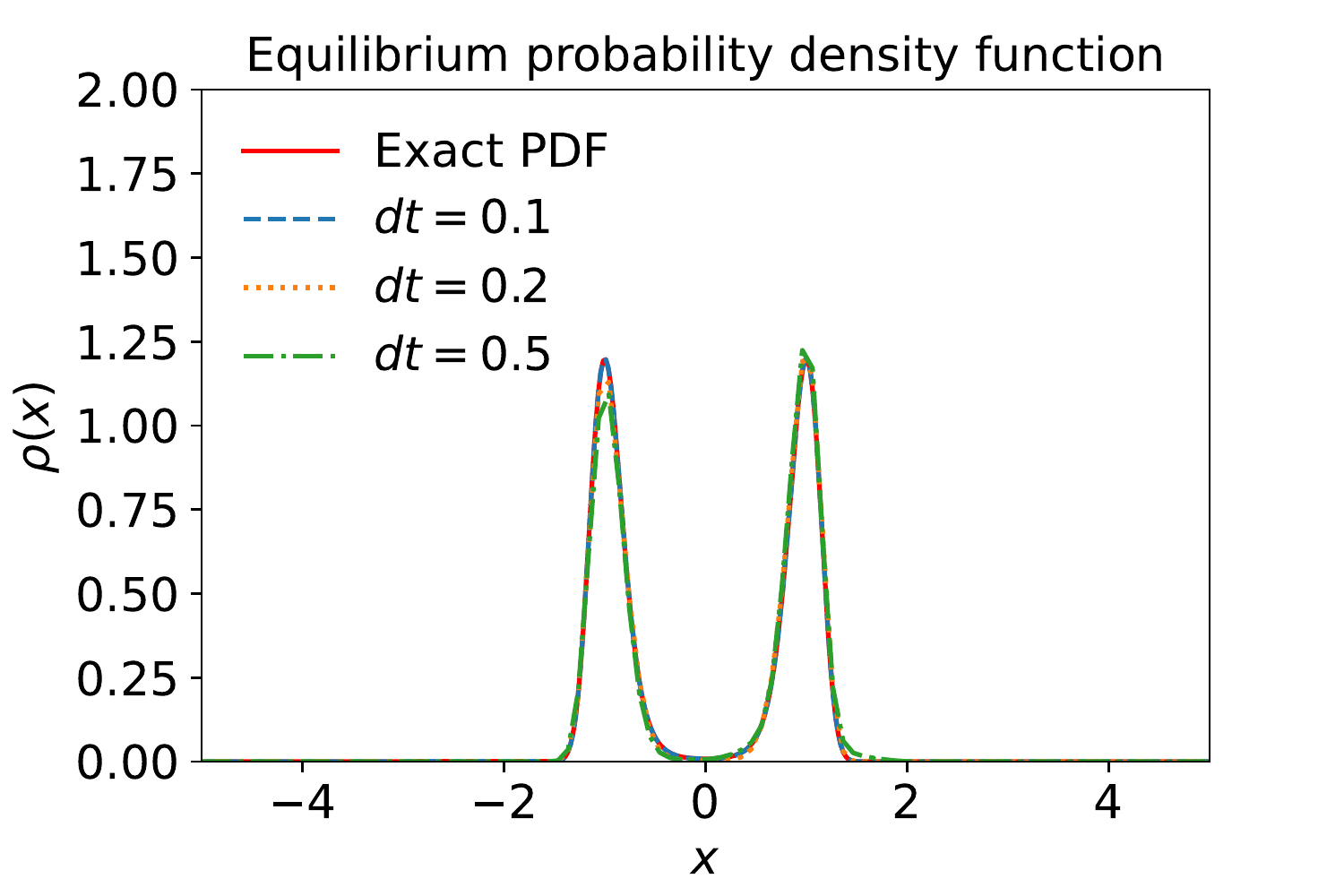}
}
\caption{Comparison of SINN models with different coarse-grained time scales $dt$. As $dt$ increases, local information gets gradually filtered out as shown by the short-time trajectories in the inset figure of the left panel. However, the ACF and PDF of the simulated trajectories remain essentially the same.}
\label{fig:Sample_path_coarse} 
\end{figure}

We also compare the computational time of the Euler--Maruyama scheme versus SINN in \Cref{Tab:runtime}. The savings in simulation time of using SINN to generate coarse-grained trajectories can be clearly seen. The advantage is particularly obvious in the CG example since the underlying MD system has a high dimensionality of 200. We emphasize that the dimensionality of many realistic MD systems, \eg proteins, is much higher and hence the potential saving in computational time will be enormous. Meanwhile, the time spent on training \SINN is well amortized.

\paragraph{Remark.} A possible intuitive explanation on why neural networks can actually learn a coarse-grained time integrator for dynamical systems might be due to the similarity between the multi-fold function composition structure of neural network with that of the Runge--Kutta (RK) method for solving an ODE $y'=f(t,y)$. The well-known fourth-order RK method has in fact a function composition structure and can be rewritten as:
\begin{align}\label{RK4}
   y_{n+1}=y_n+\frac{\Delta t}{6}\left\{ f(t_n,y_n)+2f\left(t_n+\frac{\Delta t}{2},y_n+\frac{\Delta t}{2}f(t_n,y_n))\right)+\cdots\right\},
\end{align}
where $y_n$ is the approximated solution at time $t_n$. By further expressing $y_n$ in \eqref{RK4} using $y_{n-1}$ and $f(t_{n-1},y_{n-1})$ and repeating this procedure, we obtain a natural coarse-grained time integrator with a multi-fold function composition structure. One can imagine that a neural network may have learned a similar structure during training. In fact, this connection was already noticed in the literature~\cite{rico1992discrete}.\footnote{We thank Prof.\ Yannis Kevrekidis for pointing this out to us through private communications.}

\paragraph{Calculation of Transition Rate.}
We use our SINN model as a simulator for the transition dynamics of the double-well system~\eqref{langevin_doublewell_dynamics} and assess how well it predicts the transition rate for rare events. To calculate the transition rate between the two states, we first divide the phase space for $x(t)$ into two regions: $A=(-\infty,0]$  and $B=(0,+\infty)$. Obviously, $-x_0\in A$ and $x_0\in B$. Consider the equilibrium time correlation function $C_{A,B}(t)$ defined by:
\begin{align}\label{transition_correlation}
    \frac{C_{A,B}(t)}{C_A}=\frac{\langle h_A(x(0))h_B(x(t))\rangle}{\langle h_A(x(0))\rangle},
\end{align}
where $h_A(x(t))$ is an indicator function of system configuration satisfying $h_A(x(t))=1$ if $x(t)\in A$ and $h_A(x(t))=0$ if $x(t)\notin A$, while $h_B(x(t))$ is analogously defined. Thus, the ratio \eqref{transition_correlation} is the probability of finding the system in state $B$ after time $t$ when the system is initially at state $A$. As a result, the transition rate from $A$ to $B$ can be calculated as \cite{bolhuis2002transition,dellago1998transition}:
\begin{align}\label{transition_rate_formula}
        k_{AB}=\frac{\d}{\d t}\frac{C_{A,B}(t)}{C_A}, \qquad \tau_{mol}<t\ll \tau_{rxn},
\end{align}
which is the slope of the $\frac{C_{A,B}(t)}{C_A}$ curve in the time range between a short transient time scale $\tau_{mol}$ and the exponential relaxation time $\tau_{rxn}=1/(k_{AB}+k_{BA})$. 

\begin{figure}[t]
\centerline{
\includegraphics[height=5cm]{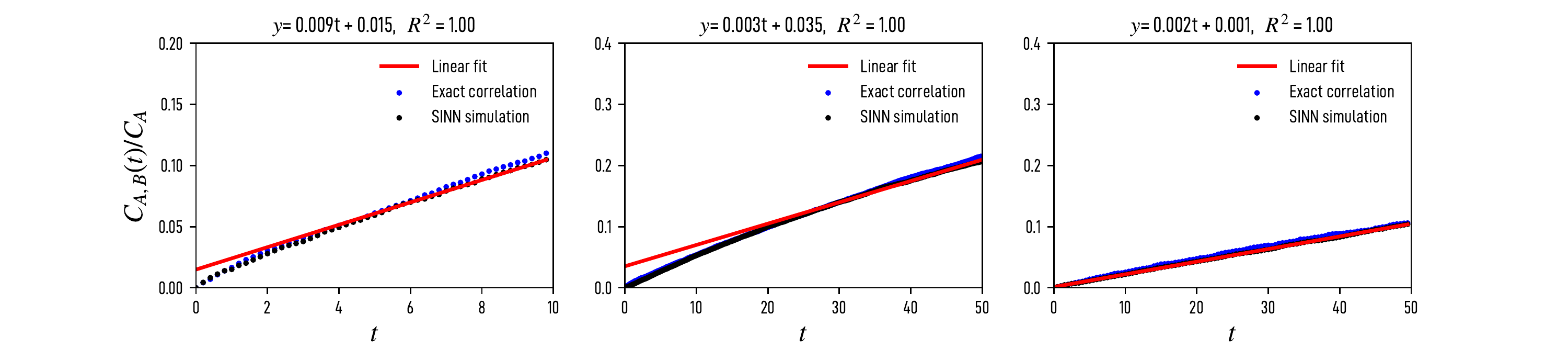}
}
\caption{Prediction of the transition rate using \SINN as the simulator for rare events. The time profiles of the equilibrium correlation function $C_{A,B}(t)/C_A$ for double-well dynamics \eqref{langevin_doublewell_dynamics} are plotted for different values of the barrier depth $V_0$ and the coarse-grained time scale $dt$: (left) $V_0=4$, $dt=0.2$; (middle) $V_0=5$, $dt=0.2$; (right) $V_0=6$, $dt=0.5$. The results obtained from SINN are compared with the numerical simulation results obtained from long-time MD trajectories. The linear regression is used in fitting $C_{A,B}(t)/C_A$ in between the transient time scale $\tau_{mol}$ and the exponential relaxation time scale $\tau_{rxn}$ in order to evaluate $k_{AB}$. The specific time domains for the linear regression are chosen to be (from left to right) $[5,10]$, $[25,50]$ and $[25,50]$, respectively. $R^2$ is the coefficient of determination.}
\label{fig:reaction_rate} 
\end{figure}

The numerical results are summarized in \Cref{fig:reaction_rate}. The time profiles of the equilibrium time correlation function match well with those obtained by MD simulations. The resulting values of $k_{AB}$ are approximately estimated to be $0.009$, $0.003$, and $0.002$ for transition dynamics with energy barrier height values $V_0=4$, 5, and 6, respectively. These agree well with the values obtained by MD simulations. By calculating the transition rate using SINNs trained with coarse-grained trajectories with different temporal resolutions $dt$, we also observe that the temporal coarse-graining of the trajectories does not significantly influence the calculated reaction rate. This is consistent with our previous analysis. The successful prediction of the transition rate $k_{AB}$ indicates that, with the equilibrium PDFs and ACFs for $x(t)$ and $x^2(t)$, it is {\em practically} sufficient to create a reliable numerical approximation for the reduced-order dynamics of $x(t)$ using SINN, although this information is not enough to {\em theoretically} guarantee the uniqueness of the non-Gaussian process.

\paragraph{Long-time Predictability, Numerical Convergence, and Consistency.}
Lastly, we discuss long-time predictability, numerical convergence, and training consistency of \SINN. As a neural network based on the LSTM architecture, \SINN makes prediction of the long-time dynamics of the reduced-order observable $x(t)$ by quantifying the memory effect of the non-Markovian system. This is similar to reduced-order modeling using the Mori--Zwanzig formalism or GLEs. Since these approaches are proven to have predictability of the long-time stochastic dynamics \cite{lei2016data,zhu2021generalized}, it is reasonable to expect \SINN to have a similar behavior. This is indeed the case as we have already shown in \Cref{fig:p_Sample_path_compare}, where the SINN model faithfully predicts the long-time dynamics of $x(t)$ using short-time training data for $t\in [0,80]$.

Due to the usage of the randomized training protocol, each learned \SINN model may have a {\em differently} parameterized memory model. This raises a reasonable doubt that whether our demonstrated long-time predictability of SINN is merely a coincidence. Due to the well-known difficulties on the theoretical convergence analysis for deep neural networks, here we only provide numerical verifications. To this end, we verify the convergence and consistency of the obtained SINN model by comparing the long-time tail of the ACF for $x(t)$ with the MD simulation result. We trained an ensemble of independently initialized \SINN models using the same data set. Specifically, we obtained three ensembles of SINN models. The first ensemble was trained using 400 trajectories of $x(t)$ for $t\in[0,40]$ with a coarse-grained step size $\Delta t=0.2$, while the time domain for the second and third ensemble trajectories are $[0,70]$ and $[0,100]$, respectively. In each ensemble, we repeated the training process to create 20 {\em candidate} SINN models. Each candidate model was obtained by independent training of SINN until the training and validation error satisfied $\epsilon_T,\epsilon_V=l_1+l_2\leq 10^{-3}$. From these candidate SINN models, we performed time extrapolation to generate long-time dynamics of $x(t)$ and re-evaluated the validation error $\epsilon_V$ to select the top 5 {\em qualified} SINN models with the smallest $\epsilon_V$. The evaluation time domains for $\epsilon_V$ were $[0,40]$, $[0,70]$, or $[0,100]$, respectively, since these were the only time frames for which the ground truth was known. This procedure ensured the qualified SINN models produce stationary time sequences in order to be consistent with the equilibrium dynamics of $x(t)$. The simulation results and the analysis are presented in \Cref{fig:sinn_prediction}. We see that with training data as short as $t \in [0,40]$, {\em qualified} SINN models, \ie the top ones with the smallest validation errors, yield overall good predictions of the long-time dynamics of $x(t)$. This validates the long-time predictability of SINN. As we gradually increase the length of the target trajectories from $t\in[0,40]$ to $t\in [0,100]$, the $95\%$ confidence interval of the predicted dynamics gets smaller. This indicates that the collective output of the ensemble of SINN models converges to the correct dynamics of $x(t)$. Hence, a numerical validation of the convergence of SINN is established here in terms of the statistics of the input-output. All these repeated training leads to accurate stochastic models for the transition dynamics. This confirms that the randomized optimizer used in the training does not compromise, at least numerically, the consistency of the trained SINN models.

\begin{figure}[t]
\centerline{
\includegraphics[height=4cm]{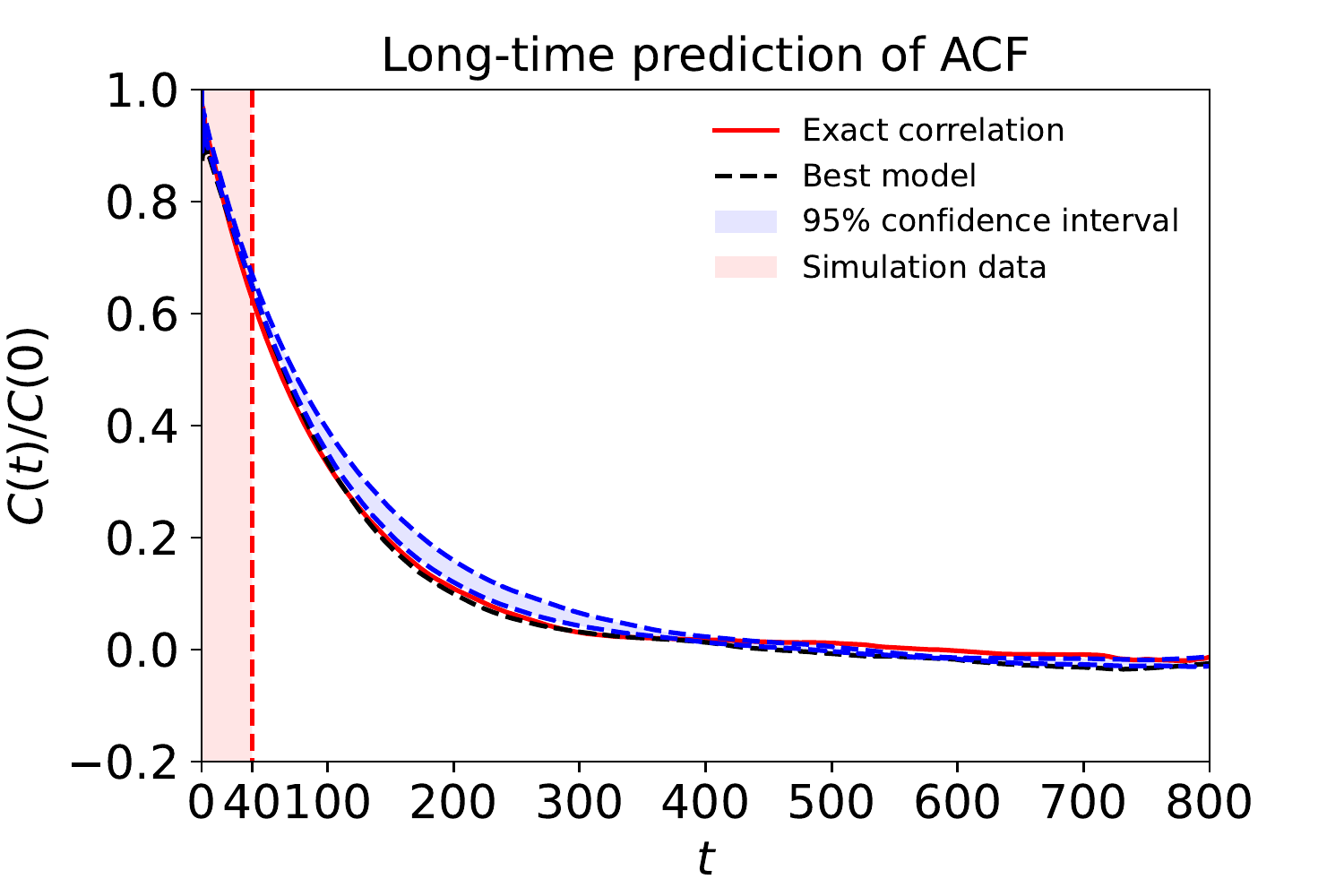}
\includegraphics[height=4cm]{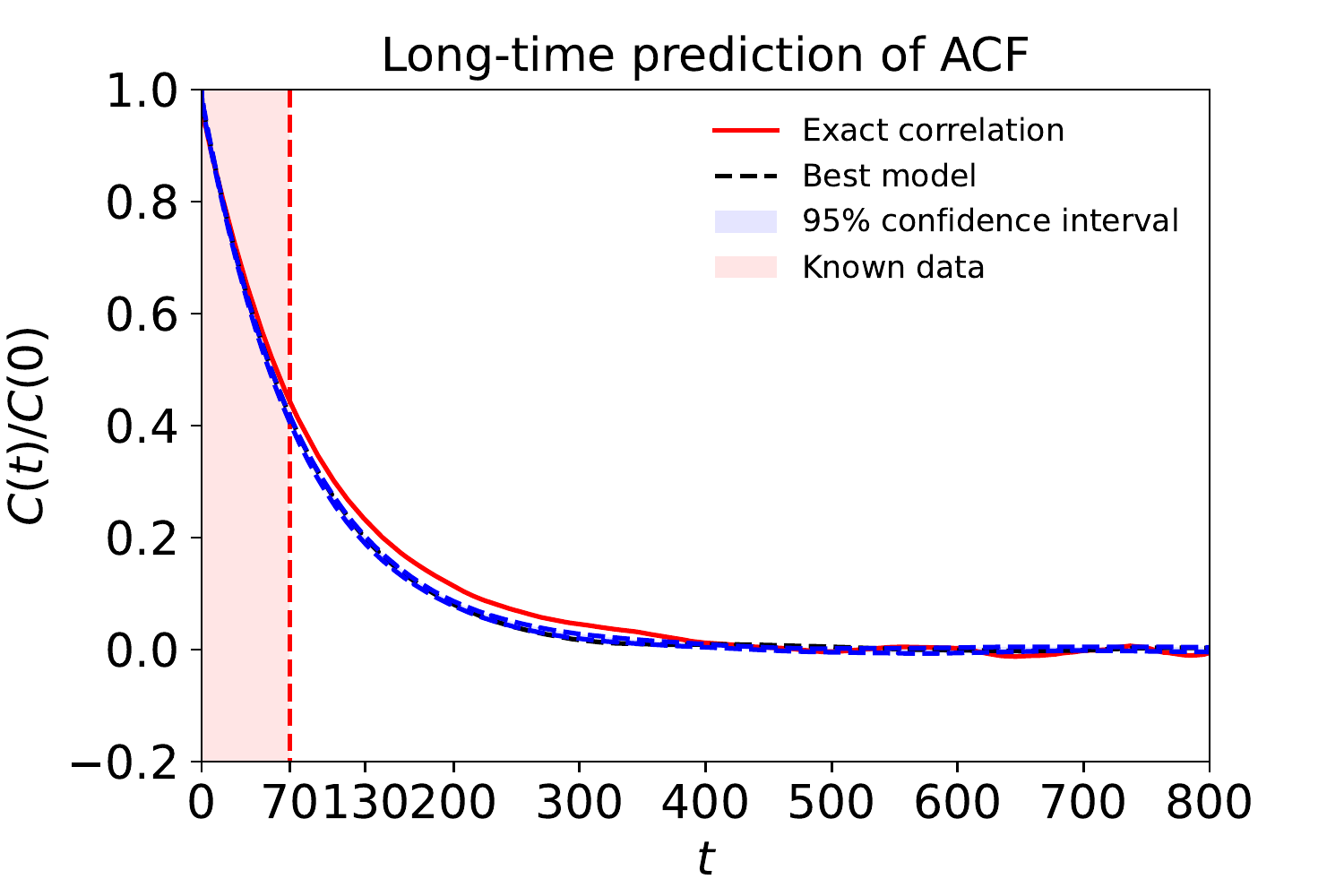}
\includegraphics[height=4cm]{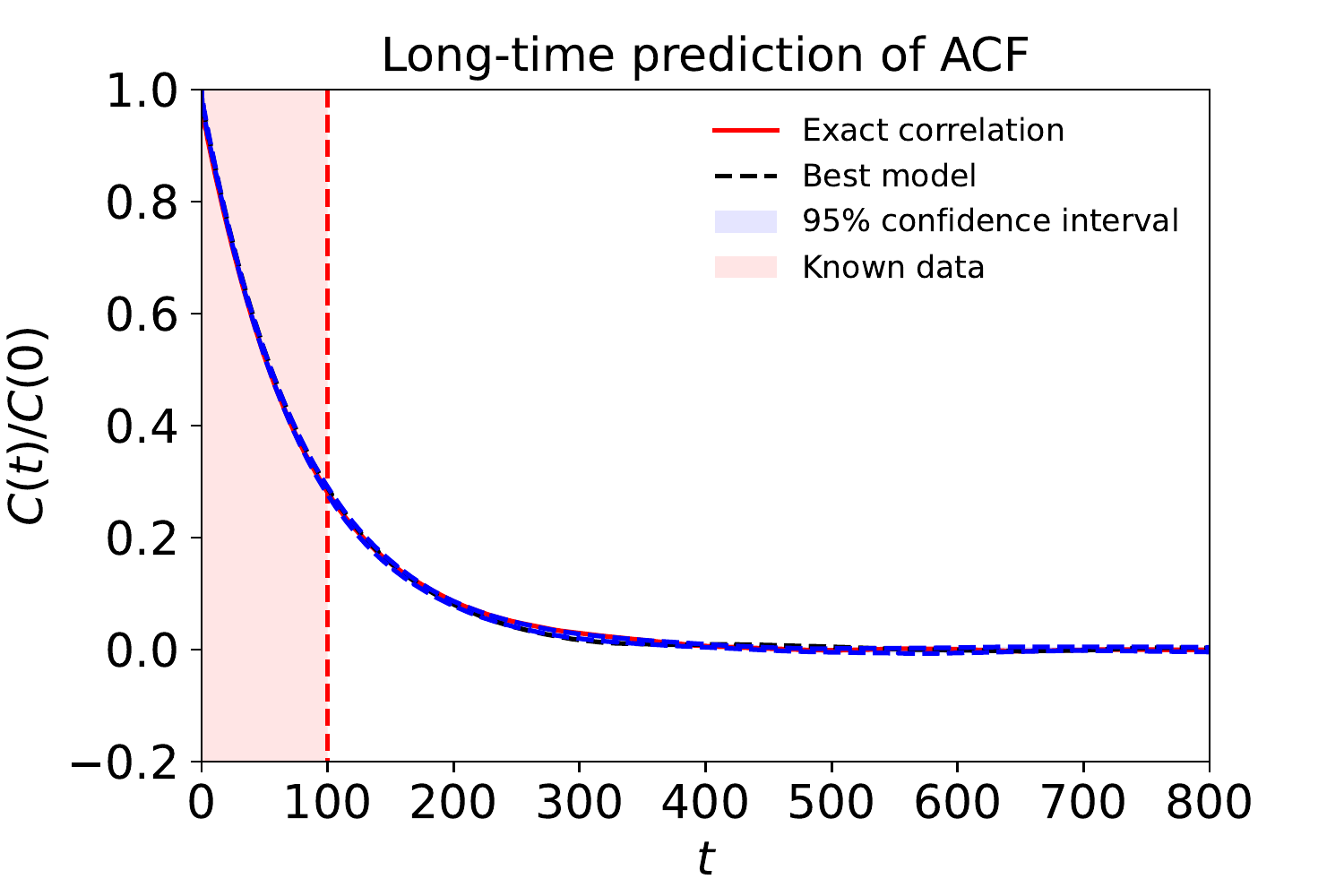}
}
\caption{Long-time predictions of the ACF of $x(t)$ using SINN. In each figure, the ACFs of the top 5 qualified SINN model are used to calculate the $95\%$ confidence interval of the predicted dynamics. For qualified SINN models, the one with the smallest validation error $\epsilon_V$ is selected to be the best model.}
\label{fig:sinn_prediction} 
\end{figure}

\section{Conclusion}
\label{sec:conclusion}
In this paper, we introduced a statistics-informed neural network (SINN) for learning stochastic dynamics. The design and construction of SINN is theoretically inspired by the universal approximation theorem for one-layer RNNs with stochastic inputs. This new model uses i.i.d.\ white-noise sequences as the input and layers of long short-term memory (LSTM) cells as the functional units to generate output sequences. The statistics of the target stochastic process, such as equilibrium probability density and time autocorrelation functions of different orders, are used in the loss function to train the parameters. SINN has a relatively simple architecture where deterministic transformations are applied to the random input and is easy to implement and train. Numerical simulation results have shown that SINN can effectively approximate Gaussian and non-Gaussian dynamics for both Markovian and non-Markovian stochastic systems. The successful application of SINN in modeling the transition dynamics clearly indicates that it can serve as a useful surrogate model to simulate rare events. Moreover, the coarse-grained nature and the long-time predictability of SINN makes it an efficient and reliable framework for reduced-order modeling. Further applications and extensions of this framework in the general area of stochastic modeling, uncertainty quantification, and time series analysis can be expected. The code we used to train and evaluate our models is available at \REPO.

\section{Acknowledgment}

This work was supported in part by the U.S.\ Department of Energy, Office of Science, Office of Advanced Scientific Computing Research, Scientific Discovery through Advanced Computing (SciDAC) Program through the FASTMath Institute under Contract No.\ DE-AC02-05CH11231 at Lawrence Berkeley National Laboratory, and the National Science Foundation under Grant No.\ 2213368. We would like to thank Prof.\ George Karniadakis, Prof.\ Yannis Kevrekidis, and Prof.\ Huan Lei for the discussion on the transition dynamics and neural networks. We also thank anonymous reviewers for providing valuable suggestions that improved the presentation of this work.

\bibliography{bibliography,20-LSTM}
\bibliographystyle{unsrt}

\clearpage

\appendix
\numberwithin{equation}{section}

\section{Universal Approximation Theorem for Recurrent Neural Networks with Gaussian Stochastic Inputs}
\label{app1:proof}

Following \cite{schafer2006recurrent}, we first introduce some useful definitions and established universal approximation results for the deterministic recurrent neural network (RNN).

\begin{defn}
For any (Borel-)measurable function $f(\cdot):\R^J\rightarrow\R^J$ and $I,N\in\mathbb{N}$, $\sum^{I,N}(f)$ is called a function class for three-layer feedforward neural networks if any $g\in \sum^{I,N}(f)$ is of the form:
\begin{align*}
    g(x)=Vf(Wx-\theta), \qquad 
    \text{where}\qquad
    x\in \R^I,\quad V\in \R^{N\times J},\quad W\in \R^{J\times I}, \quad \theta\in\R^J, \quad J\in \mathbb{N}.
\end{align*}
This three-layer feedforward neural network has $I$ input neurons, $J$ hidden neurons, and $N$ output neurons. Note that the function $f:\R^J\rightarrow\R^J$ is defined to be component-wise with
\begin{align}\label{component-wise_f}
    f(Wx-\theta):=
    \begin{bmatrix}
    f(W_1x-\theta_1)\\
    f(W_2x-\theta_2)\\
    \vdots\\
    f(W_Jx-\theta_J)
    \end{bmatrix}.
\end{align}
\end{defn}

\begin{defn}
A subset $S$ of a metric space $(X,\rho)$ is called $\rho$-dense in a subset $U$ if there exists $s\in S$ such that $\rho(s,u)<\epsilon$ for any $\epsilon>0$ and any $u\in U$.
\end{defn}

\begin{defn}
 Let $\C^{I,N}:\R^{I}\rightarrow \R^N$ be the set of all continuous functions. A subset $S\subset C^{I,N}$ is uniformly dense on a compact domain in $\C^{I,N}$ if for any compact subset $K\subset \R^I$, $S$ is $\rho_K$-dense in $\C^{I,N}$, where $\rho_K(f,g):=\sup_{x\in K}\|f(x)-g(x)\|_{\infty}$.
\end{defn}

\begin{defn}
A function $\sigma$ is called a sigmoid function if $\sigma$ is monotonically increasing and bounded. 
\end{defn}
\noindent Common choice of sigmoid function of neural networks are $1/(1+e^{-x})$ and $\tanh(x)$.
\\

The following result is the well-known universal approximation theorem (UAT) for feedforward neural networks.
\begin{theorem}\label{UAT_FNN}
(UAT for feedforward neural networks) For any sigmoid activation function $\sigma$ and any dimensions $I$ and $N$, $\sum^{I,N}(\sigma)$ is uniformly dense on a compact domain in
$\C^{I,N}$.
\end{theorem}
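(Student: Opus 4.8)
The plan is to establish the density claim via the classical functional-analytic (Cybenko-type) route: reduce the vector-valued statement to a scalar one, dualize using Hahn--Banach and the Riesz representation theorem, and then show that a bounded monotone $\sigma$ is \emph{discriminatory} for finite signed measures.

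\textbf{Reduction to scalar output.} First I would fix a compact $K\subset\R^I$ and note that every $g\in\sum^{I,N}(\sigma)$ has $k$-th component $g_k(x)=\sum_j V_{kj}\,\sigma(W_j\cdot x-\theta_j)$, a finite linear combination of ridge functions $x\mapsto\sigma(w\cdot x-\theta)$. Given $f=(f_1,\dots,f_N)\in\C^{I,N}$, it suffices to approximate each scalar $f_k$ uniformly on $K$: concatenating the hidden units used for the individual components into a single hidden layer and routing them through $V$ produces one network whose error in $\rho_K$ is the maximum of the $N$ componentwise errors. Thus everything collapses to showing that $S_K:=\mathrm{span}\{x\mapsto\sigma(w\cdot x-\theta):w\in\R^I,\ \theta\in\R\}$ is dense in $(C(K),\|\cdot\|_\infty)$. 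Continuity of $\sigma$, as for the stated examples $1/(1+e^{-x})$ and $\tanh$, guarantees these approximants themselves lie in $C(K)$.

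\textbf{Duality.} Suppose, for contradiction, that $\overline{S_K}\neq C(K)$. Since $\overline{S_K}$ is a proper closed subspace, Hahn--Banach yields a nonzero bounded functional on $C(K)$ vanishing on $\overline{S_K}$, and Riesz representation identifies it with a nonzero finite signed regular Borel measure $\mu$ satisfying $\int_K\sigma(w\cdot x-\theta)\,d\mu(x)=0$ for all $w\in\R^I$, $\theta\in\R$.

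\textbf{Discriminatory property (the crux).} I would then force $\mu=0$. Taking $w=0$ gives $\sigma(-\theta)\,\mu(K)=0$, hence $\mu(K)=0$. For fixed $w,b,\varphi$, consider $\sigma\big(\lambda(w\cdot x-b)+\varphi\big)$ and let $\lambda\to+\infty$: by monotonicity and boundedness the integrand is uniformly bounded (the finite total variation of $\mu$ supplying an integrable dominating constant) and converges pointwise to $\ell_+$ on the open half-space $\{w\cdot x>b\}$, to $\ell_-$ on $\{w\cdot x<b\}$, and to $\sigma(\varphi)$ on the hyperplane $\{w\cdot x=b\}$, where $\ell_\pm=\lim_{t\to\pm\infty}\sigma(t)$. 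Dominated convergence turns the vanishing integral into a linear relation among $\mu$ of the open half-space and $\mu$ of the hyperplane; since $\sigma$ is nonconstant, varying the free bias $\varphi$ makes $\sigma(\varphi)$ take distinct values, which annihilates the half-space and the hyperplane contributions separately, so $\mu$ vanishes on every half-space. Fixing the direction $w$ and differencing in $b$ then shows the pushforward of $\mu$ under $x\mapsto w\cdot x$ kills every interval; equivalently $\int_K e^{is\,w\cdot x}\,d\mu(x)=0$ for all $s\in\R$ and all $w$, so the Fourier transform of $\mu$ is identically zero and $\mu=0$ by uniqueness. This contradicts $\mu\neq0$, proving $\overline{S_K}=C(K)$ and hence the theorem.

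I expect the discriminatory step to be the main obstacle. The reductions and the Hahn--Banach/Riesz duality are routine once $K$ is compact so that $C(K)$ and its dual are the correct objects. The delicate points are justifying the $\lambda\to\infty$ limit (where monotonicity supplies the uniform bound for dominated convergence, and the free bias $\varphi$ is precisely what decouples the possibly positive-mass hyperplane term) and the final passage from \emph{annihilates all half-spaces} to $\mu=0$, which rests on Fourier/Radon uniqueness rather than on any further property of $\sigma$.
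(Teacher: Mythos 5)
The first thing to note is that the paper itself contains no proof of this theorem: it is quoted (following Sch\"afer and Zimmermann) as the classical, well-known universal approximation theorem for three-layer feedforward networks, and it is then used as a black box in Appendix~\ref{app1:proof} to prove the paper's actual new results, Theorems \ref{UAT_rigorous} and \ref{UAT_rigorous_SDE}. So there is no in-paper argument to match yours against; what you have written is a reconstruction of Cybenko's duality proof, and in substance it is correct: the reduction to scalar outputs by concatenating hidden units, the Hahn--Banach/Riesz step, the scaling limit $\lambda\to\infty$ with dominated convergence showing that a bounded monotone nonconstant $\sigma$ is discriminatory, and the passage from annihilation of all half-spaces to $\mu=0$ via the one-dimensional pushforward and Fourier uniqueness are all sound. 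Two compressions should be made explicit if this were written out. First, killing the hyperplane term by varying $\varphi$ leaves you with $\ell_+\,\mu(\{w\cdot x>b\})+\ell_-\,\mu(\{w\cdot x<b\})=0$, which by itself does \emph{not} annihilate the half-spaces; you must combine it with $\mu(K)=0$ (which you did establish via $w=0$) and with $\ell_+\neq\ell_-$ (true precisely because a bounded monotone $\sigma$ is nonconstant) to solve the resulting $2\times 2$ linear system and conclude that both half-space masses vanish --- the phrase ``annihilates the half-space and the hyperplane contributions separately'' glosses over exactly this step. Second, your route requires $\sigma$ to be continuous and nonconstant, whereas the paper's Definition~4 of a sigmoid asks only for boundedness and monotonicity; continuity is implicitly needed anyway for $\sum^{I,N}(\sigma)\subset\C^{I,N}$ (otherwise the paper's density Definition~3 does not even apply), but it is worth knowing that the literature this theorem is imported from (Hornik--Stinchcombe--White) also covers discontinuous squashing functions via a different, Stone--Weierstrass-type argument, so the statement the paper invokes is slightly more general than what the Cybenko route by itself delivers.
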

The above theorem simply implies that for any sigmoid function $\sigma$, as long as $J\in \mathbb{N}$ is large enough, \ie the number of hidden state (neuron) is large enough, a three-layer feedforward neural network can approximate any continuous function in any compact domain with arbitrary accuracy. This theorem was used in \cite{schafer2006recurrent} to prove the universal approximation theorem for RNN of type \eqref{differential_sde} when the input $x_t$ is deterministic. 
\\

We now introduce the following definition of the RNN class:
\begin{defn}
\label{def:RNN_class}
Let $\sigma(\cdot):\R^J\rightarrow\R^J$ be an arbitrary sigmod function and $I,N,T\in\mathbb{N}$. The class $RNN^{I,N}(\sigma)$ refers to discrete RNN system of the form \eqref{one_layer_RNN}, \ie
\begin{align*}
    s_{t+1}&=\sigma(As_t+Bx_t-\theta),\\
    y_t&=Cs_t,
\end{align*}
where $x_t\in\R^I$, $s_t\in\R^J$, and $y_t\in\R^N$ for all $t=1,\cdots,T$. Note that here $\sigma(As_t+Bx_t-\theta)$ is calculated component-wise as in \eqref{component-wise_f}. We also define $o(RNN^{I,N}(\sigma))$ to be the set of all possible output $y_t$ for the RNN of the class $RNN^{I,N}(\sigma)$. 
\end{defn}
\noindent It is proved in  \cite{schafer2006recurrent} that $RNN^{I,N}(\sigma)$ is ``dense'' in the ``space of discrete open dynamical systems'', in the sense that for any sigmoid $\sigma$ and $\delta>0$, there exists $\tilde y_t\in o(RNN^{I,N}(\sigma))$ such that $\|\tilde y_t-y_t\|_{\infty}<\delta$, where $y_t$ is the output of a $M$-dimensional open system:
\begin{equation}
\begin{aligned}
    s_{t+1}&=g(s_t,x_t),\\
    y_t&=h(s_t),
\end{aligned}
\end{equation}
where $g(\cdot):\R^M\rightarrow\R^M$ and $h(\cdot):\R^M\rightarrow\R^N$. 

For RNN with stochastic input, the proof is similar. But the corresponding universal approximation theorem, \ie Theorem \ref{UAT_stochastic_input}, holds only in the sense of probability essentially because whether one can find an RNN model such that $\|\hat y_t-y_t\|_{\infty}<\delta$ for all $\delta>0$ becomes a random event. Using the above definitions and Theorem \ref{UAT_FNN}, we can prove the following theorem:
\begin{theorem}\label{UAT_rigorous}
(UAT for RNN with Gaussian inputs). Let $g(\cdot):\R^M\times\R^I\rightarrow \R^M$ be locally Lipschitz and $h(\cdot):\R^M\rightarrow\R^N$ be continuous. In addition, the external input $x_t\in\R^{I}$ are i.i.d.\ Gaussian random variables, the inner state $s_t\in \R^M$, and the outputs $y_t\in \R^N$ $(t=1,\cdots, T)$. Then, for a finite number of iteration steps, the probability of finding an RNN model \eqref{one_layer_RNN} such that the outputs of the RNN pathwisely approximate the solution of the discrete, stochastic dynamical system of the form \eqref{differential_sde} arbitrarily accurate, is asymptotically 1.
\end{theorem}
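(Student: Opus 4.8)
The plan is to deduce the stochastic statement from the deterministic UAT (Theorem~\ref{UAT_deterministic_input}, itself built on the feedforward UAT, Theorem~\ref{UAT_FNN}) via the following observation: one cannot expect a single RNN to track \emph{every} Gaussian realization, since a finite i.i.d.\ Gaussian stream can take arbitrarily large values; instead, for each $R>0$ I would construct \emph{one} RNN of the class $RNN^{I,N}(\sigma)$ that approximates the dynamical system \eqref{differential_sde} uniformly over all input sequences lying in the compact box $K_R=[-R,R]^I$. Such an RNN then succeeds on the event $E_R=\{x_t\in K_R \text{ for all } t=1,\dots,T\}$, and since the approximation fails only when some input escapes $K_R$, the success probability is at least $P(E_R)$, which tends to $1$ as $R\to\infty$. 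This is exactly the ``asymptotically $1$'' conclusion.

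Concretely, fix the horizon $T$, the dimensions $I,M,N$, a tolerance $\delta>0$, and the initial state $s_1$. First I would show that on $E_R$ the state trajectory remains in a fixed compact set: since $g$ is locally Lipschitz, hence continuous, and $s_1$ is fixed while $x_1\in K_R$, the image $g(s_1,x_1)$ is bounded, and iterating the continuous map $g$ finitely many times confines $s_1,\dots,s_T$ to a compact set $K_R^s\subset\R^M$ valid for every realization in $E_R$. On the compact product $K_R^s\times K_R$ the local Lipschitz hypothesis yields, by a finite covering argument, a uniform Lipschitz constant $L=L(R)$. Next I would invoke the feedforward UAT to approximate $g$ (and the readout $h$) \emph{uniformly} on $K_R^s\times K_R$ to a per-step tolerance $\eta$ by maps of the form $\sigma(A\,\cdot+B\,\cdot-\theta)$, and assemble these into a single RNN exactly as in Sch\"afer and Zimmermann's construction, enlarging the state space ($H>M$) so that the auxiliary neurons reproduce one application of $g$ per recurrent step. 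Finally I would bound $P(E_R^c)$ by the Gaussian tail: the $TI$ coordinates of $(x_1,\dots,x_T)$ are i.i.d.\ Gaussian, so $P(E_R)\to1$ as $R\to\infty$ (for standard normals, $P(E_R)=\big[\mathrm{erf}(R/\sqrt2)\big]^{TI}$).

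To combine these, for any $\epsilon>0$ choose $R$ with $P(E_R)>1-\epsilon$; on $E_R$ the uniform one-step error $\eta$ propagates through the finitely many iterations via a discrete Gr\"onwall estimate, giving a trajectory error bounded by roughly $\eta\,(L^T-1)/(L-1)$, and uniform continuity of $h$ on $K_R^s$ converts this into an output bound $\sup_{t\le T}\|\hat y_t-y_t\|_\infty<\delta$ once $\eta$ is taken small enough (i.e.\ the network wide enough). Hence the RNN approximates the stochastic dynamics to accuracy $\delta$ with probability at least $1-\epsilon$, and letting $R\to\infty$ drives this probability to $1$.

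The main obstacle, and the reason the hypothesis is strengthened from the merely measurable $g$ of Theorem~\ref{UAT_deterministic_input} to a locally Lipschitz $g$, is that the stochastic setting forces \emph{uniformity over a continuum of inputs}: a single fixed RNN must handle every input sequence in the box $K_R$ at once, rather than the one fixed sequence of the deterministic theorem. This demands (i) that $g$ preserve compactness, so that the states stay in $K_R^s$---false in general for a merely measurable map---and (ii) a uniform Lipschitz bound controlling how the per-step error $\eta$ amplifies over the horizon. The discrete Gr\"onwall factor $(L^T-1)/(L-1)$ is finite precisely because $T$ is finite, which is where the ``finite number of iteration steps'' hypothesis is essential; the interplay to verify is that widening the network to shrink $\eta$ affects neither $K_R^s$ nor $P(E_R)$, since both depend only on $g$, $R$, and $T$, not on the approximating RNN.
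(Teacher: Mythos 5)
Your proposal is correct and follows essentially the same route as the paper's proof: truncate to the compact event that all Gaussian inputs lie in a box, run the deterministic Sch\"afer--Zimmermann machinery (feedforward UAT, iterated compactness of the state trajectory, uniform Lipschitz constant on the compact set, geometric-series error propagation over the finite horizon, state-space enlargement for the readout $h$), and conclude by letting the box grow so the event probability tends to $1$. The only cosmetic differences are that you bound the tail via the exact $\mathrm{erf}$ expression where the paper uses Chebyshev's inequality, and you phrase the final step as a lower bound by $P(E_R)$ rather than the paper's conditional-probability factorization.
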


\paragraph{Proof of Theorem \ref{UAT_rigorous}.}
We first show that the dynamics of an $M$-dimensional open dynamical system with $s_{t+1}=g(s_t,x_t)$ can be represented by an RNN with an update function of the form $\bar s_{t+1}=\sigma(A\bar s_t+B x_t-\theta)$ for all $t=1,\cdots,T$ asymptotically almost surely. For any realization of $x_t$, Theorem \ref{UAT_FNN} implies that for any compact set $K\subset \R^M\times \R^I$ which contains $(s_t,x_t)$, one can find suitable $\bar g(s_t,x_t)\in \sum^{I+M,M}(\sigma)$ with weight matrices $V\in \R^{M\times J},W\in \R^{J\times M}$, $B\in\R^{J\times I}$, and a bias $\theta\in \R^{J}$ such that for all $t=1,\cdots, T$, 
\begin{align}\label{est_1}
    \sup_{x_t,s_t\in K}\|g(s_t,x_t)-\bar g(s_t,x_t)\|_{\infty}\leq \delta, \qquad \text{where}\qquad \bar g(s_t,x_t)=V\sigma(Ws_t+Bx_t- \theta).
\end{align}
Here $\delta>0$ is an arbitrary constant and $\sigma$ is an arbitrary component-wise applied sigmoid activation function. 

We denote approximated dynamics generated by the feedback neural network by 
\begin{align*}
    \bar s_{t+1}=\bar g(\bar s_t,x_t)=V\sigma(W\bar s_t+Bx_t-\theta).
\end{align*}
Further assuming that $\bar s_t\in K$, for any $\delta>0$, we can find suitable $W,B,V,\theta$ such that
\begin{align*}
\|s_t-\bar s_t\|_{\infty}
&=\|g(s_{t-1},x_{t-1})-g(\bar s_{t-1},x_{t-1})+g(\bar s_{t-1},x_{t-1})-\bar s_t\|_{\infty}\\
&\leq \|g(s_{t-1},x_{t-1})-g(\bar s_{t-1},x_{t-1})\|_{\infty}
+\|g(\bar s_{t-1},x_{t-1})-\bar s_t\|_{\infty}\\
&\leq \|g(s_{t-1},x_{t-1})-g(\bar s_{t-1},x_{t-1})\|_{\infty}+\delta.
\end{align*}
Since $g$ is continuously differentiable, in the compact set $K$, it is also Lipschitz continuous. This implies for any $\epsilon>0$, there is $\delta>0$ and thus there are suitable $W,B,V,\theta$, such that 
\begin{align}\label{est_2}
\|s_t-\bar s_t\|_{\infty}
\leq C\|s_{t-1}-\bar s_{t-1}\|_{\infty}+\delta
\leq \delta (1+C+\cdots C^{T-1})=\delta\frac{1-C^T}{1-C}\leq \epsilon,
\end{align}
where we have used $s_0=\bar s_0$.
Estimate \eqref{est_2} indicates that for deterministic inputs $x_t$, the open dynamical system update function $g(s_t,x_t)$ can be universally approximated by the feedward neural network update function $\hat g(\hat s_t,x_t)$ since the output of each step, \ie $s_t$, can be approximated arbitrarily accurate.

For an RNN with Gaussian random input $x_t$, however, since $x_t$ is not compactly supported, whether $x_t,s_t\in K$ becomes a random event. For any {\em fixed} $K\subset\R^{M}\times \R^I$ and sigmoid function $\sigma$, one can associate it with the function class $\sum^{I+M,M}(\sigma)$. The probability that one can find a suitable approximation to function $g(s_t,x_t)$ within $\sum^{I+M,M}(\sigma)$ for any initial $s_0\in \R^{M}$ and $x_t\in \R^I$ can be written as 
\begin{align}\label{prob_find_g}
\mathrm{Pr}\left[\inf_{\hat g\in\sum^{I+M,M}(\sigma)}
\sup_{\substack{x_t\in \R^I,s_0\in \R^M\\ t=1,\cdots, T}}
\|g(s_t,x_t)-\bar g(\bar s_t,x_t) \|_{\infty}\leq \epsilon, \forall\epsilon>0\right],
\end{align}
where for stochastic $x_t$, the norm $\|g(s_t,x_t)-\bar g(\bar s_t,x_t) \|_{\infty}$ is interpreted in the pathwise sense \cite{kloeden2007pathwise}, \ie it is valid for each realization of $x_t$. Also note that that the above probability depends on $K$, in particular, the size
\footnote{$|K|$ can be defined as, e.g. $|K|:=\sup_{x\in K}{\|x\|_2}$, where $\|x\|_2$ is the $l_2$-norm of the vector.}
of it, which is denoted as $|K|$. Taking the limit $|K|\rightarrow+\infty$, we obtain
\begin{equation}\label{prob_rela}
\begin{aligned}
&\lim_{|K|\rightarrow+\infty}\mathrm{Pr}\left[\inf_{\bar g\in\sum^{I+M,M}(\sigma)}
\sup_{\substack{x_t\in \R^I,s_0\in \R^M\\ t=1,\cdots, T}}
\|g(s_t,x_t)-\bar g(\bar s_t,x_t) \|_{\infty}\leq \epsilon, \forall\epsilon>0\right]\\
=&
\lim_{|K|\rightarrow+\infty}\mathrm{Pr}\left[\inf_{\bar g\in\sum^{I+M,M}(\sigma)}
\sup_{\substack{x_t,s_t,\bar s_t\in K\\ t=1,\cdots, T}}
\|g(s_t,x_t)-\bar g(\bar s_t,x_t)\|_{\infty}\leq \epsilon,\forall\epsilon>0\bigg| x_t,s_t,\bar s_t\in K\right]\mathrm{Pr}[x_t,s_t,\bar s_t\in K]\\
=&\lim_{|K|\rightarrow+\infty}\mathrm{Pr}[x_t,s_t,\bar s_t\in K].
\end{aligned}
\end{equation}
Here we used \eqref{est_1} and \eqref{est_2} to show that under the condition $x_t,s_t,\bar s_t\in K$, the event $\inf_{\bar g\in\sum^{I+M,M}}\sup_{x_t,s_t,\bar x_t\in K}\|g(s_t,x_t)-\bar g(\bar s_t,x_t)\|_{\infty}<\delta,\forall \delta>0,t=1,\cdots, T$ happens with probability 1, \ie it holds for almost all $x_t$. 

Now the problem of calculating the probability \eqref{prob_find_g} in the limit $|K|\rightarrow+\infty$ boils down to the calculation of $\lim_{|K|\rightarrow+\infty}\mathrm{Pr}[x_t,s_t,\bar s_t\in K]$. To this end, we note that if we can choose a compact set $K_0\subset \R^M\times \R^I $ such that $x_t,s_0\in K_0$ for all $t=1,\cdots,T$, then $\|s_1\|=\|g(s_0,x_0)\|\leq C_0$ for some $C_0>0$ because $g$ is a continuous function therefore bounded in $K_0$. We choose another compact set $K_0\cup B(0,C_0)\subset K_1\subset \R^J\times \R^J$, where $B(0,C_0)$ is a ball centered at 0 with radius $C_0$. Then we must have $\|s_2\|=\|g(s_1,x_1)\|\leq C_1$ since $x_1,s_1\in K_1$. Continuing this procedure for $T$ times, we can find a compact set $K_T\subset \R^M\times \R^I$ such that $s_t,x_t\in K_T$. Since the same logic applies to $\bar s_t$, we can find a compact $K\subset \R^M\times \R^I$ such that $s_t,x_t,\bar s_t\in K_T$. On the other hand, Chebyshev’s inequality for a standard normal random variable $X$ implies 
\begin{align}
    \mathrm{Pr}[|X|\geq b]\leq \frac{1}{b^2} \qquad \Rightarrow\qquad \mathrm{Pr}[X<b]\geq 1-\frac{1}{b^2}.
\end{align}
Then for i.i.d.\ random variables $X_1,\cdots X_T$, we have 
\begin{align*}
    \mathrm{Pr}[|X_1|<b,\cdots ,|X_T|<b]=\prod_{i=1}^T \mathrm{Pr}[|X_i|<b]\geq \left[1-\frac{1}{b^2}\right]^T.
\end{align*}
Therefore, for fixed $T$, we have 
\begin{align}\label{prob_containing_region}
   \lim_{b\rightarrow+\infty} \mathrm{Pr}[|X_1|<b,\cdots, |X_T|<b]=1.
\end{align}
This means that asymptotically almost surely one can find a compact subset $B$ such that $x_t\in B$ for all $t=1,\cdots T$. As a direct result of this, we have 
\begin{align}\label{limit_K}
    \lim_{|K|\rightarrow+\infty}\mathrm{Pr}[x_t,s_t,\bar s_t\in K]=1.
\end{align}

Combining \eqref{prob_rela} and \eqref{limit_K}, we show that for an RNN with i.i.d.\ Gaussian input $x_t$, asymptotically almost surely, one can find suitable $W,B,V,\theta$ such that $\|s_t-\bar s_t\|_{\infty}<\epsilon$ for any sigmoid $\sigma$ and any $\epsilon>0$. Furthermore, let 
\begin{align*}
    s_{t+1}'=\sigma(W\bar s_t+Bx_t-\theta),
\end{align*}
which yields $\bar s_t=Vs'_t$. By defining $A:=WV \in \R^{J\times J}$, we obtain
\begin{align}\label{s_t+1}
    s_{t+1}'=\sigma(A s_t'+Bx_t-\theta).
\end{align}
The dynamics of the RNN update function \eqref{s_t+1} {\em encodes} (not equals) the dynamics of the open dynamical systems. Hence we claim that the dynamics of an $M$-dimensional open dynamical system with $s_{t+1}=g(s_t,x_t)$ can be represented by an RNN with an update function of the form $\bar s_{t+1}=\sigma(A\bar s_t+B x_t-\theta)$ for all $t=1,\cdots,T$ asymptotically almost surely. We note  that the transformation $\bar s_t=Vs'_t$ often involves an enlargement of the hidden state dimensionality since $A\in \R^{J\times J}$, where $J$ is set to be large enough to guarantee the validity of the universal approximation. 

The second part of the proof is to show that the output of the dynamical system, \ie $y_t=h(s_t)$ can be approximated by the output of an RNN $\tilde y_t=\tilde C \tilde s_t$ asymptotically almost surely where $\tilde s_t$ is an {\em extended} vector satisfying the RNN update rule: $\tilde s_{t+1}=\sigma(\tilde A\tilde s_t+\tilde Bx_t-\tilde \theta)$. For an RNN with deterministic input, the proof is done in \cite{schafer2006recurrent}. Hence we simply state the result obtained therein.

\paragraph{Claim.} For $x_t,s_t,\bar s_t\in K\subset \R^{M}\times \R^I$, there exist enlarged matrices $\tilde{A},\tilde{B},\tilde C$ and $\tilde \theta$ for an RNN model:
\begin{align}\label{tilde_s_t}
    \tilde s_{t+1}&=\sigma(\tilde A\tilde s_t+\tilde Bx_t-\tilde \theta)\nonumber,\\
    \tilde y_t&=\tilde C\tilde s_t,
\end{align}
such that the output vector $\|\tilde y_t-y_t\|\leq \epsilon$ for all $\epsilon>0$. In \eqref{tilde_s_t}, we have 
\begin{align*}
&
\tilde J=J+\bar J
,\qquad
r_t=\sigma(Es_t'+Fx_t-\bar\theta)\in \R^{\bar J}
,\quad E\in \R^{\bar J\times J},
F\in \R^{\bar J\times I},
\bar \theta\in \R^{\bar J}
,\qquad
\tilde s_t=
\begin{bmatrix}
s_t'\\
r_t
\end{bmatrix}
\in \R^{\tilde J},
\\
&\tilde A=
\begin{bmatrix}
A& 0\\
E&0
\end{bmatrix}
\in \R^{\tilde J\times\tilde J}
,\qquad
\tilde B=
\begin{bmatrix}
B\\
F
\end{bmatrix}
\in \R^{\tilde J\times J}
,\qquad
\tilde C=[0\quad D]\in\R^{N\times \tilde J}
,\quad 
\text{and}\quad
\tilde \theta=
\begin{bmatrix}
\theta\\
\bar\theta
\end{bmatrix}
\in \R^{\tilde J}.
\end{align*}

\begin{proof}
\begin{align*}
\|y_t-\tilde y_t\|
&=\|y_t- D\sigma(Es_{t-1}'+Fx_{t-1}-\bar \theta)\|\\
&\leq 
\|y_t- Q\sigma(GV\sigma(As_{t-1}'+Bx_{t-1}-\theta)-\hat \theta)\|
+\|Q\sigma(GV\sigma(As_{t-1}'+Bx_{t-1}-\theta)-\hat \theta)
-D\sigma(Es_{t-1}'+Fx_{t-1}-\bar \theta)\|.
\end{align*}
Here $Q\sigma(GV\sigma(As_{t-1}'+Bx_{t-1}-\theta)-\hat \theta)$ is a bounded function defined in the compact domain $K$. Hence, the universal approximation theorem implies that for any $\epsilon_1>0$, we can find suitable
$D,E,F,\bar\theta$ such that 
$\|Q\sigma(GV\sigma(As_{t-1}'+Bx_{t-1}-\theta)-\hat \theta)
-D\sigma(Es_{t-1}'+Fx_{t-1}-\bar \theta)\|\leq\epsilon_1$. Then we obtain 
\begin{align*}
\|y_t-\tilde y_t\|
&\leq 
\|y_t- Q\sigma(GV\sigma(As_{t-1}'+Bx_{t-1}-\bar\theta)-\hat \theta)\|+\epsilon_1\\
&=\|y_t-Q\sigma(GVs_t'-\hat\theta)\|+\epsilon_1\\
&=\|y_t-Q\sigma(G\bar s_t-\hat\theta)\|+\epsilon_1\\
&\leq 
\|y_t-h(\bar s_t)\|+\|h(\bar s_t)-Q\sigma(G\bar s_t-\hat\theta)\|+\epsilon_1.
\end{align*}
Again applying the universal approximation theorem, we have $\|h(\bar s_t)-Q\sigma(G\bar s_t-\hat\theta)\|\leq \epsilon_2,\forall \epsilon_2>0$. On the other hand, $h(x)$ is continuously differentiable and thus Lipschitz in the compact set $K$ and we have $\|y_t-h(\bar s_t)\|=\|h(s_t)-h(\bar s_t)\|\leq C\|s_t-\bar s_t\|\leq C\epsilon_0$. This leads to  
\begin{align}\label{estimate_yt}
\|y_t-\tilde y_t\|\leq C\epsilon+\epsilon_1+\epsilon_2\leq\delta, \forall \delta>0.
\end{align}
\end{proof}

This claim indicates that under the condition $x_t,s_t,\bar s_t\in K\subset \R^{M}\times \R^I$, the probability of finding suitable $\tilde A,\tilde B,\tilde \theta, \tilde C$ such that $\|\tilde y_t-y_t\|\leq \delta,\forall\delta>0$ is 1. Hence using again relations \eqref{prob_find_g}, \eqref{prob_rela}, and \eqref{limit_K}, we have 
\begin{align}\label{prob_rela_3}
&\lim_{|K|\rightarrow+\infty}\mathrm{Pr}\left[\inf_{\tilde y_t\in o\left(RNN^{I,N}(\sigma)\right)}\sup_{x_t\in \R^I,s_0\in \R^M}\|y_t-\tilde y_t\|_{\infty}\leq \delta, \forall\delta>0\right],\qquad t=1,\cdots,T\nonumber\\
=&
\lim_{|K|\rightarrow+\infty}
\mathrm{Pr}\left[\inf_{\tilde y_t\in o\left(RNN^{I,N}(\sigma)\right)}\sup_{x_t,s_t,\bar s_t\in K}\|y_t-\bar y_t\|_{\infty}\leq \delta, \forall\delta>0\bigg|x_t,s_t,\bar s_t\in K\right]\mathrm{Pr}[x_t,s_t,\bar s_t\in K]\nonumber\\
=&\lim_{|K|\rightarrow+\infty}\mathrm{Pr}[x_t,s_t,\bar s_t\in K]=1.
\end{align}
This implies that any open dynamical system \eqref{differential_sde} with Gaussian inputs $x_t$, the existence of finding a {\em deterministic} RNN model of the form \eqref{one_layer_RNN} with the {\em same stochastic} input $x_t$ that pathwisely approximates the solution of \eqref{differential_sde} arbitrarily accurate tends to 1 as $|K|\rightarrow +\infty$. This concludes the proof of Theorem \ref{UAT_rigorous}.

\begin{theorem}\label{UAT_rigorous_SDE}
Suppose $b(x):\R^d\rightarrow \R^d$ and $\sigma(x):\R^m\rightarrow \R^d$ in SDE \eqref{Ito_SDE} are locally Lipschitz and $b(x)$ satisfies conditions (i)--(iii) listed in \cite{gyongy1998note}. Consider a uniform time grid $0=t_0<t_1\cdots<t_{N_T}=T$ with step size $\Delta t=t_{i+1}-t_i$. Then in this time grid, the probability of finding an RNN model \eqref{one_layer_RNN} such that the outputs of the RNN pathwisely approximate the exact solution of the SDE arbitrarily accurate, is asymptotically 1.
\end{theorem}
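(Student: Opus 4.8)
The plan is to prove Theorem~\ref{UAT_rigorous_SDE} by a two-stage approximation that chains the pathwise convergence of the Euler--Maruyama scheme to the RNN universal approximation already established in Theorem~\ref{UAT_rigorous}. Concretely, for a prescribed tolerance $\epsilon>0$ I would first approximate the exact solution values $X(t_i)$ on the grid by an Euler--Maruyama trajectory to within $\epsilon/2$, and then approximate that Euler--Maruyama trajectory by an RNN output to within $\epsilon/2$. The triangle inequality then produces an RNN whose output tracks the exact solution to within $\epsilon$, and the probability bookkeeping is inherited from the two stages. This makes precise the ``immediate application'' claimed for Proposition~\ref{UAT_Ito_diffusion}: once the scheme is recognized as an instance of \eqref{differential_sde}, the RNN step is not new, and the genuine work lies in the discretization estimate and in reconciling the two probabilistic statements.

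For the first stage I would refine the given grid by subdividing each interval $[t_i,t_{i+1}]$ into sub-steps of size $h\le\Delta t$ and run the Euler--Maruyama scheme \eqref{EM_scheme} with step $h$. Under the hypotheses that $b$ and $\sigma$ are locally Lipschitz and that $b$ satisfies conditions (i)--(iii) of \cite{gyongy1998note}, the scheme converges to the exact solution in probability, uniformly on $[0,T]$, and pathwisely along a suitable sequence of step sizes \cite{gyongy1998note,kloeden2007pathwise}. Hence for any $\eta>0$ there is an $h$ small enough that the event $\max_i\|\hat X_{h}(t_i)-X(t_i)\|_{\infty}<\epsilon/2$ has probability at least $1-\eta$. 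The crucial structural observation is that, for this fixed $h$, the update $\hat X_{k+1}=\hat X_k+h\,b(\hat X_k)+\sigma(\hat X_k)\sqrt{h}\,\xi_k=:g(\hat X_k,\xi_k)$ is exactly a discrete stochastic dynamical system of the form \eqref{differential_sde} driven by i.i.d.\ standard normal inputs $\xi_k$, and that $g$ is locally Lipschitz because $b$ and $\sigma$ are. The number of recursion steps $T=T/h$ is finite for each fixed $h$, so the finite-step hypothesis of Theorem~\ref{UAT_rigorous} is met.

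For the second stage I would apply Theorem~\ref{UAT_rigorous} to this Euler--Maruyama system: feeding the same Gaussian driving noise as the RNN input $x_t=\xi_t$, asymptotically almost surely (as $|K|\to+\infty$) there exists an RNN of type \eqref{one_layer_RNN} whose output $\tilde y_t$ satisfies $\|\tilde y_t-\hat X_h(t)\|_{\infty}<\epsilon/2$ at every grid point. Passing from the state to an arbitrary continuous observable $y_t=h(X_t)$ is handled by the second, ``read-out'' part of the proof of Theorem~\ref{UAT_rigorous}, namely the extended matrices $\tilde A,\tilde B,\tilde C,\tilde\theta$ of the Claim, which use only continuity of $h$. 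On the intersection of the two high-probability events the composite RNN approximates $X(t_i)$ within $\epsilon$, and letting $|K|\to+\infty$ and then $\eta\to0$ drives the probability of this intersection to $1$ in the stated asymptotic sense.

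The main obstacle I anticipate is reconciling the two distinct probabilistic modes and the order of limits. The Euler--Maruyama error is controlled by $h\to0$, which \emph{increases} the step count $T$, whereas the RNN existence probability is controlled by $|K|\to+\infty$ for a \emph{fixed} step count; both the Lipschitz-propagation constant $(1-C^{T})/(1-C)$ in estimate \eqref{est_2} and the Gaussian tail bound $[1-1/b^2]^{T}$ in \eqref{prob_containing_region} degrade as $T$ grows, so the quantifiers must be nested carefully by fixing $h$ (hence $T$) first and only then taking $|K|\to+\infty$. I would make the joint statement rigorous through a union-bound argument: because both events live on the same probability space, the Wiener increments simultaneously driving the Euler--Maruyama scheme and serving as the RNN input, the probability of the good intersection is at least $1-\eta-o_{|K|}(1)$, which tends to $1$. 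This completes the reduction and mirrors the structure of the proof of Theorem~\ref{UAT_rigorous}.
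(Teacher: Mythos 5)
Your proposal is correct and follows essentially the same route as the paper's proof: view the Euler--Maruyama recursion as an instance of the discrete stochastic system \eqref{differential_sde} with a locally Lipschitz update map and i.i.d.\ Gaussian inputs, invoke Gy\"ongy's pathwise convergence result \cite{gyongy1998note} for the discretization stage and Theorem~\ref{UAT_rigorous} for the RNN stage, and chain the two errors by the triangle inequality. The differences are in the probability bookkeeping. The paper uses the almost-sure rate $C_T(\omega)\Delta^{\frac{1}{2}-\epsilon}$ directly, so its discretization stage costs no probability, and it factors the RNN-stage probability through conditioning on the compact-containment event; you instead run the discretization stage in probability (with tolerance $\eta$) and finish with a union bound over the two bad events. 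Both are valid. One point genuinely in your favor: your nesting of the limits --- freeze the step size $h$ (hence the finite step count), send $|K|\to+\infty$, and only then let $h\to 0$ (equivalently $\eta\to 0$) --- is the defensible order. The paper's final display \eqref{prob_rela_4} writes the iterated limit the other way, with $\Delta\to 0$ taken as the inner limit at fixed $K$; but at fixed $K$ the probability that all $T/\Delta$ i.i.d.\ Gaussian inputs remain in $K$ decays geometrically in the step count and hence tends to $0$ as $\Delta\to 0$, which is exactly the degradation of the bound $\left[1-1/b^2\right]^{T}$ that you flagged. So your explicit insistence on fixing the step size before taking $|K|\to+\infty$ is not merely careful; it is the ordering that the finite-step hypothesis of Theorem~\ref{UAT_rigorous} actually requires.
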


\begin{proof}
The proof to this theorem is obtained by combining Theorem \ref{UAT_rigorous} and the established pathwise convergence result for the Euler--Maruyama (EM) scheme to the exact solution to SDE \eqref{Ito_SDE}. It was proved by Gy\"ongy \cite{kloeden2007pathwise,gyongy1998note} that under the local Lipschitz condition on $b(x),\sigma(x)$ and (i)--(iii) listed in \cite{gyongy1998note}, the EM scheme with uniform step size $\Delta t$ satisfies pathwise error estimate:
\begin{align}\label{pathwise_c}
    \sup_{i=0,\cdots,N_T}\|X(t_i,\omega)-\hat X_{EM}^{\Delta}(t_i,\omega)\|_2\leq C_{T}(\omega)\Delta^{\frac{1}{2}-\epsilon},\quad \forall \epsilon>0, \quad \text{for almost all $\omega\in\Omega$},
\end{align}
where $\|\cdot\|_2$ is the Euclidean norm, $\omega\in \Omega$ defines a sample realization of the Wiener process $W(t)$, and $\hat X_{EM}^{\Delta}(t_i,\omega)$ is the pathwise approximated solution generated using EM scheme with the same Wiener process realization. This result also can be restated as: for any $\omega\in \Omega$, the above inequality holds in probability 1. 

Now, suppose the EM scheme is the open dynamical system \eqref{differential_sde} as we defined in \eqref{EM_scheme} and we specify the output of the open dynamical system as $y_t=h(s_t)=h(X(t))=X(t)$. Since $b(x)$ and $\sigma(x)$ in SDE \eqref{Ito_SDE} are locally Lipschitz, naturally $g(\cdot)$ in \eqref{EM_scheme} is also locally Lipschitz
\footnote{Here we mean the function $g(x,y,z)=x-zb(x)+\sigma\sqrt{z}y$ is locally Lipschitz for fixed $z$.}
for any fixed $\Delta t>0$. Applying the result in Theorem \ref{UAT_rigorous}, in particular, estimate \eqref{estimate_yt}, we know there exists a stochastic RNN such that
\begin{align}\label{pathwise_c1}
   \sup_{i=0,\cdots,N_T} \|\hat X_{EM}^{\Delta}(t_i,\omega)-\hat X_{RNN}^{\Delta}(t_i,\omega)\|_{\infty}<\delta, \forall \delta>0, \quad\text{for any $\omega\in \Omega$, $\{\xi(t_i)\}_{i=0}^{N_T},\{\hat X_{EM}^{\Delta}(t_i,\omega)\}_{i=0}^{N_T},\{\hat X_{RNN}^{\Delta}(t_i,\omega)\}_{i=0}^{N_T}\in K$,}
\end{align}
where $\hat X_{RNN}^{\Delta}(t_i,\omega)$ is the path generated by stochastic RNN with any fixed $\omega\in \Omega$. Combining \eqref{pathwise_c} and \eqref{pathwise_c1} and then using the triangle inequality and  $\|f\|_{2}\leq \sqrt{d}\|f\|_{\infty}$, we obtain
\begin{equation}\label{pathwise_c2}
\begin{aligned}
    \sup_{i=0,\cdots,N_T}\|X(t_i,\omega)-\hat X_{RNN}^{\Delta}(t_i,\omega)\|_2&\leq
   \sup_{i=0,\cdots,N_T}\|X(t_i,\omega)-\hat X_{EM}^{\Delta}(t_i,\omega)\|_{2}+
    \sqrt{d}\sup_{i=0,\cdots,N_T} \|\hat X_{EM}^{\Delta}(t_i,\omega)-\hat X_{RNN}^{\Delta}(t_i,\omega)\|_{\infty}\\
    &
   \leq C_{T}(\omega)\Delta^{\frac{1}{2}-\epsilon}+\sqrt{d}\delta,
\end{aligned}
\end{equation}
which is valid for all $\epsilon,\delta>0$, almost all $\omega\in\Omega$ and $\{\xi(t_i)\}_{i=0}^{N_T},\{\hat X_{EM}^{\Delta}(t_i,\omega)\}_{i=0}^{N_T},\{\hat X_{RNN}^{\Delta}(t_i,\omega)\}_{i=0}^{N_T}\in K\subset \R^{d}\times\R^{m}$. 
As a result, by using estimate \eqref{prob_rela_3}, we obtain
\begin{equation}\label{prob_rela_4}
\begin{aligned}
&\lim_{|K|\rightarrow+\infty}
\lim_{\Delta\rightarrow 0}
\mathrm{Pr}\left[
\inf_{\hat X^{\Delta}(t,\omega)\in o\left(RNN^{m,d}(\sigma)\right)}
\sup_{\substack{\{\xi(t_i)\}_{i=0}^{N_T}\in\R^{m}, X(0,\omega)\in \R^{d}\\ t=1,\cdots, T}}
\|X(t_i,\omega)-\hat X_{RNN}^{\Delta}(t_i,\omega)\|_{\infty}\leq \delta, \forall\delta>0
\right]\\
=&
\lim_{|K|\rightarrow+\infty}
\lim_{\Delta\rightarrow 0}
\mathrm{Pr}\Bigg[
\inf_{\hat X^{\Delta}(t,\omega)\in o\left(RNN^{m,d}(\sigma)\right)}
\sup_{\substack{\{\xi(t_i)\}_{i=0}^{N_T}\in\R^{m},\{\hat X_{EM}^{\Delta}(t_i,\omega)\}_{i=0}^{N_T},\{\hat X_{RNN}^{\Delta}(t_i,\omega)\}_{i=0}^{N_T}\in K\\ t=1,\cdots, T}}
\|X(t_i,\omega)-\hat X_{RNN}^{\Delta}(t_i,\omega)\|_{\infty}
\leq \delta, \forall\delta>0\\
&\bigg|\{\xi(t_i)\}_{i=0}^{N_T},\{\hat X_{EM}^{\Delta}(t_i,\omega)\}_{i=0}^{N_T},\{\hat X_{RNN}^{\Delta}(t_i,\omega)\}_{i=0}^{N_T}\in K]\Bigg]
\times
\mathrm{Pr}[\{\xi(t_i)\}_{i=0}^{N_T},\{\hat X_{EM}^{\Delta}(t_i,\omega)\}_{i=0}^{N_T},\{\hat X_{RNN}^{\Delta}(t_i,\omega)\}_{i=0}^{N_T}\in K]\\
=&\lim_{|K|\rightarrow+\infty}
\lim_{\Delta\rightarrow 0}
\mathrm{Pr}[\{\xi(t_i)\}_{i=0}^{N_T},\{\hat X_{EM}^{\Delta}(t_i,\omega)\}_{i=0}^{N_T},\{\hat X_{RNN}^{\Delta}(t_i,\omega)\}_{i=0}^{N_T}\in K]=1.
\end{aligned}
\end{equation}
This probability can be interpreted as follows. For any $\omega\in \Omega$, taking the limit $\Delta\rightarrow 0$ and then $|K|\rightarrow+\infty$ (note that the order is not exchangeable), the probability of finding a suitable stochastic RNN from $RNN^{m,d}(\sigma)$, which takes sample discrete white noise $\{\xi(t_i)\}_{i=0}^{N_T}$ and generates outputs $\{\hat X_{RNN}(t_i)\}_{i=0}^{N_T}$ that accurately approximate (error$<\delta$ holds for any $\delta>0$) the exact solution $\{X(t_i)\}_{i=0}^{N_T}$ in all time grid $0<t_1\cdots < t_{N_T}$, tends to 1. This concludes the proof of Theorem \ref{UAT_rigorous_SDE}.
\end{proof}

As a final note, we point out that our UAT results are established based on pathwise convergence, while what SINN seeks is weak convergence, \ie convergence in terms of statistical moments. While the UAT convergence results do not imply weak convergence, they do provide a plausible hint that weak convergence can be numerically achieved. To obtain a UAT for weak or strong convergence, a prerequisite that needs further examination is the ergodicity condition of the stochastic RNN model\cite{mattingly2002ergodicity}. This is beyond the scope of the present paper and awaits future investigation.
\end{document}